\def\spacingset#1{\renewcommand{\baselinestretch}%
{#1}\small\normalsize} \spacingset{1}
\newcommand{\blind}{1}
\numberwithin{equation}{section}
\newtheorem{theorem}{Theorem}[section]
\newtheorem{lemma}{Lemma}[section]
\newtheorem{corollary}{Corollary}[section]
\newtheorem{proposition}{Proposition}[section]
\newtheorem{remark}{Remark}[section]
\definecolor{plum}{rgb}{.4,0,.4}
\definecolor{BrickRed}{rgb}{0.6,0,0}
    \def\ddefloop#1{\ifx\ddefloop#1\else\ddef{#1}\expandafter\ddefloop\fi}
    \def\ddef#1{\expandafter\def\csname c#1\endcsname{\ensuremath{\mathcal{#1}}}}
    \def\ddef#1{\expandafter\def\csname s#1\endcsname{\ensuremath{\mathsf{#1}}}}
    \def\E{\mathop{\mathbf{E}}}
    \def\argmin{\operatornamewithlimits{arg\,min}}
    \def\deq{:=}
    \def\bd#1{\mathbf{#1}}
    \def\bx{\bd{x}}
	\def\by{\bd{y}}
\begin{document}

%%%%%%%%%%%%%%%%%%%%%%%%%%%%%%%%%%%%%%%%%%%%%%%%%%%%%%%%%%%%%%%%%%%%%%%%%%%%%%

\if1\blind
{
  \title{\bf Training Neural Networks as Learning Data-adaptive Kernels: Provable Representation and Approximation Benefits}
  \author{Xialiang Dou \\
   Department of Statistics, University of Chicago
    \and 
    Tengyuan Liang\thanks{
    Liang gratefully acknowledges support from the George C. Tiao Fellowship.}\\
    Booth School of Business, University of Chicago}
	\date{}
  \maketitle
} \fi

\if0\blind
{
  \bigskip
  \bigskip
  \bigskip
  \begin{center}
    {\LARGE\bf Training Neural Networks as Learning Data-adaptive Kernels: Provable Representation and Approximation Benefit}
\end{center}
  \medskip
} \fi

\bigskip
\begin{abstract}
Consider the problem: given the data pair $(\mathbf{x}, \mathbf{y})$ drawn from a population with $f_*(x) = \mathbf{E}[\mathbf{y} | \mathbf{x} = x]$, specify a neural network model and run gradient flow on the weights over time until reaching any stationarity. How does $f_t$, the function computed by the neural network at time $t$, relate to $f_*$, in terms of approximation and representation? What are the provable benefits of the adaptive representation by neural networks compared to the pre-specified fixed basis representation in the classical nonparametric literature? We answer the above questions via a dynamic reproducing kernel Hilbert space (RKHS) approach indexed by the training process of neural networks. Firstly, we show that when reaching any local stationarity, gradient flow learns an adaptive RKHS representation and performs the global least-squares projection onto the adaptive RKHS, simultaneously. Secondly, we prove that as the RKHS is data-adaptive and task-specific, the residual for $f_*$ lies in a subspace that is potentially much smaller than the orthogonal complement of the RKHS. The result formalizes the representation and approximation benefits of neural networks. Lastly, we show that the neural network function computed by gradient flow converges to the kernel ridgeless regression with an adaptive kernel, in the limit of vanishing regularization. The adaptive kernel viewpoint provides new angles of studying the approximation, representation, generalization, and optimization advantages of neural networks.
\end{abstract}

\noindent%
{\it Keywords:}  adaptive estimation, neural networks, reproducing kernel Hilbert space, gradient flow dynamics, representation learning, algorithmic approximation, interpolation.
\vfill

\newpage
\spacingset{1.5} % DON'T change the spacing!

\section{Introduction}
\label{sec:intro}

Consider i.i.d. data pairs drawn from a joint distribution $(\bx, \by) \sim P = P_{x} \times P_{y|x}$ on the space $\mathcal{X} \times \mathcal{Y}$. 
At the intersection of statistical learning theory \citep{vapnik1998statistical} and approximation theory \citep{cybenko1989approximation}, the following \textit{approximation} problem requires to be first understood, before any further statistical results to be established. 
For a model class $\cF$, one is interested in whether there exists $f \in \cF: \mathcal{X} \rightarrow \mathcal{Y}$ such that the population squared loss is small,
\begin{align}\label{appr_prob}
     L(f) = \,\E_{(\bx,\bd{y})\sim P}\frac{1}{2}\left( \bd{y}-f(\bx) \right)^2 
     = \E_{\bx \sim P_x} \frac{1}{2} \left( f_*(\bx) - f(\bx) \right)^2 + \E_{(\bx,\bd{y})\sim P} \frac{1}{2} \left( \by - f_*(\bx) \right)^2,
\end{align}
with the conditional expectation (or Bayes estimator) defined as $f_*(x) \deq \E[\bd{y}|\bd{x} = x]$. Eqn.~\eqref{appr_prob} generally reads as approximating $f_*$ in the mean squared error sense.

Statistically, researchers approach the above question mainly in two ways. The first is by assuming that the conditional expectation $f_*$ lies in the correct model class $\cF$. For example, say $\cF$ consists of linear models or splines with a particular order of smoothness, or more broadly functions lying in a reproducing kernel Hilbert space (RKHS). Conceptually, this ``well-specification'' assumption requires substantial knowledge about what model class $\cF$ might be suitable for the regression task at hand, which is often unavailable in practice. Within each framework, minimax optimal rates and extensive study have been established in \citep{stone1980optimal,wahba1990spline}.
The second way, which extends the first approach further, considers all $f_*$ under some mild conditions. Building upon certain \textit{universal approximation theorem}, one studies a sequence of model classes $\cF_{\epsilon}$ called sieves with $\epsilon$ changing \citep{geman1982nonparametric}, such that the class $\cF_{\epsilon}$ contains an $\epsilon$-approximation to any $f_*$ under some metric. A final result usually requires a careful balancing of the approximation and stochastic error by tuning $\epsilon$. Particular cases for the latter approach include polynomials (Stone-Weierstrass, Bernstein), radial-basis \citep{park1991universal, niyogi1996relationship}, and two-layer and multi-layer neural networks \citep{cybenko1989approximation,hornik1989multilayer, anthony2009neural, rahimi2008random, daniely2016toward, bach2017breaking, farrell2018deep, koehler2018representational, poggio2017and}.

However, the following significant drawbacks of the above current theory make it inadequate to present an \textit{adaptive} and realistic explanation of the practical success of \textit{neural networks}. 
Firstly, the function computed in practice could be very different from that claimed in the approximation theory, either by the existence or by constructions. To see this, consider the multi-layer neural networks. It is hard to conceive that the function, computed in practice via now-standard stochastic gradient descent (SGD) training procedure, is close to the one asserted by the universal approximation results. Secondly, in practice, researchers usually explore different model classes $\cF$ to learn which representation best suits the data. For example, using different kernels machines, random forests, or specify certain architectures then run SGD on neural networks. In this case, strictly speaking, the choice of the model class depends on the data in an \textit{adaptive} way, without prior knowledge about the basis. There have been substantial advances made to address the above two concerns --- for instance, \cite{jones1992simple} on the first and \cite{huang2008risk, barron2008approximation} on the second --- for $\cF$ being a linear span of a library of candidate functions (union of various set of basis that can be correlated), with greedy selection rules. Nevertheless, the current theory still falls short of describing the approximation and adaptivity for the non-convex and possibly non-smooth gradient descent training on all-layer weights of the neural networks, as done in practice.

We take a step to bridge the above mismatch in the current theory and practice for neural networks and to establish a theoretical framework where the model classes adapt to the data. In particular, we answer the following \textit{algorithmic approximation} question:

\begin{quote}
    Given data pair $(\bx, \by) \sim P$, denote $f_*(x) = \E[\by | \bx = x]$. Specify a neural networks model, and run gradient flow until any stationarity ($t \rightarrow \infty$). Denote the computed function to be $f_t(x)$. How does $f_t(x)$ relate to $f_*(x)$, in terms of approximation and representation?
\end{quote}

Also, we aim to formalize and shed light on the \textit{representation benefits} of neural networks:

\begin{quote}
    What are the provable benefits of the adaptive representation learned by training neural networks compared to the classical nonparametric pre-specified fixed basis representation?
\end{quote}

The intimate connection between two-layer neural networks and reproducing kernel Hilbert spaces (RKHS) has been studied in the literature, see \cite{rahimi2008random, cho2009kernel, daniely2016toward, bach2017breaking, jacot2018neural}. However, to the best of our knowledge, known results are mostly based on a \textit{fixed} RKHS (in our notation $K_0$ in Section~\ref{sec:K0}). In that sense, random features for kernel learning \citep{rahimi2008random, rahimi2009weighted, rudi2017generalization} can be viewed as neural networks with fixed random sampled first layer weights, and tunable second layer weights. From the neural networks side, \cite{rotskoff2018neural, mei2018mean, sirignano2019mean} study the mean-field theory for two-layer neural networks, and \cite{jacot2018neural, du2018gradient,chizat2018note, ghorbani2019linearized} study the linearization of neural networks around the initialization and draw connections to RKHS $\cK_0$ in various over-parametrized settings. In contrast, we will establish a general theory with the \textit{dynamic} and \textit{data-adaptive} RKHS $\cK_t$ obtained via training neural networks, with standard gradient flow on weights of \textit{both} layers. Connections and distinctions to the literature that motivates our study are further discussed with details in Section~\ref{sec:kernel}. As a distinctive feature of the adaptive theory, we emphasize that all $f_* \in L^2(P_x)$ is considered, without pre-specified structural assumptions.

%%%%%%%%%%%%%%%%%%%%%%%%%%%%%%%%%%%%%%%%%%%%%%%%%%%%%%%%%%%%%%%%%%%%%%%%%%
%%%%%%%%%%%%%%%%%%%%%%%%%%%%%%%%%%%%%%%%%%%%%%%%%%%%%%%%%%%%%%%%%%%%%%%%%%
\subsection{Problem Formulation}
\label{sec:formulation}

In this paper, we consider the time-varying function $f_t$ to approximate $f_*$, parametrized by a two-layer rectified linear unit (ReLU) neural network (NN). 
\begin{align}\label{nn}
    f_t(x) := \sum_{j=1}^{m} w_j(t)\sigma(x^T u_j(t)). % + \sum_{i=1}^{m_2}w'_i(t)\sigma(x^T u'_i(t)),
\end{align}
The time index $t$ corresponds to the evolution of parameters driven by the gradient flow/descent (GD) training dynamics. Here each individual pair $(w_j \in \mathbb{R},u_j \in \mathbb{R}^d)$ in the summation is associated with a \textit{neuron}. Consider the gradient flow as the training dynamics for the weights of the neurons: for the loss function $\ell(y, f)= (y -f)^2/2$ and the random variable $\bd{z} := (\bx,\bd{y})$, the parameters $(w_j,u_j)$ evolve with time as follows
\begin{align}
    \frac{dw_j(t)}{dt} = -\mathbf{E}_{\bd{z}}\left[\frac{\partial \ell(\by,f_t)}{\partial f}\sigma(\bx^Tu_j(t))\right],\quad
    \frac{du_j(t)}{dt} = -\mathbf{E}_{\bd{z}}\left[\frac{\partial \ell(\by,f_t)}{\partial f}w_j(t)\mathbbm{1}_{\bx^Tu_j(t)\geq0}\bx\right]. \label{eq:training_2}
\end{align}

Equivalently, we can rewrite the function computed by NN at time $t$ as 
\begin{align}\label{def:convex-nn}
    f_t(x) := \int \sigma(x^Tu) \tau_{t}(du),
\end{align}
where $\tau_{t} = \sum_{j=1}^m w_j(t) \delta_{u_j(t)}$ is a signed combination of delta measures. We will define a careful rescaling of $\tau_{t}$ denoted as $\rho_t$ (Eqn.~\eqref{eq:sign-measure}), then derive the corresponding distribution dynamic for $\rho_t$ driven by the gradient flow later in Section~\ref{sec:pde-char}. The rescaled formulation naturally extends to the infinite neurons case with $m \rightarrow \infty$.

In this paper, by considering various distributions of $\bd{z}$, we study two following problems: approximation and empirical risk minimization (ERM). 

\noindent {\bf Function Approximation:} The data pair $\bd{z} \sim P$ is sampled from the population joint distribution. We are going to answer
how $f_t$ \textit{approximates} $f_*(x) = \E[\by | \bx = x]$ in function spaces, induced by the gradient flow on neuron weights
\begin{align}
    \label{eq:approx}
    \E_{\bd{z} \sim P} (\by - f_t(\bx))^2 = \| f_t - f_* \|_{L^2_\mu}^2 + \E_{\bd{z} \sim P} (\by - f_*(\bx))^2\enspace.~
\end{align}
Here we denote $\mu:= P_x$, and remark that all $f_* \in L^2_\mu$ are considered without additional assumptions. 

\noindent {\bf ERM and Interpolation:} The data pair $\bd{z} \sim \frac{1}{n} \sum_{i=1}^n \delta_{\bx = x_i, \by = y_i}$ follows the empirical distribution. We will study gradient flow for the ERM
\begin{align}
    \label{eq:erm}
    \frac{1}{2n} \sum_{i=1}^n (y_i - f_t(x_i))^2\enspace.
\end{align}
In this case, the target reduces to $\widehat{\E}[\by|\bx = x_i] = y_i$ with $\widehat{\E}$ as the empirical expectation. When the minimizer of Eqn.~\eqref{eq:erm} achieves the zero loss, we call it the \textit{interpolation} problem \citep{zhang2016understanding, belkin2018understand, ma2017power, liang2018just, rakhlin2018consistency, belkin2018reconciling}.
Here we are interested in when and how $f_t(x_i)$ \textit{interpolates} $y_i$, for $1\leq i \leq n$. 

Finally, we remark that in practice, extending the gradient flow results to the (1) positive step size GD, and (2) mini-batch stochastic GD, are standalone interesting research topics. The reasons are that the optimization is non-smooth for the ReLU activation and that the interplay between the batch size and step size is less transparent in non-convex problems.

\section{Preliminaries and Summary}
\label{sec:pre}

%%%%%%%%%%%%%%%%%%%%%%%%%%%%%%%%%%%%%
\subsection{Notations}

We use the boldface lower case $\bd{x}$ to denote a random variable or vector. The normal letter $x$ can either be a scaler or a vector when there is no confusion. The transpose of a matrix $\bd{A}$, resp. vector $u$ is denoted by $\bd{A}^T$, resp. $u^T$. $\bd{A}^+$ denotes the Moore–Penrose inverse. For $n\in\mathbb{N}$, let $[n]:=\{1,\dots,n\}$. We use $\mathbf{A}[i,j]$ to denote the $i,j$-th entry of a matrix. We denote $\mathbbm{1}_{\mathcal{D}}$ as the indicator function of set $\mathcal{D}$. We call symmetric positive semidefinite functions $K(\cdot,\cdot), H(\cdot, \cdot):\, \mathcal{X}\times\mathcal{X} \rightarrow \mathbb{R}$ kernels, and use calligraphy letter $\mathcal{K},\mathcal{H}$ to denote Hilbert spaces. We use $\langle f,g\rangle_{\mu} = \int f(x)g(x)\mu(dx)$ to denote the inner product in $L^2_\mu$ (or $L^2(P_x)$). $\hat{\mu}$ denotes the empirical distribution for $\mu$. Notation $\mathbf{E}_{\bx}$ is the expectation w.r.t random variable $\bx$, and $\E_{\bx, \bd{\tilde{x}}} h(\bx, \bd{\tilde{x}}) = \int \int h(x, \tilde{x}) \mu(dx) \mu(d \tilde{x})$. For a signed measure $\rho = \rho_+ - \rho_-$ with the positive and negative parts, define $|\rho| = \rho_+ + \rho_-$.

\subsection{Preliminaries}
\label{sec:preliminary}

We use the signed measure $\rho_t$, defined by the neuron weights at training time $t$ collectively, to construct a \textit{dynamic RKHS}. The mathematical definition of $\rho_t$ is deferred to Section~\ref{sec:K0} and \ref{sec:pde-char} (specifically, Eqn.~\eqref{eq:sign-measure}). The stationary signed measure at $t\rightarrow \infty$ is denoted as $\rho_\infty$. 
For completeness we walk through the construction of the dynamic kernel and RKHS with $\rho_{t}$. Define the linear operator $\mathcal{T}: L^2_\mu(x) \rightarrow L^2_{|\rho_t|}(\Theta)$, such that for any $f(x) \in L^2_\mu(x)$
\begin{align*}
    (\mathcal{T} f)(\Theta) := \int f(x) \| \Theta \| \sigma(x^T\Theta) \mu(dx), ~\forall \Theta\in {\rm supp}(\rho_t).
\end{align*}
One can define the adjoint operator $\mathcal{T}^\star: L^2_{|\rho_t|}(\Theta) \rightarrow L^2_\mu(x)$, such that for $p(\Theta) \in L^2_{|\rho_t|}(\Theta)$, 
\begin{align*}
    (\mathcal{T}^\star p)(x) := \int p(\Theta) \| \Theta \| \sigma(x^T\Theta)  |\rho_t|(d \Theta).
\end{align*}
    Note that both $\mathcal{T}$ and $\mathcal{T}^\star$ are compact operators under the finite total variation and compact support assumptions. For the finite neurons case \eqref{nn}, the operator is of finite rank. We define the compact integral operator $\mathcal{T}^\star\mathcal{T}$ with the corresponding kernel
    \begin{align}\label{def:rkhs-kernel}
        H_t(x, \tilde{x}) = \int \| \Theta \|^2 \sigma(x^T\Theta) \sigma(\tilde x^T\Theta) |\rho_t|(d \Theta), \text{ and  }\,
        (\mathcal{T}^\star\mathcal{T} f) (x) := \int H_t(x,\tilde x)f(\tilde x)\mu(d \tilde x).
    \end{align}
    The dynamic RKHS $\mathcal{H}_t$ can be readily constructed via $H_t$. Let the eigen decomposition of $\mathcal{T}^\star\mathcal{T}$ be the countable sum
    $
        \mathcal{T}^\star\mathcal{T} = \sum_{i=1}^E \lambda_ie_ie_i^*.
    $
    Here $E$ can be a nonnegative integer or $\infty$, and $\lambda_i>0$. $e_i$ without confusion can represent either an eigen function or a linear functional. Similarly, we have the singular value decomposition for $\mathcal{T} = \sum_{i=1}^E \sqrt{\lambda_i}t_ie_i^*.$ and $\mathcal{T}^\star$ as well. For a detailed discussion, see e.g. \cite{casselman2014essays}. Again, $t_i$ is a function in $L^2_{|\rho_{t}|}(\Theta)$ or a linear functional. The RKHS can be specified as follows.
    \begin{align*}
        % \label{eq:stationary-rkhs}
        \mathcal{H}_t = \left\{ h \mid h(x) = \sum_i h_ie_i(x) \text{, } \sum_i \frac{h^2_i}{\lambda_i} < \infty \right\}.
    \end{align*}
    We refer to $H_\infty$ as the stationary RKHS kernel, and $\mathcal{H}_\infty$ as the stationary RKHS.
 One can view that the gradient flow training dynamics --- on the parameters of NN --- induces a sequence of functions $\{ f_t: t \geq 0\}$ and dynamic RKHS $\{ \mathcal{H}_t: t \geq 0\}$, indexed by the time $t$.

\subsection{Organization and Summary}
\label{sec:summary-all}

\begin{table}[ht!]
    \caption{Nature of the results studied in this paper.}
    \newcolumntype{Y}{>{\centering\arraybackslash}X}
    \small
    \begin{tabularx}{\columnwidth}{@{}  m{2cm} | Y | Y @{}}
        \toprule
         &  finite neurons $m$ & infinite neurons $m\rightarrow \infty$ \\
        \midrule
        finite samples $n$ & Interpolation (finite rank kernel, Thms.~\ref{thm:proj-solution}, \ref{thm:gap-decomposition} \& Prop.~\ref{prop:interpolation}) & Interpolation (finite rank kernel, Thms.~\ref{thm:proj-solution}, \ref{thm:gap-decomposition} \& Prop.~\ref{prop:interpolation}) \\
        \midrule
        infinite samples $n\rightarrow \infty$ & Approximation (finite rank kernel, Thms.~\ref{thm:proj-solution} \& \ref{thm:gap-decomposition}) &  Approximation (possibly universal kernel\footnotemark[2], Thms.~\ref{thm:proj-solution} \& \ref{thm:gap-decomposition})\\
        \bottomrule
    \end{tabularx}
    \label{table:summary}
\end{table}
\footnotetext[2]{Whether the kernel is universal in the $m, n\rightarrow \infty$ case still depends on $f_*$ and the data distribution $P$. See the simulations of \cite{maennel2018gradient}.}

We will prove three results, which are summarized informally in this section (see also Table~\ref{table:summary}). We remark that Theorems~\ref{thm:proj-solution} and \ref{thm:gap-decomposition} are stated for the approximation problem. However, as done in Corollary~\ref{cor:proj-solution}, by substituting $\mathcal{P}, \mu$ by the empirical counterparts, one can easily state the analog for the ERM problem. Recall $f_*(x) = \E[\bd{y}|\bd{x} = x]$.

\noindent \textbf{Gradient flow on NN converges to projection onto data-adaptive RKHS.}
Theorem~\ref{thm:proj-solution} shows that as done in practice training NN with simple gradient flow, in the limit of any \textit{local} stationarity, learns the adaptive representation, and performs the \textit{global} least squares projection simultaneously. Define $f_\infty = \lim_{t \rightarrow \infty} f_t$ as the function computed by ReLU networks (defined in \eqref{nn}, or more generally in \eqref{def:para-nn}) until any stationarity of the gradient flow dynamics (defined in \eqref{eq:training_2}, with the squared loss) for the population distribution $(\bx, \by)\sim P$ . Define the corresponding stationary RKHS $\mathcal{H}_\infty = \lim_{t \rightarrow \infty} \mathcal{H}_t$ (defined in \eqref{def:rkhs-kernel}). 
\begin{quote}[Informal version of Thm.~\ref{thm:proj-solution}]~
    % \label{thm:proj-informal}
    Consider $f_* \in L^2_\mu$, for any local stationarity of the gradient flow dynamics \eqref{eq:training_2} on the weights of neural networks \eqref{nn}, the function computed by NN at stationarity $f_\infty$ satisfies
    \begin{align*}
        f_\infty \in \argmin_{g \in \mathcal{H}_\infty} \| f_* - g \|_{L^2_\mu}^2.
    \end{align*}
\end{quote}

\noindent \textbf{Representation benefits of data-adaptive RKHS.} 
Theorem~\ref{thm:gap-decomposition} illustrates the provable benefits of the learned data-adaptive representation/basis $\mathcal{H}_\infty$. We emphasize that $\mathcal{H}_\infty$, as obtained by training neural networks on the data $(\bx, \by)\sim P$, depends on the data in an implicit way such that there are advantages of representing and approximating $f_*$.

\begin{quote}[Informal version of Thm.~\ref{thm:gap-decomposition}]~
    Consider $f_* \in L^2_\mu$ and the same setup as Theorem~\ref{thm:proj-solution}. Decompose $f_*$ into the function $f_\infty$ computed by the neural network and the residual $\Delta_\infty$
    \begin{align*}
        f_* = f_\infty + \Delta_\infty.
    \end{align*}
    Then there is another RKHS (defined in \eqref{eq:rkhs-gd}) $\mathcal{K}_\infty \supset \mathcal{H}_\infty$, such that
    \begin{align*}
        f_\infty \in \mathcal{H}_\infty ,\quad
        \Delta_\infty \in \text{Ker}(\mathcal{K}_\infty) \subset \text{Ker}(\mathcal{H}_\infty),
    \end{align*} with a gap in the spaces $\mathcal{H}_\infty \oplus \text{Ker}(\mathcal{K}_\infty) \neq L^2_\mu$.
\end{quote}

\noindent \textbf{Convergence to Ridgeless regression with adaptive kernels.} Proposition~\ref{prop:interpolation} establishes that in the vanishing regularization $\lambda \rightarrow 0$ limit, the neural network function computed by gradient flow converges to the kernel ridgeless regression with an adaptive kernel (denoted as $\widehat{f}^{\rm rkhs}_{\infty}(x)$). Consider using the gradient flow on the weights of the neural network function $f_t(x) = \sum_{j=1}^m w_j(t) \sigma(x^T u_j(t))$, to solve the $\ell_2$-regularized ERM
    \begin{align*}
        \frac{1}{2n} \sum_{i=1}^n (y_i - f_t(x_i))^2 + \frac{\lambda}{2m} \sum_{j=1}^m \left[ w_j(t)^2 + \| u_j(t) \|^2 \right] \enspace.
    \end{align*}
Denote the function computed by NN at any local stationarity of ERM as $\widehat{f}^{{\rm nn}, \lambda}(x)$, we answer the extrapolation question at a new point $x$, with the generalization error discussed in Prop.~\ref{prop:adaptive-generalization}. The result is extendable to the infinite neurons case.
\begin{quote}[Informal version of Prop.~\ref{prop:interpolation}]~
    Consider only the bounded assumption on initialization that $|w_j^2(0) - \|u_j\|^2(0)| <\infty$ for all $1\leq j\leq m$.
    At stationarity, denote the corresponding adaptive kernel as $\widehat{H}_{\infty}^{\lambda}$.
    The neural network function $\widehat{f}^{{\rm nn}, \lambda}_{\infty} (x)$ has the following expression, 
    \begin{align*}
         \lim_{\lambda \rightarrow 0} \widehat{f}^{{\rm nn}, \lambda}_{\infty} (x) = \widehat{H}_{\infty}(x, X) \widehat{H}_{\infty}(X, X)^{+} Y =: \widehat{f}^{\rm rkhs}_{\infty}(x) ~\text{(ridgeless regression with kernel $\widehat{H}_\infty$)}.
    \end{align*}
\end{quote}

%%%%%%%%%%%%%%%%%%%%%%%%%%%%%%%%%%%%%%%%%%%%%%%%%%%%%%%%%%%%
%%%%%%%%%%%%%%%%%%%%%%%%%%%%%%%%%%%%%%%%%%%%%%%%%%%%%%%%%%%%

\section{Main Results: Benefits of Adaptive Representation}
\label{sec:adaptive}

We formally state two main results of the paper, Theorem~\ref{thm:proj-solution} and Theorem~\ref{thm:gap-decomposition} below. % The proof sketch and the relevant mathematical preparations are deferred to Section~\ref{sec:kernel}.

\subsection{Gradient Flow, Projection and Adaptive RKHS}

We study how the function $f_t$ computed from gradient flow on NN represents $f_*$ when reaching any stationarity, under the squared loss. 
Consider the gradient flow dynamics \eqref{def:training-infinite} reaching \textit{any stationarity}. Assume that the corresponding signed measure in \eqref{eq:sign-measure} satisfies $\text{TV}(\rho_{\infty}) < \infty$ with a compact support. The mathematical details about $\rho_\infty$ are postponed to Section~\ref{sec:pde-char}. We employ the notation $\rho_{\infty}$ since reaching stationarity can be viewed as $t\rightarrow \infty$.

We would like to emphasize that this stationary signed measure $\rho_{\infty}$ is \textit{task adaptive}: it implicitly depends on the regression task $f_*$ and the data distribution $P$, rather than being pre-specified by the researcher as in \cite{bach2017breaking, daniely2016toward, cho2009kernel}. With the RKHS established in Section~\ref{sec:preliminary}, we are ready to state the following theorem.

    \begin{theorem}[Approximation]
        \label{thm:proj-solution}
        For any conditional mean $f_*(x) = \E[\by|\bx =x] \in L^2_\mu$,
        consider solving the approximation problem \eqref{eq:approx}, with the ReLU NN function $f_t$ defined in \eqref{nn} where $w_j(t)$ and $\theta_j(t)$ are the weights for $t\geq 0, 1\leq j \leq m$. For any signed measure $\rho_0$ with ${\rm TV}(\rho_0) < \infty$, consider the infinitesimal initialization weights $\ u_j(0) = \Theta_j/\sqrt{m}$, and $w_j(0) = {\rm sgn}(\rho_0(\Theta_j)) \| \Theta_j \|/\sqrt{m}$, with $\Theta_j \sim \rho_0$ sampled independently. When the training dynamics \eqref{eq:training_2} reaches any stationarity, it defines a stationary signed measure $\rho^{(m)}_\infty$ (on the collective weights) with ${\rm TV}(\rho^{(m)}_\infty) < \infty$, and a corresponding stationary RKHS $\mathcal{H}_\infty$ with the kernel defined in Eqn.~\eqref{def:rkhs-kernel}, such that:
        \begin{enumerate}
            \item the function computed by neural networks at stationarity has the form
            \begin{align}
                \label{eq:infinite-sign-measure}
                f_{\infty}(x)  = \int \|\Theta\|\sigma(x^T\Theta)\rho^{(m)}_{\infty}(d\Theta) \enspace;
            \end{align}
            \item $f_{\infty}$ is a global minimizer of approximating $f_*$ within the RKHS $\cH_\infty$
            \begin{align}
                \label{eq:rkhs-proj}
                f_{\infty} \in \argmin_{g\in\mathcal{H}_\infty} \|f_* - g \|_{L^2_\mu}^2 \enspace.
            \end{align}
        \end{enumerate}
        In addition, the same results extend to the infinite neurons case with $m\rightarrow \infty$ where the limit for $\rho^{(m)}_\infty$ can be defined in the weak sense.
    \end{theorem}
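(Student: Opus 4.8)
The plan is to analyze the gradient flow as a joint dynamics on the pair $(w_j, u_j)$, exploit a conservation law that couples the two layers, and show that at stationarity the two conditions (the representer form of $f_\infty$ and the RKHS projection optimality) both follow from the first-order stationarity equations together with the structure of the kernel $H_\infty$ built from $\rho_\infty$. The key observation driving everything is the balancedness/conservation property of ReLU networks under gradient flow: from \eqref{eq:training_2}, because $\sigma$ is positively homogeneous of degree $1$ and $\sigma'(s) s = \sigma(s)$, one computes $\frac{d}{dt}\big(w_j(t)^2 - \|u_j(t)\|^2\big) = 0$ for every neuron $j$. With the prescribed infinitesimal initialization $w_j(0) = \mathrm{sgn}(\rho_0(\Theta_j))\|\Theta_j\|/\sqrt m$ and $u_j(0) = \Theta_j/\sqrt m$, this invariant is exactly zero, so $w_j(t)^2 = \|u_j(t)\|^2$ for all $t$; this is what makes the rescaled signed measure $\rho_t$ (with the $\|\Theta\|$ weight in \eqref{eq:infinite-sign-measure} and \eqref{def:rkhs-kernel}) the correct bookkeeping device, and it immediately yields conclusion (1): $f_\infty(x) = \sum_j w_j(\infty)\sigma(x^T u_j(\infty)) = \int \|\Theta\|\sigma(x^T\Theta)\rho_\infty^{(m)}(d\Theta)$ once $\rho_\infty^{(m)}$ is defined via \eqref{eq:sign-measure}.

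**Next I would** establish conclusion (2). Write the population residual $r := f_* - f_\infty \in L^2_\mu$. Stationarity of \eqref{eq:training_2} means the right-hand sides vanish: for every neuron $j$ that is ``alive'' (nonzero weight) at stationarity, $\mathbf{E}_{\bx}[\,r(\bx)\,\sigma(\bx^T u_j)\,] = 0$ (from the $w_j$ equation, since $\partial_f \ell(\by,f_\infty) = f_\infty(\bx) - \by$ and $\mathbf{E}[\by|\bx] = f_*(\bx)$), and likewise $\mathbf{E}_{\bx}[\,r(\bx)\,w_j\,\mathbbm{1}_{\bx^T u_j \geq 0}\,\bx\,] = 0$ from the $u_j$ equation. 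The first family of equations says precisely that $r$ is $L^2_\mu$-orthogonal to every function $x \mapsto \sigma(x^T u_j(\infty))$ in the support of $\rho_\infty^{(m)}$. Since $\mathcal{H}_\infty$ is (the closure of) the span of $\{\,x\mapsto \|\Theta\|\sigma(x^T\Theta) : \Theta \in \mathrm{supp}(\rho_\infty^{(m)})\,\}$ — which is clear from the range characterization of $\mathcal{T}^\star$ and the eigen-decomposition of $\mathcal{T}^\star\mathcal{T}$ in Section~\ref{sec:preliminary} — this orthogonality is exactly $r \perp \mathcal{H}_\infty$, i.e. $\langle f_* - f_\infty, g\rangle_\mu = 0$ for all $g \in \mathcal{H}_\infty$. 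Combined with $f_\infty \in \mathcal{H}_\infty$ (immediate from the representer form of part (1) and the definition of $\mathcal{H}_\infty$ via $\mathcal{T}^\star$), this is the Hilbert-space characterization of the least-squares projection, giving \eqref{eq:rkhs-proj}. I would note that the $u_j$-stationarity equations are not needed for this particular conclusion but will matter for Theorem~\ref{thm:gap-decomposition}; here it suffices to observe they are consistent (the gradient-flow limit exists by the finite-rank/compact-support hypotheses).

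**For the infinite-neuron extension**, I would pass to the PDE/continuity-equation description of $\rho_t$ developed in Section~\ref{sec:pde-char}: the empirical measure $\rho_t^{(m)}$ with $m$ neurons converges weakly (along the flow, using propagation-of-chaos/mean-field arguments and the finite total-variation plus compact-support control) to a measure-valued flow $\rho_t$, and the stationarity conditions above are stable under this weak limit because the test functions $\sigma(x^T\cdot)$ and $\|\cdot\|\sigma(x^T\cdot)$ are continuous and the supports stay compact. One then re-derives $\mathbf{E}_{\bx}[(f_* - f_\infty)(\bx)\sigma(\bx^T\Theta)] = 0$ for $|\rho_\infty|$-a.e.\ $\Theta$, which again says $f_* - f_\infty \perp \mathcal{H}_\infty$ with $\mathcal{H}_\infty$ now the RKHS of the limiting kernel $H_\infty$, so \eqref{eq:rkhs-proj} persists verbatim.

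**The main obstacle** is not the algebra of the stationarity conditions but the analytic legitimacy of ``reaching any stationarity'': one must ensure the gradient flow actually converges (or that a subsequential limit exists) with $\mathrm{TV}(\rho_\infty) < \infty$ and compact support, that differentiating through the non-smooth ReLU (the indicator $\mathbbm{1}_{\bx^T u_j \geq 0}$) is justified — e.g.\ that the measure of $\bx$ with $\bx^T u_j(t) = 0$ is null along the flow so the Clarke subdifferential reduces to the stated expression — and, in the $m\to\infty$ case, that the weak limit of the stationary measures is itself stationary for the limiting dynamics rather than merely a limit of near-stationary states. Handling the interchange of the $t\to\infty$ and $m\to\infty$ limits, and making ``any local stationarity'' precise (critical points of a non-convex, non-smooth functional), is where the real care is needed; the projection statement itself is then a short consequence of first-order optimality plus the definition of $\mathcal{H}_\infty$.
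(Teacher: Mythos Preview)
Your proposal is correct and follows essentially the same architecture as the paper's proof: establish the balanced invariant $w_j^2=\|u_j\|^2$ to get the representer form (1), then read off from stationarity that the residual $r=f_*-f_\infty$ is $L^2_\mu$-orthogonal to every feature $\sigma(\cdot^T\Theta)$ with $\Theta\in\mathrm{supp}(\rho_\infty)$, hence to all of $\mathcal{H}_\infty=\mathrm{range}(\mathcal{T}^\star)$, which together with $f_\infty=\mathcal{T}^\star\frac{d\rho_\infty}{d|\rho_\infty|}\in\mathcal{H}_\infty$ gives the projection (2).

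The one noteworthy difference is \emph{which} stationarity equation you invoke. You use the second-layer condition $\frac{dw_j}{dt}=0$, which yields $\langle r,\sigma(\cdot^T u_j)\rangle_\mu=0$ in one line. The paper instead works through the PDE velocity field (the first-layer condition $\frac{d\Theta}{dt}=0$ in rescaled coordinates), obtains the vector identity $\int f_*(x)\mathbbm{1}_{x^T\Theta>0}x\,\mu(dx)=\int\int \mathbbm{1}_{x^T\Theta>0}x\,\sigma(x^T\tilde\Theta)\|\tilde\Theta\|\rho_\infty(d\tilde\Theta)\mu(dx)$ on $\mathrm{supp}(\rho_\infty)$, and then contracts with $\|\Theta\|\Theta^T$ (using $\Theta^T x\,\mathbbm{1}_{x^T\Theta\geq0}=\sigma(x^T\Theta)$) to recover the same scalar orthogonality. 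Your route is more direct for Theorem~\ref{thm:proj-solution}; the paper's route has the advantage that the PDE framing is what carries over cleanly to the $m\to\infty$ limit (where there is no individual ``$w_j$-equation'' and stationarity is phrased as vanishing of the transport velocity on $\mathrm{supp}(\rho_\infty)$). Your remark that the $u_j$-stationarity is not needed here but becomes essential for Theorem~\ref{thm:gap-decomposition} is exactly right.
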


    \begin{remark}
    \rm 
    The above theorem shows that $\lim_{t\rightarrow \infty} f_t$ obtained by training on two-layer weights over time until any stationarity, 
    is the same as projecting $f_*$ onto the stationary RKHS $\mathcal{H}_\infty$. The projection is the solution to the classic nonparametric least squares, had one known the adaptive representation $\mathcal{H}_\infty$ beforehand. Conceptually, this is \textit{distinct} from the theoretical framework in the current statistics and learning theory literature: we do not require the structural knowledge about $f_*$ (say, smoothness, sparsity, reflected in $\cF$). Instead, we run gradient descent on neural networks to learn an adaptive representation for $f_*$, and show how the computed function represents $f_*$ in this adaptive RKHS $\mathcal{H}_\infty$.

    In other words, as done in practice training NN with simple gradient flow, in the limit of any \textit{local} stationarity, learns the adaptive representation, and performs the \textit{global} least-squares projection simultaneously.
    Training NN is learning a dynamic representation (quantified by $\mathcal{H}_t$), at the same time updating the predicted function $f_t$, as shown in Fig.~\ref{fig:projection-rkhs}.
        
    A final note on the infinite neuron case: for any fixed time $t$, with the proper random initialization, setting $m\rightarrow \infty$ defines a proper distribution dynamics on the weak limit $\rho_t$ shown in Lemma~\ref{lem:pdf-sign-measure}. Then set $t \rightarrow \infty$ to obtain the stationarity RKHS $\mathcal{H}_\infty$.  
    \end{remark}
    \begin{figure}[H]
        \centering
        \includegraphics[width = 0.4\textwidth]{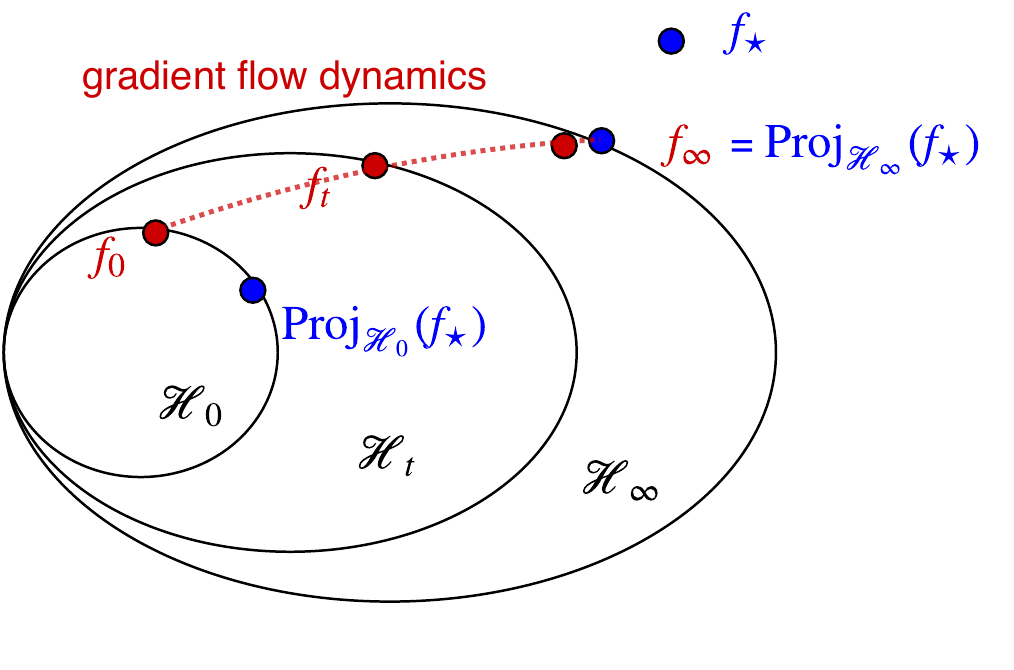}
        \caption{\small Illustration of Theorem~\ref{thm:proj-solution}. Red dotted line denotes the function $f_t$ computed along the gradient flow dynamics on the weights of NN. Along training, one learns a sequence of dynamic RKHS representation $\mathcal{H}_t$'s. Over time, $f_t$ converges to the projection of $f_*$ onto $\mathcal{H}_\infty$. We emphasize that the initial function $f_0$ computed by NN is very different from the projection of $f_*$ onto the initial RKHS $\mathcal{H}_0$.}
        \label{fig:projection-rkhs}
    \end{figure}

    From the above, we have the following natural decomposition, 
    \begin{align}\label{eq:null-rkhs}
        \Delta_\infty(x) = f_*(x) - f_{\infty}(x) \in \text{Ker}(\mathcal{H}_\infty).
    \end{align}
    Surprisingly, as we show in the next section, $\Delta_\infty$ actually lies in a smaller subspace of $\text{Ker}(\mathcal{H}_\infty)$, characterized by $\text{Ker}(\mathcal{K}_\infty)$. We call this the \textit{representation and approximation benefits} of the data-adaptive RKHS learned by training neural networks. 

    Before moving next, we briefly discuss the above theorem when applied to the empirical measure, to solve the ERM problem. First, as a direct corollary, the following holds.
    \begin{corollary}[ERM]
        \label{cor:proj-solution}
        Consider the ERM problem \eqref{eq:erm}, with the other settings the same as in Theorem~\ref{thm:proj-solution}. One can define the finite dimensional RKHS $\widehat{\cH}_\infty$ (at most rank $n$) as in \eqref{def:rkhs-kernel} with $\widehat{\mu}=\frac{1}{n}\sum_{i=1}^n \delta_{x_i}$ substituting $\mu$. When reaches any stationarity, the solution satisfies
        \begin{align*}
            \widehat{f}_{\infty} \in \argmin_{g\in \widehat{\cH}_\infty} \frac{1}{n}\sum_{i=1}^n (y_i - g(x_i))^2 \enspace.
        \end{align*}
    \end{corollary}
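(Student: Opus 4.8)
The plan is to obtain the corollary by specializing Theorem~\ref{thm:proj-solution} to the empirical data distribution. Put $\widehat P := \frac1n\sum_{i=1}^n \delta_{(\bx=x_i,\by=y_i)}$, $\widehat\mu := \frac1n\sum_{i=1}^n\delta_{x_i}$, and let $\widehat f_*(x) := \E_{\widehat P}[\by\mid\bx=x]$, i.e. on a design point $x$ it equals the average of those $y_i$ with $x_i=x$. Since $L^2_{\widehat\mu}$ is finite-dimensional, the hypothesis $\widehat f_*\in L^2_{\widehat\mu}$ is automatic, so Theorem~\ref{thm:proj-solution} applies with $(P,\mu,f_*)$ replaced by $(\widehat P,\widehat\mu,\widehat f_*)$; the regularity assumptions (${\rm TV}(\rho_0)<\infty$, compact support, ${\rm TV}(\rho_\infty)<\infty$) and the infinitesimal initialization are exactly the "other settings" carried over from Theorem~\ref{thm:proj-solution}.

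First I would observe that when $\bd{z}$ in the gradient flow \eqref{eq:training_2} is drawn from $\widehat P$ with $\ell(y,f)=(y-f)^2/2$, the system \eqref{eq:training_2} is literally the gradient flow on the weights $(w_j,u_j)$ of the ERM objective \eqref{eq:erm}; hence "any stationarity of the ERM gradient flow" is the same as "any stationarity of \eqref{eq:training_2} under $\widehat P$". Theorem~\ref{thm:proj-solution} then delivers a stationary signed measure $\widehat\rho^{(m)}_\infty$ with ${\rm TV}(\widehat\rho^{(m)}_\infty)<\infty$, the representation $\widehat f_\infty(x)=\int\|\Theta\|\sigma(x^T\Theta)\,\widehat\rho^{(m)}_\infty(d\Theta)$, the RKHS $\widehat\cH_\infty$ built from $H_\infty$ via \eqref{def:rkhs-kernel} with $\widehat\mu$ in place of $\mu$ — whose defining integral operator $f\mapsto \frac1n\sum_{i=1}^n H_\infty(\cdot,x_i)f(x_i)$ factors through $\mathbb{R}^n$ and so has rank at most $n$ — and the projection identity $\widehat f_\infty\in\argmin_{g\in\widehat\cH_\infty}\|\widehat f_* - g\|_{L^2_{\widehat\mu}}^2$. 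The same reasoning gives the $m\to\infty$ version with $\widehat\rho^{(m)}_\infty$ replaced by its weak limit.

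It remains to identify this $L^2_{\widehat\mu}$-projection with the ERM minimizer in the statement. Applying the orthogonal decomposition \eqref{appr_prob}–\eqref{eq:approx} to the distribution $\widehat P$ — which is a purely Hilbert-space identity valid for any distribution — gives, for every function $g$,
\[
  \frac1n\sum_{i=1}^n (y_i - g(x_i))^2 \;=\; \big\|\widehat f_* - g\big\|_{L^2_{\widehat\mu}}^2 \;+\; \frac1n\sum_{i=1}^n (y_i - \widehat f_*(x_i))^2,
\]
where the last term does not depend on $g$. Hence minimizing $\frac1n\sum_i (y_i - g(x_i))^2$ over $g\in\widehat\cH_\infty$ is equivalent to minimizing $\|\widehat f_* - g\|_{L^2_{\widehat\mu}}^2$ over $g\in\widehat\cH_\infty$, and therefore $\widehat f_\infty\in\argmin_{g\in\widehat\cH_\infty}\frac1n\sum_{i=1}^n (y_i - g(x_i))^2$, as claimed.

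This argument is essentially bookkeeping, and I do not expect a substantive obstacle. The one point deserving care is the treatment of repeated design points: when some $x_i$ coincide, $\widehat f_*(x_i)\ne y_i$ in general, so the reduction to least squares against $\widehat f_*$ rather than against $Y$ — equivalently, the vanishing of the cross term in the displayed decomposition — must be justified via the conditional-mean (within-group average) property of $\widehat f_*$ under $\widehat P$ rather than trivially; if one assumes the $x_i$ are distinct this is immediate. Everything else, including the finite-rank ($\le n$) property of $\widehat\cH_\infty$ and the form of $\widehat f_\infty$, is inherited verbatim from Theorem~\ref{thm:proj-solution}.
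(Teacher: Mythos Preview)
Your proposal is correct and follows exactly the route the paper intends: the paper does not give a separate proof of this corollary, stating only that it holds ``as a direct corollary'' of Theorem~\ref{thm:proj-solution} by substituting $\mu$ with the empirical measure $\widehat\mu$. Your write-up supplies the bookkeeping the paper omits --- the identification of the ERM gradient flow with \eqref{eq:training_2} under $\widehat P$, the rank-$n$ bound via the factorization through $\mathbb{R}^n$, and the bias--variance decomposition that converts the $L^2_{\widehat\mu}$ projection onto $\widehat f_*$ into the ERM objective (including the care with repeated design points) --- but the underlying argument is the same.
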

    More importantly, we will show in Proposition~\ref{prop:interpolation} that the function computed by training neural networks with gradient descent on the empirical risk objective $\widehat{f}_{\infty}(x)$ until any stationarity (with vanishing $\ell_2$ regularization), can be shown to be the \textbf{kernel ridgeless regression} with the \textbf{data-adaptive RKHS} $\widehat{\cH}_{\infty}$. Hence, studying the out of sample performance for GD on NN reduces to the generalization of kernel ridgeless regression with adaptive kernels.

%%%%%%%%%%%%%%%%%%%%%%%%%%%%%%%%%%%%%%%%%%%%%%%%%%%%%%%%%%%%%%%%%%%%%%%%%%
%%%%%%%%%%%%%%%%%%%%%%%%%%%%%%%%%%%%%%%%%%%%%%%%%%%%%%%%%%%%%%%%%%%%%%%%%%

\subsection{Representation Benefits of Adaptive RKHS}

    We now define another adaptive RKHS $\mathcal{K}_\infty$ named as the GD kernel, which turns out to be different from $\mathcal{H}_\infty$ in \eqref{def:rkhs-kernel}. Interestingly, the difference in these two kernels sheds light on the representation benefits of the adaptive RKHS. The new RKHS $\mathcal{K}_\infty$ is motivated by the gradient training dynamics. Recall the associated signed measure $\rho_\infty$ at the stationarity, The GD kernel is defined as
        \begin{align}
            \label{eq:rkhs-gd}
            K_\infty(x, \tilde{x}) = \int \left(\|\Theta\|^2\mathbbm{1}_{x^T\Theta\geq0}\mathbbm{1}_{\tilde x^T\Theta\geq0}x^T \tilde x + \sigma(x^T\Theta)\sigma(\tilde x^T\Theta) \right) |\rho_\infty|(d\Theta) \neq H_\infty(x, \tilde{x})
        \end{align}
        which is different than the stationary RKHS kernel $H_\infty$ in \eqref{def:rkhs-kernel}.
        We use $\mathcal{K}_t: L^2_\mu(x) \rightarrow L^2_\mu(x)$ to denote the integral operator associated with $K_t$,
        \begin{align*}
            (\mathcal{K}_t f)(x) := \int  K_t( x, \tilde{x} ) f(\tilde{x})\mu(d\tilde{x}).
        \end{align*}
        With a slight abuse of notation, we denote the corresponding RKHS to be $\mathcal{K}_t$ as well. Now we are ready to state the main theorem on the representation benefits.

    \begin{theorem}[Representation Benefits]
        \label{thm:gap-decomposition}
        Consider $f_* \in L^2_\mu$ and the same setting as in Theorem~\ref{thm:proj-solution}. % As in the setup for Theorem~\ref{thm:proj-informal}, decompose $f_*$ into the function $f_\infty$ computed by the neural network and the residual $\Delta_\infty$
    Consider the approximation problem \eqref{eq:approx} with either finite or infinite neurons, and the gradient flow dynamics \eqref{def:training-infinite} (equivalently \eqref{eq:training_2}) with data pair $(\bx, \by)\sim P$ drawn from the population distribution.
    When reaching any stationary signed measure $\rho_{\infty}$, $f_*$ is decomposed into the function $f_\infty$ computed by the neural network and the residual $\Delta_\infty$
    \begin{align*}
        f_* = f_{\infty} + \Delta_\infty.
    \end{align*}
    Recall the RKHS $\mathcal{H}_\infty$ in \eqref{def:rkhs-kernel} and the GD RKHS $\mathcal{K}_\infty$ in \eqref{eq:rkhs-gd}, all learned from the data $(\bx, \by)\sim P$ and $f_*$ adaptively. The following holds,
    \begin{align*}
        f_\infty \in \mathcal{H}_\infty, \quad
        \Delta_\infty \in \text{Ker}(\mathcal{K}_{\infty}) \subset \text{Ker}(\mathcal{H}_\infty),
    \end{align*}
    with $\mathcal{H}_\infty \oplus \text{Ker}(\mathcal{K}_\infty) \neq L^2_\mu$.
    In other words, GD on NN decomposes $f_*$ into two parts, and each lies in a space that is NOT the orthogonal complement to the other.
    \end{theorem}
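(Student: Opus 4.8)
The plan is to read the two stationarity equations of the gradient flow \eqref{eq:training_2} as orthogonality constraints on the residual $\Delta_\infty := f_* - f_\infty$, and then compare the null spaces of $\mathcal{K}_\infty$ and $\mathcal{H}_\infty$ by elementary Hilbert-space bookkeeping. First I would record that, for ReLU activations, $1$-homogeneity makes $w_j(t)^2-\|u_j(t)\|^2$ conserved along \eqref{eq:training_2}; with the balanced infinitesimal initialization of Theorem~\ref{thm:proj-solution} this gives $|w_j(t)|=\|u_j(t)\|$ for all $t$, which is what lets one express $f_\infty$ through the rescaled signed measure as in \eqref{eq:infinite-sign-measure}. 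At any stationarity $\dot w_j=\dot u_j=0$; inserting the squared loss, $\partial_f\ell(y,f)=f-y$, and the tower property $\E[\by\mid\bx]=f_*(\bx)$, the outer- and inner-weight equations become, for $|\rho_\infty|$-a.e.\ $\Theta$,
\[
  \int \Delta_\infty(x)\,\sigma(x^T\Theta)\,\mu(dx)=0,\qquad \int \Delta_\infty(x)\,\mathbbm{1}_{x^T\Theta\ge 0}\,x\,\mu(dx)=0,
\]
the second being valid on $\mathrm{supp}(\rho_\infty)$ where $w_j\neq 0$ (dead neurons carry no $\rho_\infty$-mass, so this restriction is harmless). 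The assertion $f_\infty\in\mathcal{H}_\infty$ is then immediate from Theorem~\ref{thm:proj-solution}(2).

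Next I would show $\Delta_\infty\in\text{Ker}(\mathcal{K}_\infty)$. Write $K_\infty=G_\infty+H_\infty$ with $G_\infty(x,\tilde x):=\int\|\Theta\|^2\mathbbm{1}_{x^T\Theta\ge 0}\mathbbm{1}_{\tilde x^T\Theta\ge 0}\,x^T\tilde x\,|\rho_\infty|(d\Theta)$; this is itself positive semidefinite, with feature map $\phi_\Theta(x):=\|\Theta\|\mathbbm{1}_{x^T\Theta\ge 0}\,x$. Applying $\mathcal{K}_\infty$ to $\Delta_\infty$ and exchanging the $\mu$- and $|\rho_\infty|$-integrals (Fubini, legitimate by ${\rm TV}(\rho_\infty)<\infty$, compact support, and $\Delta_\infty\in L^2_\mu$), the inner $\mu$-integrals that appear are exactly the two displayed above, hence vanish for $|\rho_\infty|$-a.e.\ $\Theta$; therefore $\mathcal{K}_\infty\Delta_\infty\equiv 0$.

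For the kernel inclusions and the gap: since $G_\infty,H_\infty\succeq 0$, $\langle\mathcal{K}_\infty h,h\rangle=0$ forces $\langle\mathcal{H}_\infty h,h\rangle=0$ and $\langle\mathcal{G}_\infty h,h\rangle=0$, so $\text{Ker}(\mathcal{K}_\infty)=\text{Ker}(\mathcal{G}_\infty)\cap\text{Ker}(\mathcal{H}_\infty)\subseteq\text{Ker}(\mathcal{H}_\infty)$, which is the middle inclusion of the theorem. Writing $\psi_\Theta(x):=\|\Theta\|\sigma(x^T\Theta)$, the identity $\psi_\Theta(x)=\Theta^T\phi_\Theta(x)$ makes every $H_\infty$-feature a linear combination of $G_\infty$-features, so in fact $\mathcal{H}_\infty\subseteq\mathcal{G}_\infty$ and $\text{Ker}(\mathcal{K}_\infty)=\text{Ker}(\mathcal{G}_\infty)$; the ``extra'' constraint cutting out $\text{Ker}(\mathcal{K}_\infty)$ inside $\text{Ker}(\mathcal{H}_\infty)$ is thus the vector identity $\int\phi_\Theta\,h\,d\mu=0\in\mathbb{R}^d$ in place of the single scalar $\int\psi_\Theta\,h\,d\mu=0$. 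Finally, $L^2_\mu=\overline{\mathcal{H}_\infty}\oplus\text{Ker}(\mathcal{H}_\infty)$ with the two summands orthogonal; a one-line argument shows any $h\in\text{Ker}(\mathcal{H}_\infty)\setminus\text{Ker}(\mathcal{K}_\infty)$ cannot be written as (element of $\mathcal{H}_\infty$) $+$ (element of $\text{Ker}(\mathcal{K}_\infty)$), so it suffices that $\text{Ker}(\mathcal{K}_\infty)\subsetneq\text{Ker}(\mathcal{H}_\infty)$, equivalently $\mathcal{H}_\infty\subsetneq\mathcal{G}_\infty$. I would obtain this last strictness under mild non-degeneracy ($d\ge 2$, $\mu$ not supported on a hyperplane, $\rho_\infty\neq 0$): for each surviving direction $\Theta$ the $d$ functions $x\mapsto\mathbbm{1}_{x^T\Theta\ge 0}x_\ell$ are $\mu$-linearly independent, spanning a $d$-dimensional block of $\mathcal{G}_\infty$ of which $\psi_\Theta$ occupies only one dimension, so a rank count ($\mathrm{rank}(G_\infty)$ vs.\ $\mathrm{rank}(H_\infty)$ in the finite-rank regime) yields $\mathcal{H}_\infty\subsetneq\mathcal{G}_\infty$; note $d=1$ is genuinely degenerate, with $\phi_\Theta\propto\psi_\Theta$ and no gap.

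The main obstacle I anticipate is precisely this last strictness claim: turning ``$\mathcal{H}_\infty\subsetneq\mathcal{G}_\infty$ generically'' into a clean, checkable sufficient condition on $(\mu,f_*)$ that is compatible with $\rho_\infty$ actually being a gradient-flow stationary measure --- the constraints defining $\rho_\infty$ and the non-degeneracy act on different objects, and one must verify stationarity does not force the surviving neuron directions into a configuration where the indicator regions collapse the extra $d-1$ dimensions. A more routine technical layer is justifying the Fubini exchange and operator manipulations when $\mu$ is noncompactly supported, and handling the rescaling \eqref{eq:sign-measure} together with the weak $m\to\infty$ limit when transferring the neuron-level stationarity identities to $|\rho_\infty|$-a.e.\ statements.
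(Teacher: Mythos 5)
Your overall logic tracks the paper's and is essentially correct, but you take a slightly different route to the central inclusion $\Delta_\infty \in \text{Ker}(\mathcal{K}_\infty)$ and to the comparison $\text{Ker}(\mathcal{K}_\infty) \subset \text{Ker}(\mathcal{H}_\infty)$, so a short comparison is worthwhile.

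For $\Delta_\infty \in \text{Ker}(\mathcal{K}_\infty)$, the paper invokes Corollary~\ref{cor:signed-kernel}: $\frac{d}{dt}\|\Delta_t\|_\mu^2 = -2\|\mathcal{K}_t^{1/2}\Delta_t\|_\mu^2$, and reads off that the right-hand side must vanish at stationarity, so $\mathcal{K}_\infty^{1/2}\Delta_\infty = 0$. You instead unwind the stationarity of $\eqref{eq:training_2}$ into two explicit orthogonality constraints (outer weights: $\int \Delta_\infty \sigma(\cdot^T\Theta)\,d\mu=0$; inner weights: $\int \Delta_\infty \mathbbm{1}_{\cdot^T\Theta\ge 0}\,x\,d\mu = 0$ on $\mathrm{supp}(\rho_\infty)$) and then check $\mathcal{K}_\infty\Delta_\infty \equiv 0$ by Fubini. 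These are the same computation --- Corollary~\ref{cor:signed-kernel} is itself derived by exactly this kind of differentiation and Fubini --- so your version is a ``static'' restatement of the paper's ``dynamic'' argument; it makes the role of the two stationarity equations visible, which is a pedagogical plus, at the cost of re-deriving a lemma the paper has already packaged.

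For the inclusion $\text{Ker}(\mathcal{K}_\infty) \subset \text{Ker}(\mathcal{H}_\infty)$, the paper's Proposition~\ref{prop:kernel-compare} proves the chain $K_\infty \succeq K_\infty^{(0)} \succeq K_\infty^{(1)} \succeq D^{-2}H_\infty$ via Cauchy--Schwarz and the compact-support radius $D$. You instead use the cleaner structural fact that the null space of a sum of two psd operators is the intersection of their null spaces, together with the feature-map identity $\|\Theta\|\sigma(x^T\Theta) = \Theta^T\bigl(\|\Theta\|\mathbbm{1}_{x^T\Theta\ge 0}\,x\bigr)$, giving $\mathcal{H}_\infty \subseteq \mathcal{G}_\infty$ directly. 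This is arguably slicker and avoids the quantitative $D^{-2}$. One small slip: you identify the second summand of $K_\infty$ with $H_\infty$, but it is actually $K_\infty^{(1)}(x,\tilde x) = \int \sigma(x^T\Theta)\sigma(\tilde x^T\Theta)|\rho_\infty|(d\Theta)$, which lacks the $\|\Theta\|^2$ weight that appears in $H_\infty$. The conclusion survives because $\Theta=0$ contributes nothing to $\sigma(x^T\Theta)$, so the $L^2_\mu$-operator null spaces of $K_\infty^{(1)}$ and $H_\infty$ coincide, but the decomposition as stated is not literally $K_\infty = G_\infty + H_\infty$.

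On the strictness claim $\mathcal{H}_\infty \oplus \text{Ker}(\mathcal{K}_\infty) \neq L^2_\mu$: you correctly identify this as the weakest link, and you are in good company --- the paper's formal proof of Theorem~\ref{thm:gap-decomposition} does not actually establish this strictness either. It is supported only by the worked example in Section~\ref{sec:implication-of-theory} (one neuron, finite sample: $H_\infty(X,X)$ has rank $1$ while $K_\infty(X,X)$ can have rank up to $d$) rather than by a general non-degeneracy lemma. Your sketched rank-counting argument and the observation that $d=1$ is genuinely degenerate go slightly further than the paper, but to be fully rigorous you would need to close exactly the loop you flag: showing that the stationarity constraints on $\rho_\infty$ do not collapse the $d-1$ extra directions of $\phi_\Theta$ onto $\psi_\Theta$. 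Leaving this as a ``generic'' statement (as the paper also implicitly does) is acceptable, but you are right that it is not a theorem-level guarantee.
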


        \begin{remark}
            \rm 
            As we can see $\text{Ker}(K_\infty)$ and $\text{Ker}(H_\infty)$ are not the same. Therefore, the decomposition $f_{\infty} + \Delta_{\infty}$ is not a trivial orthogonal decomposition to the RKHS $\mathcal{H}_\infty$ and its complement.
        
            Recall Theorem~\ref{thm:proj-solution}, projecting $f_*$ to the RKHS $\mathcal{H}_\infty$ with the data-adaptive kernel
            \begin{align*}
                H_\infty(x, \tilde x) = \int  \sigma(x^T\Theta) \sigma(\tilde x^T\Theta) |\rho_\infty|(d \Theta)
            \end{align*}
            associated with $|\rho_\infty|$ is the same as the function constructed by neural networks (GD limit as $t\rightarrow \infty$).
            However, the residual lies in a possibly much smaller space due to Theorem~\ref{thm:gap-decomposition}, which is the null space of the RKHS $\mathcal{K}_\infty$ 
            \begin{align*}
                K_\infty(x, \tilde{x}) = \int \left(\|\Theta\|^2\mathbbm{1}_{x^T\Theta\geq0}\mathbbm{1}_{\tilde x^T\Theta\geq0}x^T \tilde x + \sigma(x^T\Theta)\sigma(\tilde x^T\Theta) \right) |\rho_\infty|(d\Theta).
            \end{align*} 
        
        In other words, as the learned adaptive basis $\mathcal{H}_\infty$ (from GD) depends on the data distribution and the task $f_*$ implicitly, it has the advantage of representing $f_*$ by squeezing the residual into a smaller subspace in the null space of $\mathcal{H}_\infty$. A pictural illustration can be found in Fig.~\ref{fig:adaptive-basis}. This representation and approximation benefit helps with explaining the better interpolation results obtained by neural networks \citep{zhang2016understanding,belkin2018understand,liang2018just,belkin2018reconciling}: (1) the adaptive basis is tailored for the task $f_*$, thus the residual/interpolation error lies in a smaller space; (2) in view of the ODE in Corollary~\ref{cor:signed-kernel}, the second layer of NN adds \textit{implicit regularization} to the smallest eigenvalues of $K_t$, thus improving the converging speed of $\Delta_t$ to zero.
         \end{remark}
    
        \begin{figure}[H]
            \centering
            \includegraphics[width = 0.75\textwidth]{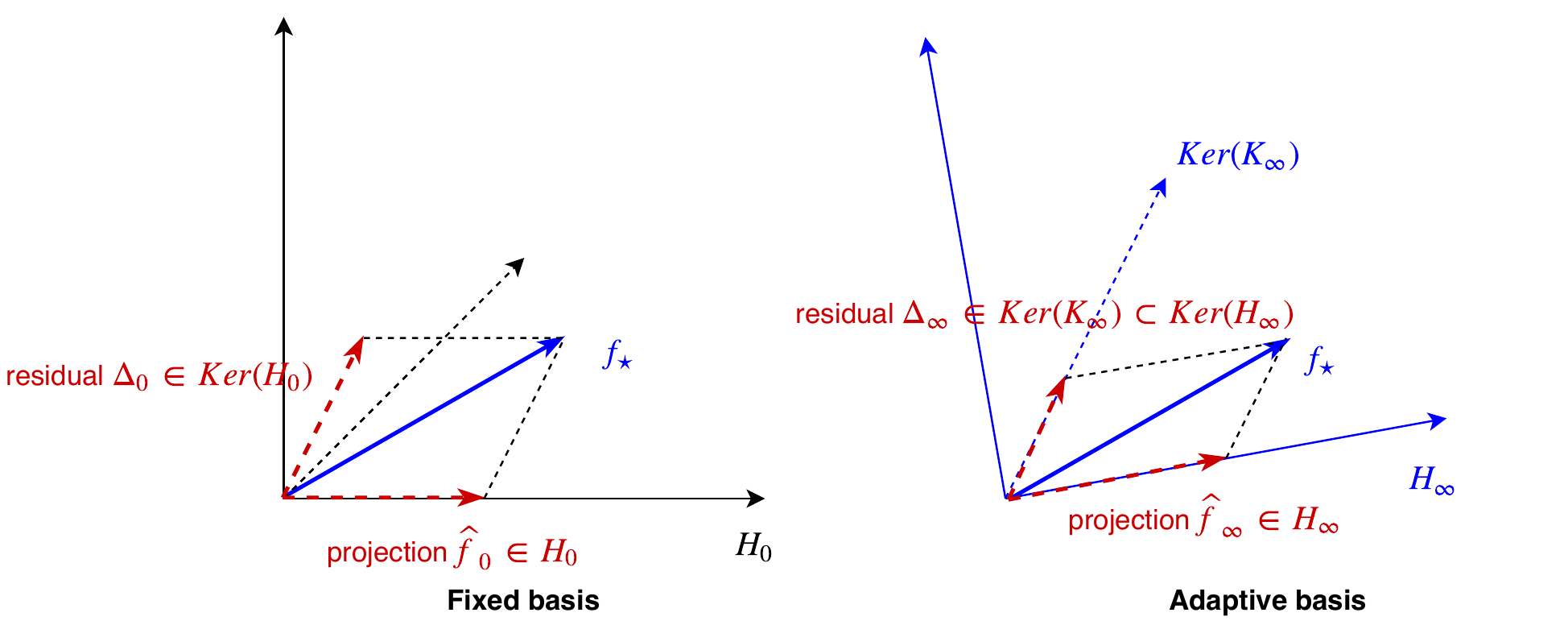}
            \caption{\small Illustration of \ref{thm:gap-decomposition}: fixed basis vs. adaptive learned basis. In classic statistics, one specifies the fixed function space/basis $H_0$ then decompose $f_*$ into the projection $\hat{f}_0$ and residual $\Delta_0 \in \text{Ker}(H_0)$. However, for GD on NN, one learns the adaptive basis $H_\infty$ that depends on $f_*$. Therefore, the residual $\Delta_\infty$ lies in a subspace of $\text{Ker}(H_\infty)$.}
            \label{fig:adaptive-basis}
        \end{figure}

    Before concluding this section, we remark that a similar result holds for the ERM problem \eqref{eq:erm}. As we shall discuss in the next section, the gap between $\cH_\infty$ and $\cK_\infty$ can be large, even for the ERM problem.

%%%%%%%%%%%%%%%%%%%%%%%%%%%%%%%%%%%%%%%%%%%%%%%%%%%%%%%%%%%%%%%%%%%%%%%%%%
%%%%%%%%%%%%%%%%%%%%%%%%%%%%%%%%%%%%%%%%%%%%%%%%%%%%%%%%%%%%%%%%%%%%%%%%%%

\section{Implications of the Adaptive Theory}
\label{sec:implication-of-theory}

In this section, we will discuss some direct implications of the adaptive kernel theory for neural networks established in this paper.

\paragraph{Example: Gap in Spaces $\cH_\infty$ and $\cK_\infty$.}
In Theorem~\ref{thm:gap-decomposition}, it is established that $\text{Ker}(\cK_\infty) \subset \text{Ker}(\cH_\infty)$. 
We now construct a concrete case to illustrate the potentially significant gap in these two spaces as follows.
Consider only one neuron with $m=1$, solving ERM problem~\eqref{eq:erm} with $n$ samples, and $\bx$ with dimension $d$. In this case, $\rho_\infty$ is supported on only one point, noted as $\Theta_\infty \in \mathbb{R}^d$. Denote $X \in \mathbb{R}^{n\times d}$ as the data matrix, one can show that
\begin{align*}
    H_{\infty}(X, X) = \underbrace{\sigma(X \Theta_\infty^T)}_{n\times 1} \underbrace{\sigma(X \Theta_\infty^T)^T}_{1\times n}
\end{align*}
has rank $1$. In contrast, 
\begin{align*}
    K_{\infty}(X, X) \succeq \underbrace{{\rm diag}(\mathbbm{1}_{X\Theta_\infty^T \geq 0}) X }_{n\times d}   \underbrace{ X^T {\rm diag}( \mathbbm{1}_{X\Theta_\infty^T \geq 0}) }_{d\times n} 
\end{align*}
can be of rank $d \wedge  |\{i: x_i^T \Theta_\infty \geq 0 \}|$. Hence the null space of $K_\infty$ is much smaller than that of $H_\infty$. The gap can be large for many other settings of $(n,m,d)$.

\paragraph{Connections to Min-norm Interpolation.}

The following result establishes the connections between the solution of gradient descent on neural networks (at local stationarity), and the kernel ridgeless regression \citep{belkin2018understand, liang2018just, hastie2019surprises} with an adaptive kernel $\widehat{H}^{\lambda}_{\infty}$. Empirical evidence on the similarity between the interpolation with kernels and neural networks was discovered in \cite{belkin2018understand}. The following proposition provides a novel way of studying the generalization property of neural networks via adaptive kernels.

\begin{proposition}[Interpolation: Connection to Kernel Ridgeless Regression]
    \label{prop:interpolation}
    Consider the gradient flow dynamics on all the weights of the neural network function $f_t(x) = \sum_{j=1}^m w_j(t) \sigma(x^T u_j(t))$, to solve the $\ell_2$-regularized ERM
    \begin{align*}
        \frac{1}{2n} \sum_{i=1}^n (y_i - f_t(x_i))^2 + \frac{\lambda}{2m} \sum_{j=1}^m \left[ w_j(t)^2 + \| u_j(t) \|^2 \right] \enspace.
    \end{align*}
    Consider only the bounded assumption on initialization that $|w_j^2(0) - \|u_j\|^2(0)| <\infty$ for all $1\leq j\leq m$.
    At stationarity, denote the signed measure as $\widehat{\rho}^{\lambda}_{\infty}$ and the corresponding adaptive kernel as $\widehat{H}_{\infty}^{\lambda}$.
    Then the neural network function at stationarity $\widehat{f}^{{\rm nn}, \lambda}_{\infty} (x)$ satisfies, 
    \begin{align*}
        \widehat{f}^{{\rm nn}, \lambda}_{\infty} (x) = \widehat{H}_{\infty}^{\lambda}(x, X) \left[ \frac{n}{m} \lambda \cdot I_n + \widehat{H}_{\infty}^{\lambda}(X, X) \right]^{-1} Y \enspace.
    \end{align*}
\end{proposition}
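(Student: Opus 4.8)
The plan is to characterize an \emph{arbitrary} local stationary point of the regularized gradient flow through its first-order optimality conditions together with the conservation law forced by the positive homogeneity of the ReLU, and then to recognize the resulting self-consistent identity as the kernel ridge regression formula. No convergence to a global minimizer is needed; only stationarity of the flow in the limit $t\to\infty$ is used.

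First I would write out the flow for the regularized objective. With $v_i := f_t(x_i)-y_i$ it reads
\begin{align*}
\dot w_j = -\frac1n\sum_{i=1}^n v_i\,\sigma(x_i^T u_j) - \frac{\lambda}{m}w_j,\qquad
\dot u_j = -\frac1n\sum_{i=1}^n v_i\, w_j\,\mathbbm{1}_{x_i^T u_j\ge 0}\,x_i - \frac{\lambda}{m}u_j .
\end{align*}
Using $\mathbbm{1}_{x_i^T u_j\ge 0}\,(u_j^T x_i)=\sigma(x_i^T u_j)$, differentiating $w_j^2$ and $\|u_j\|^2$ along the flow and subtracting gives the balance identity $\frac{d}{dt}(w_j^2-\|u_j\|^2)=-\frac{2\lambda}{m}(w_j^2-\|u_j\|^2)$, hence $w_j^2(t)-\|u_j\|^2(t)=e^{-2\lambda t/m}\,(w_j^2(0)-\|u_j\|^2(0))\to 0$. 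This is the one place the hypothesis $|w_j^2(0)-\|u_j\|^2(0)|<\infty$ is invoked, and it forces $|w_j|=\|u_j\|$ at any stationarity (a neuron with $\|u_j\|=0$ then also has $w_j=0$ and drops out of both $f_\infty$ and the kernel).

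Next I would use the first-order condition $\dot w_j=0$, i.e. $w_j=-\frac{m}{n\lambda}\sum_i v_i\,\sigma(x_i^T u_j)$ with $v_i=f_\infty(x_i)-y_i$. Recalling the rescaling $\widehat{\rho}^{\lambda}_{\infty}$ of Eqn.~\eqref{eq:sign-measure} and the kernel \eqref{def:rkhs-kernel}, the balance identity $|w_j|=\|u_j\|$ is exactly what makes the stationary adaptive kernel collapse to $\widehat{H}_{\infty}^{\lambda}(x,\tilde x)=\sum_j \sigma(x^T u_j)\,\sigma(\tilde x^T u_j)$. Substituting the expression for $w_j$ into $f_\infty(x)=\sum_j w_j\,\sigma(x^T u_j)$ and exchanging the order of summation yields the fixed-point relation
\begin{align*}
f_\infty(x) = -\frac{m}{n\lambda}\sum_{i=1}^n\bigl(f_\infty(x_i)-y_i\bigr)\,\widehat{H}_{\infty}^{\lambda}(x,x_i)\qquad\text{for all }x .
\end{align*}
Evaluating at the data points gives the linear system $F=-\frac{m}{n\lambda}\,\widehat{H}_{\infty}^{\lambda}(X,X)(F-Y)$ for $F=(f_\infty(x_i))_i$; since $\widehat{H}_{\infty}^{\lambda}(X,X)\succeq 0$ the matrix $\frac{n\lambda}{m}I_n+\widehat{H}_{\infty}^{\lambda}(X,X)$ is invertible, so $F=\widehat{H}_{\infty}^{\lambda}(X,X)\bigl(\frac{n\lambda}{m}I_n+\widehat{H}_{\infty}^{\lambda}(X,X)\bigr)^{-1}Y$, and plugging this back into the fixed-point relation produces the stated formula. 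The infinite-neuron case is structurally identical once sums are replaced by integrals against the weak-limit measure $\widehat{\rho}^{\lambda}_{\infty}$, with the balance identity holding $|\widehat{\rho}^{\lambda}_{\infty}|$-a.e. and the stationarity condition holding on its support.

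The main obstacle I anticipate is in the third step: certifying that the rescaled stationary measure $\widehat{\rho}^{\lambda}_{\infty}$ is well defined --- finite total variation and compact support, which is where $\lambda>0$ does genuine work by preventing the weights from escaping to infinity --- and, more delicately, checking that the normalizations in the definition of $\rho_t$ in \eqref{eq:sign-measure} and of $H_t$ in \eqref{def:rkhs-kernel} mesh with the balance identity so that each neuron enters $\widehat{H}_{\infty}^{\lambda}$ with coefficient exactly $1$ (rather than $|w_j|/\|u_j\|$). A secondary subtlety is making ``first-order stationarity'' precise for the non-smooth ReLU flow in the mean-field regime, where one must deduce the pointwise relations above from stationarity of the distributional dynamics of Section~\ref{sec:pde-char}, $|\widehat{\rho}^{\lambda}_{\infty}|$-almost everywhere.
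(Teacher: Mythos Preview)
Your proof is correct and close in spirit to the paper's, but you take a more direct path. Both arguments begin with the regularized gradient flow and derive the balance law $w_j^2(\infty)=\|u_j(\infty)\|^2$ from $\frac{d}{dt}(w_j^2-\|u_j\|^2)=-\tfrac{2\lambda}{m}(w_j^2-\|u_j\|^2)$. From there the paper works with the \emph{first}-layer stationarity $\dot u_j=0$, contracts it with $u_j^T$ to turn indicators into ReLU's, and then invokes balance to replace the resulting $\tfrac{\lambda}{m}\|u_j\|^2$ by $\tfrac{\lambda}{m}w_j^2$; this yields an $m\times m$ linear system in the outer weights, which is converted to the $n\times n$ kernel form via the push-through identity $(\sigma\sigma^T+n\lambda I_m)^{-1}\sigma=\sigma(\sigma^T\sigma+n\lambda I_n)^{-1}$. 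You instead read off $w_j$ directly from the \emph{second}-layer stationarity $\dot w_j=0$, substitute into $f_\infty(x)=\sum_j w_j\sigma(x^Tu_j)$, and land immediately on the $n\times n$ fixed-point $F=-\tfrac{m}{n\lambda}\widehat H_\infty^\lambda(X,X)(F-Y)$, which you solve in one line. Your route is cleaner: the kernel-ridge formula emerges without ever using balance, since the feature kernel $\sum_j\sigma(x^Tu_j)\sigma(\tilde x^Tu_j)$ appears automatically; balance is only needed to match this to the paper's $\widehat H_\infty^\lambda$ defined through $|\widehat\rho_\infty^\lambda|$. By contrast, the paper's detour through $\dot u_j=0$ makes balance structurally necessary to close the computation. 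The normalization worry you raise in your last paragraph is real but is an ambiguity in the paper's statement (its proof takes $\widehat H_\infty^\lambda(x,\tilde x)=\int\sigma(x^T\Theta)\sigma(\tilde x^T\Theta)\,|\widehat\rho_\infty^\lambda|(d\Theta)$, i.e.\ without the $\|\Theta\|^2$ weight of \eqref{def:rkhs-kernel}), not a gap in your argument.
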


In the vanishing regularization $\lambda \rightarrow 0$ limit, the neural network function converges to the kernel ridgeless regression with the adaptive kernel, when $\widehat{H}_{\infty}(X, X) := \lim_{\lambda \rightarrow 0} \widehat{H}_{\infty}^{\lambda}$ exists,
\begin{align*}
     \lim_{\lambda \rightarrow 0} \widehat{f}^{{\rm nn}, \lambda}_{\infty} (x) = \widehat{H}_{\infty}(x, X) \widehat{H}_{\infty}(X, X)^{+} Y = \widehat{f}^{\rm rkhs}_{\infty}(x).
\end{align*}
Note that the generalization theory for the kernel ridgeless regression has been established \cite{liang2018just, hastie2019surprises}. Here the kernel $\widehat{H}_{\infty}(X, X)$ is data-adaptive (that adapts to $f_*$) learned along training, instead of being fixed and pre-specified.

\paragraph{Connections to Random Kitchen Sinks.}
Let us introduce two function spaces, with the base measure $\rho_0$ (fixed representation)
\begin{align*}
    \Gamma_2(\rho_0):= \left\{ f(x)~|~ f(x) = \int \sigma(x^T\Theta) w(\Theta) \rho_0(d\Theta), w \in L^2_{\rho_0} \right\}
\end{align*}
\begin{align*}
    \Gamma_1(\rho_0):= \left\{ f(x)~|~ f(x) = \int \sigma(x^T\Theta) w(\Theta) \rho_0(d\Theta), w \in L^1_{\rho_0} \right\}
\end{align*}

In random kitchen sinks studied in \cite{rahimi2008random, rahimi2009weighted}, by assuming $f_* \in \Gamma_2(\rho_0)$ that lies in the RKHS, the approximation error can be controlled by the existence of the following function with $\theta_j, j \in [m]$ i.i.d. sampled from $\rho_0$ 
\begin{align*}
    \widehat{f}(x) = \frac{1}{m} \sum_{j=1}^m \sigma(x^T\Theta_j) w(\Theta_j)  \in \Gamma_1(\rho_0), \text{but} ~\widehat{f}(x) \notin \Gamma_2(\rho_0) \enspace.
\end{align*}
Note that $\widehat{f}$ lies in a possibly much larger space $\Gamma_1(\rho_0)$ though the target only lies in $f_* \in \Gamma_2(\rho_0)$. Similarly for two-layer neural networks function $f_t(x)$ considered in \cite[Section 2.3]{bach2017breaking}, the RKHS space $\Gamma_2(\rho_0)$ can be more restrictive compared to $f_t\in \Gamma_1(\rho_0)$. 

In contrast, with the adaptive RKHS representation $\cH_\infty$, we have shown that
\begin{align*}
    f_\infty(x) \in \Gamma_1(|\rho_\infty|), \text{and} ~f_\infty(x) \in \Gamma_2(|\rho_\infty|) \enspace.
\end{align*}

The extreme case of fully adaptive function space $\Gamma_2(|\rho_*|)$ is defined with $\rho_*$ tailored for $f_*$, $
    f_* = \int \sigma(x^T\Theta) \rho_*(d\Theta)$.
The adaptive representation learned by neural networks can be viewed as in between the fixed and the fully adaptive representation.

\paragraph{Adaptive Generalization Theory.}

Now we attempt to provide a new decomposition to study the generalization of NN via adaptive kernels. Recall we have shown that
$\widehat{f}^{\rm rkhs}_{\infty}(x) = \lim_{\lambda \rightarrow 0} \widehat{f}^{{\rm nn}, \lambda}_{\infty} (x) = \widehat{H}_{\infty}(x, X) \widehat{H}_{\infty}(X, X)^{+} Y,$
where
$\widehat{H}_{\infty}(x, \tilde{x}) :=   \int \sigma(x^T \Theta) \sigma(\tilde{x}^T \Theta) \widehat{\rho}^{(n, m)}_\infty(d\Theta).$
Define the population limit
$\rho^{(m)}_{\infty}(d\Theta) := \lim_{n\rightarrow \infty } \widehat{\rho}^{(n, m)}_\infty $ and  $H_{\infty}(x, \tilde{x}) := \int \sigma(x^T \Theta) \sigma(\tilde{x}^T \Theta) \rho^{(m)}_\infty (d\Theta)$. Denote the ridgeless regression with the population adaptive kernel $H_\infty$,
\begin{align*}
    f^{\rm rkhs}_{\infty}(x) = H_{\infty}(x, X) H_{\infty}(X, X)^{+} Y.
\end{align*}
Assume $(\by - f_*(\bx))^2 \leq \sigma^2$ a.s. (can be relaxed). One can derive the following decomposition for generalization.
\begin{proposition}[Adaptive Generalization]
    \label{prop:adaptive-generalization}
    \begin{align*}
        \| \lim_{\lambda \rightarrow 0} \widehat{f}^{{\rm nn}, \lambda}_{\infty}- f_*\|^2_{\mu} & \precsim \underbrace{\| \widehat{f}^{\rm rkhs}_{\infty} - f^{\rm rkhs}_{\infty}  \|_\mu^2}_{\text{adaptive representation error}}  + \underbrace{\| f_\infty - f_* \|_{\mu}^2 }_{\text{adaptive approximation error}} \\
        & \quad \quad + (n \| f_\infty - f_* \|_{\hat{\mu}}^2 +\sigma^2) \underbrace{\E_{\bx \sim \mu} \| H_\infty(X, X)^{-1} H_\infty(X, \bx)\|^2}_{\text{adaptive variance}} \\
        & \quad \quad + \underbrace{\| H_{\infty}(x, X) H_\infty(X, X)^{-1} f_\infty(X) -  f_\infty(x) \|_\mu^2}_{\text{adaptive bias}}
    \end{align*}
\end{proposition}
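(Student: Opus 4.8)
The plan is to decompose the error $\|\widehat{f}^{\rm rkhs}_{\infty} - f_*\|^2_{\mu}$ through a chain of intermediate functions, inserting and subtracting terms so that a triangle inequality (with the elementary $\|a+b+c+d\|^2 \precsim \|a\|^2+\|b\|^2+\|c\|^2+\|d\|^2$) yields the four labeled quantities. The natural chain of anchors is: $\widehat{f}^{\rm rkhs}_{\infty}$ (empirical adaptive kernel ridgeless regression on $Y$), then $f^{\rm rkhs}_{\infty}$ (population adaptive kernel ridgeless regression on $Y$), then $H_{\infty}(x,X) H_\infty(X,X)^{-1} f_\infty(X)$ (population adaptive kernel ridgeless regression on the noise-free vector $f_\infty(X)$), then $f_\infty(x)$ (the NN-computed function / projection of $f_*$), then $f_*(x)$. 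The first jump gives the \emph{adaptive representation error}; the last jump gives the \emph{adaptive approximation error}; and the middle piece needs to be split further into the \emph{adaptive variance} and \emph{adaptive bias} terms.

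First I would handle the two outer terms, which are immediate: by the triangle inequality $\|\widehat{f}^{\rm rkhs}_{\infty}-f_*\|_\mu \le \|\widehat{f}^{\rm rkhs}_{\infty}-f^{\rm rkhs}_{\infty}\|_\mu + \|f^{\rm rkhs}_{\infty}-f_\infty\|_\mu + \|f_\infty-f_*\|_\mu$, and squaring costs only a constant factor. Then I would zoom in on $\|f^{\rm rkhs}_{\infty}-f_\infty\|_\mu$, writing $Y = f_\infty(X) + (Y - f_\infty(X))$ and using linearity of $v \mapsto H_{\infty}(x,X)H_\infty(X,X)^{-1} v$. The "signal part" $H_{\infty}(x,X)H_\infty(X,X)^{-1} f_\infty(X) - f_\infty(x)$ is exactly the \emph{adaptive bias} term. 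The "residual part" $H_{\infty}(x,X)H_\infty(X,X)^{-1}(Y - f_\infty(X))$ is where the variance term comes from: one bounds its $L^2_\mu$ norm by writing it as $\sum_i [H_\infty(X,X)^{-1}H_\infty(X,\bx)]_i (y_i - f_\infty(x_i))$, taking expectation over $\bx\sim\mu$ and over the noise, and using $\E[(y_i - f_\infty(x_i))^2] \le 2\E[(y_i - f_*(x_i))^2] + 2(f_*(x_i)-f_\infty(x_i))^2 \le 2\sigma^2 + 2(f_*(x_i)-f_\infty(x_i))^2$, so that summing over $i$ produces the factor $(n\|f_\infty - f_*\|_{\hat\mu}^2 + \sigma^2)$ times $\E_{\bx}\|H_\infty(X,X)^{-1}H_\infty(X,\bx)\|^2$. (Whether the cross terms between signal and residual parts are controlled by Cauchy–Schwarz into the same two quantities, or simply absorbed by using $\|a+b\|^2 \le 2\|a\|^2 + 2\|b\|^2$, is a routine choice; I would take the latter.)

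The main obstacle I anticipate is the \emph{adaptive representation error} term $\|\widehat{f}^{\rm rkhs}_{\infty}-f^{\rm rkhs}_{\infty}\|_\mu^2$: this is the difference between using the empirical adaptive kernel $\widehat{H}^{(n,m)}_\infty$ and its population limit $H^{(m)}_\infty$, evaluated at the same data, and controlling it genuinely requires a perturbation/stability argument for the map $(\text{kernel}) \mapsto (\text{ridgeless predictor})$ together with a concentration statement $\widehat{\rho}^{(n,m)}_\infty \to \rho^{(m)}_\infty$ as $n\to\infty$. In the present proposition this term is simply \emph{left} as one of the four summands to be analyzed elsewhere (it is the price of adaptivity and is not claimed to be small here), so for the proof of Proposition~\ref{prop:adaptive-generalization} itself I would only need it as a placeholder in the triangle inequality — i.e., the proposition is really just the bookkeeping decomposition, and the substantive work is deferred. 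The secondary subtlety worth stating carefully is the swap between $H_\infty(X,X)^{-1}$ and $H_\infty(X,X)^{+}$: one should note that on the interpolation regime $Y$ and $f_\infty(X)$ lie in the range of $H_\infty(X,X)$ (since $f_\infty \in \mathcal{H}_\infty$ and, for the ERM, interpolates), so the pseudo-inverse acts as a genuine inverse on the relevant vectors, which is why the displayed bound can be written with $H_\infty(X,X)^{-1}$; I would make that identification explicit before running the triangle inequality.
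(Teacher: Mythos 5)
Your decomposition is essentially the paper's: insert $f_\infty(X)$ and $f_*(X)$ as anchors, triangle inequality into four pieces, then Cauchy--Schwarz and the noise assumption to extract $\E_{\bx\sim\mu}\|H_\infty(X,X)^{-1}H_\infty(X,\bx)\|^2$. The only material difference is that you insert $Y = f_\infty(X)+(Y-f_\infty(X))$ and then split $Y-f_\infty(X)$ a second time, whereas the paper does the three-way split $Y=(Y-f_*(X))+(f_*(X)-f_\infty(X))+f_\infty(X)$ in one pass and keeps the noise piece and the misspecification piece as \emph{separate} summands under the triangle inequality.

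That ordering choice matters for the step you gloss over. Writing $\zeta_i := y_i-f_\infty(x_i) = \underbrace{(y_i-f_*(x_i))}_{\epsilon_i}+\underbrace{(f_*(x_i)-f_\infty(x_i))}_{d_i}$ and $a_i := [H_\infty(X,X)^{-1}H_\infty(X,\bx)]_i$, the quantity you need to control is $\E_Y\bigl[(\sum_i a_i\zeta_i)^2\bigr]$. Your proposed route --- bound $\E[\zeta_i^2]\le 2\sigma^2+2d_i^2$ pointwise and ``sum over $i$'' --- would be the argument $\E[(\sum_i a_i\zeta_i)^2]=\sum_i a_i^2\E[\zeta_i^2]$, which is only valid when the $\zeta_i$ are uncorrelated \emph{and} mean-zero; here they are not mean-zero, and the neglected cross-terms $\sum_{i\ne j}a_ia_jd_id_j$ are exactly what makes the $n\|f_\infty-f_*\|_{\hat\mu}^2$ factor appear. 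The clean fix is to undo your merge: $\E[(\sum a_i\zeta_i)^2]\precsim \E[(\sum a_i\epsilon_i)^2]+(\sum a_id_i)^2\le\sigma^2\|a\|^2+\|a\|^2\|d\|^2$, where the first uses independence/zero-mean of $\epsilon_i$ and the second is a plain Cauchy--Schwarz on the deterministic vector $d$. That is precisely why the paper keeps $H_\infty(x,X)H_\infty(X,X)^{+}(Y-f_*(X))$ and $\langle H_\infty(X,X)^{+}H_\infty(X,\bx), f_*(X)-f_\infty(X)\rangle$ as two distinct triangle-inequality terms rather than merging them first. Your remaining remarks (the representation error is left as a placeholder; the $(\cdot)^{+}$ vs.\ $(\cdot)^{-1}$ identification on the range of $H_\infty(X,X)$) are both consistent with the paper, which itself switches between $H_\infty(X,X)^{+}$ in the proof and $H_\infty(X,X)^{-1}$ in the statement without further comment.
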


Note this result holds without requiring global optimization guarantees. The first term is the representation error, which corresponds to the closeness of the adaptive RKHS $\widehat{\cH}_\infty$ (using empirical distribution) and $\cH_\infty$ (using population distribution). The second term is the adaptive approximation error studied in the current paper. The third and fourth terms are the variance and bias expressions studied in \cite{liang2018just, hastie2019surprises, rakhlin2018consistency}, as if assuming the actual function lies in $\cH_\infty$. This decomposition suggests the possibility of studying generalization without explicit global understanding of the optimization, and providing rates that adapts to $f_*$ without structural assumptions.

%%%%%%%%%%%%%%%%%%%%%%%%%%%%%%%%%%%%%%%%%%%%%%%%%%%%%%%%%%%%%%%%%%%%%%%%%%
%%%%%%%%%%%%%%%%%%%%%%%%%%%%%%%%%%%%%%%%%%%%%%%%%%%%%%%%%%%%%%%%%%%%%%%%%%
%%%%%%%%%%%%%%%%%%%%%%%%%%%%%%%%%%%%%%%%%%%%%%%%%%%%%%%%%%%%%%%%%%%%%%%%%%
%%%%%%%%%%%%%%%%%%%%%%%%%%%%%%%%%%%%%%%%%%%%%%%%%%%%%%%%%%%%%%%%%%%%%%%%%%

\section{Time-varying Kernels and Evolution}    
\label{sec:kernel}

In this section, we lay out the mathematical details on the time-varying kernels and the evolution of the signed measure $\rho_t$ supporting the main results. In the meantime, we will discuss in depth the relevant literature motivating our proof ideas.

First, we describe the motivation behind the dynamic RKHS $\cK_t$, and the GD kernel induced by the gradient descent dynamics. Extensions to multi-layer perceptrons is in Sec.~\ref{sec:extension-MLP}.
     \begin{lemma}[Dynamic kernel of finite neurons GD]
        \label{lem:dynamic-kernel}
        Consider the approximation problem \eqref{appr_prob} with a neural network function \eqref{nn}, and the training process \eqref{eq:training_2} with population distribution. Let $\Delta_t(x) = f_*(x)-f_t(x)$ be the residual. Define the time-varying kernel $K_t(\cdot, \cdot): \mathcal{X} \times \mathcal{X} \rightarrow \mathbb{R}$,
        \begin{align}\label{def:gd-kernel}
        K_t(x ,\tilde x) &=  \sum_{j=1}^m \bigg[\sigma(x^Tu_j(t))\sigma(\tilde x^Tu_j(t))  + w_j(t)^2\mathbbm{1}_{x^Tu_j(t)\geq0}\mathbbm{1}_{\tilde x^Tu_j(t)\geq0}x^T\tilde x \bigg].
        \end{align} 
        Then the residual $\Delta_t$ driven by the GD dynamics satisfies,
        \begin{align}
            \frac{d\mathbf{E}_{ \bd{ x}}\left[\frac{1}{2}\Delta_t(\bd{x})^2\right]}{dt} =  -\mathbf{E}_{\bd{x}, \bd{\tilde x}}\left[ \Delta_t(\bd{x})K_t(\bd{x}, \bd{\tilde x})\Delta_t(\bd{\tilde x})\right] .
        \end{align}
    \end{lemma}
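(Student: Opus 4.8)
\textbf{Proof plan for Lemma~\ref{lem:dynamic-kernel}.}

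The plan is to differentiate the population residual energy $\mathbf{E}_{\bd{x}}[\tfrac12 \Delta_t(\bd{x})^2]$ in time directly, using the chain rule, and to track how the time derivative of $f_t$ propagates through the gradient flow equations~\eqref{eq:training_2}. Since $f_*$ is fixed in time, we have $\tfrac{d}{dt}\mathbf{E}_{\bd{x}}[\tfrac12\Delta_t(\bd{x})^2] = -\mathbf{E}_{\bd{x}}[\Delta_t(\bd{x})\,\partial_t f_t(\bd{x})]$, so the whole computation reduces to obtaining a clean expression for $\partial_t f_t(x)$. From~\eqref{nn}, $\partial_t f_t(x) = \sum_{j=1}^m \big[ \dot w_j(t)\,\sigma(x^T u_j(t)) + w_j(t)\,\mathbbm{1}_{x^T u_j(t)\geq 0}\, x^T \dot u_j(t)\big]$, where I use that $\sigma$ is ReLU so $\sigma'(s)=\mathbbm{1}_{s\geq 0}$ almost everywhere (the non-differentiable point $s=0$ has measure zero under reasonable assumptions on $P_x$, which I would either assume or note explicitly).

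Next I would substitute the gradient flow expressions for $\dot w_j$ and $\dot u_j$. For the squared loss $\ell(y,f)=(y-f)^2/2$ we have $\partial \ell/\partial f = f - y$, and crucially $\mathbf{E}_{\bd z}[(f_t(\bx) - \by)\,g(\bx)] = \mathbf{E}_{\bx}[(f_t(\bx) - f_*(\bx))\,g(\bx)] = -\mathbf{E}_{\bx}[\Delta_t(\bx)\,g(\bx)]$ for any function $g$ of $\bx$ alone, by the tower property conditioning on $\bx$. Hence $\dot w_j(t) = \mathbf{E}_{\bd{\tilde x}}[\Delta_t(\bd{\tilde x})\,\sigma(\bd{\tilde x}^T u_j(t))]$ and $\dot u_j(t) = \mathbf{E}_{\bd{\tilde x}}[\Delta_t(\bd{\tilde x})\,w_j(t)\,\mathbbm{1}_{\bd{\tilde x}^T u_j(t)\geq 0}\,\bd{\tilde x}]$. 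Plugging these into $\partial_t f_t(x)$ gives
\[
\partial_t f_t(x) = \mathbf{E}_{\bd{\tilde x}}\Big[ \Delta_t(\bd{\tilde x}) \sum_{j=1}^m \big( \sigma(x^T u_j(t))\sigma(\bd{\tilde x}^T u_j(t)) + w_j(t)^2 \mathbbm{1}_{x^T u_j(t)\geq 0}\mathbbm{1}_{\bd{\tilde x}^T u_j(t)\geq 0}\, x^T \bd{\tilde x} \big)\Big],
\]
and the bracketed sum is exactly $K_t(x,\bd{\tilde x})$ as defined in~\eqref{def:gd-kernel}. Taking $-\mathbf{E}_{\bd x}[\Delta_t(\bd x)\,\cdot\,]$ of both sides yields the claimed identity $\tfrac{d}{dt}\mathbf{E}_{\bd x}[\tfrac12\Delta_t(\bd x)^2] = -\mathbf{E}_{\bd x,\bd{\tilde x}}[\Delta_t(\bd x)K_t(\bd x,\bd{\tilde x})\Delta_t(\bd{\tilde x})]$.

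The main obstacle is analytic rather than algebraic: justifying the differentiation under the expectation/integral signs and the chain rule at the ReLU kink. Specifically, I need to argue that $t\mapsto f_t(x)$ is differentiable with the stated derivative despite $\sigma$ being only piecewise smooth — this requires that the set $\{x : x^T u_j(t) = 0\}$ is $\mu$-null for the relevant $t$ and that the weights $u_j(t)$ do not linger on a positive-measure set of such degenerate configurations, plus a dominated-convergence argument using the finite total variation / compact support assumptions invoked earlier for $\rho_t$. I would handle this by assuming $P_x$ has a density (or at least assigns zero mass to every hyperplane through the origin), so that the kink contributes nothing in expectation, and then the interchange of $\tfrac{d}{dt}$ and $\mathbf{E}_{\bd x}$ follows from boundedness of the weights on compact time intervals. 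Everything else is the routine substitution sketched above.
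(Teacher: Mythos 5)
Your proof is correct and follows essentially the same route as the paper's: differentiate $\tfrac{1}{2}\mathbf{E}_{\bd x}[\Delta_t(\bd x)^2]$ via the chain rule, substitute the gradient-flow equations for $\dot w_j$ and $\dot u_j$, use the tower property to replace $\by$ by $f_*(\bx)$, and recognize the bilinear sum as $K_t$. Your closing remark on justifying differentiation through the ReLU kink and under the expectation is a regularity point the paper's proof glosses over, but it is a caveat rather than a different argument.
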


When running GD to solve the empirical risk minimization (ERM), the dynamics of the finite-dimensional sample residual $\| \Delta_t \|_{\hat{\mu}}^2 $ has been established in \cite{jacot2018neural,du2018gradient}. Here we generalize the result to optimize the weights of both layers, and to solve the infinite-dimensional population approximation problem rather than the empirical risk minimization problem.
For a general loss function $\ell(y, f)$ with curvature (say, logistic loss), similar results hold under slightly stronger conditions.
\begin{corollary}
            \label{cor:general-loss}
            Consider a general loss function $\ell(y, f)$ that is $\alpha$-strongly convex in the second argument $f$, with $K_t$ defined in \eqref{def:gd-kernel}. Assume in addition $\frac{1}{n}K_t(X, X) \in \mathbb{R}^{n \times n}$ has smallest eigenvalue $\lambda_t > 0$. Define $\Delta_t(x_i) := \frac{\partial \ell(y_i,f_t(x_i))}{\partial f}$, then we have for all $f_* : \mathbb{R}^d \rightarrow \mathbb{R}$,  
             \begin{align*}
                         \frac{d \widehat{\E} \left[\ell(\by, f_t(\bx))\right]}{dt} =  -\widehat{\E}_{\bd{x}, \bd{\tilde x}}\left[ \Delta_t(\bd{x})K_t(\bd{x}, \bd{\tilde x})\Delta_t(\bd{\tilde x})\right] \leq - 2\alpha\lambda_t \cdot \widehat{\E} \left[\ell(\by, f_t(\bx)) - \ell(\by, f_*(\bx))\right] \enspace.
              \end{align*} 
\end{corollary}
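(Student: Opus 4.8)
The plan is to first redo the chain-rule computation from Lemma~\ref{lem:dynamic-kernel} but now carrying a general loss $\ell(y,f)$ instead of the squared loss, and then to convert the resulting quadratic form into a strong-convexity lower bound. Concretely, I would set $\Delta_t(x_i) := \partial_f \ell(y_i, f_t(x_i))$ (the ``effective residual''), and write $\frac{d}{dt}\widehat{\E}[\ell(\by, f_t(\bx))] = \widehat{\E}_{\bx}\!\left[ \partial_f \ell(\by, f_t(\bx)) \cdot \frac{d f_t(\bx)}{dt} \right]$. Then I expand $\frac{d f_t(x)}{dt} = \sum_{j=1}^m \left[ \dot w_j(t)\, \sigma(x^T u_j(t)) + w_j(t)\, \mathbbm{1}_{x^T u_j(t)\geq 0}\, x^T \dot u_j(t) \right]$, and substitute the gradient-flow ODEs \eqref{eq:training_2} (with $\partial \ell/\partial f$ replacing $-(\by - f_t)$), exactly as in the proof of Lemma~\ref{lem:dynamic-kernel}. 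Collecting the $\dot w_j$ and $\dot u_j$ contributions reproduces precisely the kernel $K_t$ of \eqref{def:gd-kernel} contracted against $\Delta_t$ on both sides, giving the first (equality) claim:
\begin{align*}
    \frac{d \widehat{\E}[\ell(\by, f_t(\bx))]}{dt} = - \widehat{\E}_{\bx, \bd{\tilde x}}\!\left[ \Delta_t(\bx) K_t(\bx, \bd{\tilde x}) \Delta_t(\bd{\tilde x}) \right].
\end{align*}
This step is essentially bookkeeping identical to Lemma~\ref{lem:dynamic-kernel}; the only subtlety is that the cross-terms now multiply $\partial_f\ell$ rather than the literal residual, which is why we \emph{define} $\Delta_t$ this way.

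Next I would lower-bound the quadratic form. Writing $v \in \mathbb{R}^n$ for the vector with entries $v_i = \Delta_t(x_i)$, the quadratic form equals $\frac{1}{n^2} v^T K_t(X,X) v \geq \frac{1}{n} \lambda_t \cdot \frac{1}{n}\|v\|^2 = \lambda_t \cdot \widehat{\E}_{\bx}[\Delta_t(\bx)^2]$, using the assumed smallest-eigenvalue bound $\lambda_{\min}(\frac1n K_t(X,X)) \geq \lambda_t$. It then remains to relate $\widehat{\E}[\Delta_t(\bx)^2]$ to the loss gap. Here I invoke $\alpha$-strong convexity of $\ell(y,\cdot)$: for each $i$, $\ell(y_i, f_*(x_i)) \geq \ell(y_i, f_t(x_i)) + \partial_f \ell(y_i, f_t(x_i))(f_*(x_i) - f_t(x_i)) + \frac{\alpha}{2}(f_*(x_i) - f_t(x_i))^2$, and combining with the analogous inequality / the standard consequence $\|\nabla\|^2 \geq 2\alpha (\text{value} - \min)$ of strong convexity in the one-dimensional variable $f$ yields $\Delta_t(x_i)^2 = (\partial_f\ell(y_i,f_t(x_i)))^2 \geq 2\alpha\big(\ell(y_i,f_t(x_i)) - \min_f \ell(y_i, f)\big) \geq 2\alpha\big(\ell(y_i, f_t(x_i)) - \ell(y_i, f_*(x_i))\big)$. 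Averaging over $i$ gives $\widehat{\E}[\Delta_t(\bx)^2] \geq 2\alpha\, \widehat{\E}[\ell(\by, f_t(\bx)) - \ell(\by, f_*(\bx))]$, and chaining the two bounds delivers the claimed inequality $\frac{d}{dt}\widehat{\E}[\ell] \leq -2\alpha\lambda_t \cdot \widehat{\E}[\ell(\by, f_t(\bx)) - \ell(\by, f_*(\bx))]$.

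I expect the main obstacle to be not the algebra but a clean justification of the differentiation-under-ReLU step: $\sigma$ is not differentiable at $0$, so the chain rule for $\frac{d}{dt} \sigma(x^T u_j(t))$ needs the same care as in Lemma~\ref{lem:dynamic-kernel} (e.g., the set of $x$ with $x^T u_j(t) = 0$ has measure zero under the relevant distribution for a.e.\ $t$, or one works with the Clarke subdifferential and the convention $\mathbbm{1}_{x^T u_j \geq 0}$ fixed in \eqref{eq:training_2}). Since the excerpt already commits to that convention and proves Lemma~\ref{lem:dynamic-kernel}, I would simply cite that treatment. A secondary point worth stating explicitly is that the $\min_f \ell(y_i, f) \leq \ell(y_i, f_*(x_i))$ step is what makes the bound hold ``for all $f_*$'' — no well-specification is needed, only that we compare against the \emph{achievable} minimum; and the positivity $\lambda_t > 0$ is exactly the place the assumption $\frac1n K_t(X,X) \succ 0$ (an over-parametrization / non-degeneracy condition) enters, mirroring \cite{jacot2018neural, du2018gradient}.
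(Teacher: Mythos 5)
Your proposal is correct and follows essentially the same route as the paper's proof: establish the derivative identity by the same chain-rule computation as Lemma~\ref{lem:dynamic-kernel} with $\Delta_t := \partial_f \ell$, lower-bound the quadratic form $\widehat{\E}_{\bx,\tilde\bx}[\Delta_t K_t \Delta_t]$ by $\lambda_t\,\widehat{\E}[\Delta_t^2]$ via the eigenvalue hypothesis, and then use the strong-convexity (Polyak--\L{}ojasiewicz--type) inequality $\Delta_t(x_i)^2 \ge 2\alpha\bigl(\ell(y_i,f_t(x_i)) - \ell(y_i,f_*(x_i))\bigr)$, which is exactly the inequality the paper invokes. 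One small bookkeeping slip: the intermediate display $\tfrac{1}{n}\lambda_t \cdot \tfrac{1}{n}\|v\|^2$ has an extra $\tfrac1n$; the correct chain is $\tfrac{1}{n^2}v^TK_t(X,X)v = \tfrac1n v^T\bigl(\tfrac1n K_t(X,X)\bigr)v \ge \tfrac{\lambda_t}{n}\|v\|^2 = \lambda_t\,\widehat{\E}[\Delta_t^2]$, which is the bound you in fact use.
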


\subsection{Initialization, Rescaling and $K_0$}
\label{sec:K0}

Now we describe the initialization and rescaling schemes used in the main theorems. 
Rewrite \eqref{appr_prob} according to the signs of the second layer weights
\begin{align*}
    f_t(x) := \sum_{j=1}^{m_{+}} w_{+,j}(t)\sigma(x^T u_{+,j}(t)) + \sum_{j=1}^{m_{-}}w_{-,j}(t)\sigma(x^T u_{-,j}(t)).
\end{align*}

\paragraph{Initialization.} 
We consider the ``infinitesimal'' initialization drawn $i.i.d.$ from two probability measures $\rho_{+, 0}$ and $\rho_{-, 0}$ that do not depend on $m$:
\begin{align}
    u_{+,j}(0) = \frac{1}{\sqrt{m}} \Theta_{+, j}  ~\text{where}~ \Theta_{+, j} \sim \rho_{+, 0} ~ , ~
    u_{-,j}(0) = \frac{1}{\sqrt{m}} \Theta_{-, j}  ~\text{where}~ \Theta_{-, j} \sim \rho_{-, 0} ~.
\end{align} 
Here $m = m_+ + m_-$ with $m_+ \asymp m_-$.
The $1/\sqrt{m}$ rescaling factor turns out to be crucial when defining the infinite neurons limit for the evolution of signed measures. Remark that such initialization is w.l.o.g., and accounts for the infinitesimal nature used in practice when the number of neurons grows. For the second layer weights, we impose the ``balanced condition'' motivated by \cite{maennel2018gradient},
\begin{align}
    w_{+,j}(0) = \|u_{+,j}(0)\|\geq 0 ~,~ w_{-,j}(0) = -\|u_{-,j}(0)\| \leq 0.
\end{align}
It turns out that with such initialization, the balanced condition holds throughout the training process induced by gradient flow, which is useful for the main theorems. Interestingly, in the proof of Proposition~\ref{prop:interpolation}, we show that such balanced condition always holds at stationarity when training neural networks with $\ell_2$ regularization, even for unbalanced initialization.
% This indicates that the degree of freedom of the system is determined by $u$'s.
\begin{proposition}[Balanced condition]\label{lem:balance}
For $u_{+, j}(t)$, $u_{-, j}(t)$, $w_{+, j}(t)$ and $w_{-, j}(t)$, and the initialization specified above, at any time $t$, we have 
\begin{align*}
    w_{+, j}(t) = \|u_{+, j}(t)\|, \quad
    w_{-, j}(t) = -\|u_{-, j}(t)\|.
\end{align*}
\end{proposition}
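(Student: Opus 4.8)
The plan is to show that the quantity $w_{+,j}(t)^2 - \|u_{+,j}(t)\|^2$ (and similarly for the negative neurons) is conserved along the gradient flow, and then combine this conservation law with the initialization to pin down the sign and magnitude. First I would write out the ODEs from \eqref{eq:training_2} specialized to a single neuron pair $(w_j, u_j)$: namely $\dot w_j = -\E_{\bd z}[\partial_f \ell \cdot \sigma(\bx^T u_j)]$ and $\dot u_j = -\E_{\bd z}[\partial_f \ell \cdot w_j \mathbbm{1}_{\bx^T u_j \ge 0}\,\bx]$. The key observation is that $\sigma(x^T u_j) = \mathbbm{1}_{x^T u_j \ge 0}\, x^T u_j$ for the ReLU, so $u_j^T \dot u_j = -\E_{\bd z}[\partial_f\ell \cdot w_j \mathbbm{1}_{\bx^T u_j \ge 0} \bx^T u_j] = -\E_{\bd z}[\partial_f \ell \cdot w_j \sigma(\bx^T u_j)] = w_j \dot w_j$.

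Next I would turn this into the statement that $\frac{d}{dt}\bigl( w_j(t)^2 - \|u_j(t)\|^2 \bigr) = 2 w_j \dot w_j - 2 u_j^T \dot u_j = 0$, so $w_j(t)^2 - \|u_j(t)\|^2$ is constant in $t$. Evaluating at $t = 0$ with the balanced initialization $w_{+,j}(0) = \|u_{+,j}(0)\| $ gives $w_{+,j}(t)^2 = \|u_{+,j}(t)\|^2$ for all $t$, and likewise $w_{-,j}(t)^2 = \|u_{-,j}(t)\|^2$. To upgrade the equality of squares to the signed equality $w_{+,j}(t) = \|u_{+,j}(t)\| \ge 0$ (rather than $w_{+,j}(t) = -\|u_{+,j}(t)\|$), I would argue by continuity: $w_{+,j}$ is a continuous function of $t$ starting at a nonnegative value, and it can only change sign by passing through $0$; but if $w_{+,j}(t_0) = 0$ then $\|u_{+,j}(t_0)\| = 0$ too, and one checks that $(w_{+,j}, u_{+,j}) = (0,0)$ is then a fixed point of the ODE (both right-hand sides vanish since $w_j = 0$ and $\sigma(\cdot) = 0$ when $u_j = 0$), so by uniqueness the trajectory stays at the origin — consistent with $w_{+,j}(t) = \|u_{+,j}(t)\| = 0$. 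Hence the sign is preserved and the claimed identities hold for all $t$.

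The main obstacle is the non-smoothness of the ReLU: the indicator $\mathbbm{1}_{x^T u_j \ge 0}$ makes the vector field discontinuous in $u_j$, so classical existence/uniqueness for ODEs does not directly apply, and one must be slightly careful that the chain-rule manipulation $u_j^T \dot u_j = w_j \dot w_j$ is valid. I would address this by noting that the identity $\sigma(x^Tu) = \mathbbm{1}_{x^Tu\ge0} x^T u$ holds pointwise everywhere (including at $x^T u = 0$, where both sides are $0$), so the algebraic manipulation under the expectation is valid at every $t$ regardless of differentiability of the integrand in $u$; the conservation law $\frac{d}{dt}(w_j^2 - \|u_j\|^2) = 0$ then holds wherever $t \mapsto (w_j(t), u_j(t))$ is differentiable, which (for any reasonable notion of solution to \eqref{eq:training_2}, e.g. absolutely continuous Carathéodory solutions) is almost every $t$, and $w_j^2 - \|u_j\|^2$ being absolutely continuous with a.e.-zero derivative is therefore constant. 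The regularized case mentioned in the remark (stationarity of the $\ell_2$-penalized ERM forcing balancedness even from unbalanced initialization) is handled separately inside the proof of Proposition~\ref{prop:interpolation} and is not needed here; for the present proposition the conservation-law argument suffices.
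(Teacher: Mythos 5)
Your proof is correct and follows the same core argument as the paper: establish the conservation law $\frac{d}{dt}\left(w_j^2 - \|u_j\|^2\right) = 0$ by comparing $w_j\dot w_j$ with $u_j^T\dot u_j$ using the ReLU identity $\sigma(x^Tu) = \mathbbm{1}_{x^Tu\ge0}x^Tu$, then invoke the balanced initialization. You also fill in two points the paper leaves implicit --- the continuity argument that upgrades $w_j^2 = \|u_j\|^2$ to the signed identity, and the justification for the chain rule despite the ReLU's non-smoothness --- which strengthens rather than departs from the paper's reasoning.
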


\paragraph{Rescaling.}
To prepare for the distribution dynamic theory in the next section, we introduce a parameter rescaling with the $\sqrt{m}$ factor. Let $\theta_{+,j}(t) = \sqrt{m}w_{+,j}(t)$ and $\theta_{-,j}(t)=\sqrt{m}w_{-,j}(t)$, also define $\Theta_{+,j}(t) = \sqrt{m}u_{+,j}(t)$ and $\Theta_{-,j}(t)= \sqrt{m}u_{-,j}(t)$ sampled from $\rho_{+, 0}$ and $\rho_{-, 0}$ at $t = 0$. Under this representation,
\begin{align}
    \label{eq:rescaled-representation}
    f_t(x) = \frac{1}{m}\sum_{j=1}^{m_{+}} \theta_{+,j}(t)\sigma(x^T\Theta_{+,j}(t))+ \frac{1}{m}\sum_{j=1}^{m} \theta_{-,j}(t)\sigma(x^T\Theta_{-,j}(t)).
\end{align}
By the positive homogeneity of ReLU, we have the corresponding dynamics on the rescaled parameters,
 \begin{align}
    \frac{d\theta_{\cdot,j}}{dt} &= \sqrt{m} \frac{d w_{\cdot,j}}{dt}=  -\sqrt{m}\mathbf{E}_{\bd{z}}\left[\frac{\partial \ell(\by,f(\bx))}{\partial f}\sigma(\bx^Tu_{\cdot,j})\right] = -\mathbf{E}_{\bd{z}}\left[\frac{\partial \ell(\by,f(\bx))}{\partial f}\sigma(\bx^T\Theta_{\cdot,j})\right], \label{def:training-rescale_1}\\
    \frac{d\Theta_{\cdot,j}}{dt} &= \sqrt{m} \frac{d u_{\cdot,j}}{dt}= -\sqrt{m}\mathbf{E}_{\bd{z}}\left[\frac{\partial \ell(\by,f(\bx))}{\partial f}w_{\cdot, j} \mathbbm{1}_{\bx^Tu_{\cdot, j}\geq0}\bx\right] = -\mathbf{E}_{\bd{z}}\left[\frac{\partial \ell(\by,f(\bx))}{\partial f}\theta_{\cdot, j}\mathbbm{1}_{\bx^T\Theta_{\cdot, j}\geq0}\bx\right].\label{def:training-rescale_4}
\end{align} 
Define at time $t$
\begin{align}
    \label{eq:sign-measure}
    \rho_{+, t} := \frac{1}{m} \sum_{j=1}^{m_{+}} \delta_{\Theta_{+, j}(t)}, ~~ \rho_{-, t} := \frac{1}{m} \sum_{j=1}^{m_{-}} \delta_{\Theta_{-, j}(t)}
\end{align}
as the empirical distribution over neurons on the parameter space $\Theta$. The $\rho_{+, t}$ and $\rho_{-, t}$ converge weakly to proper distributions in the infinite neurons limit $m\rightarrow \infty$, see e.g. \cite{bach2017breaking, mei2018mean}.
Through the balanced condition in Proposition~\ref{lem:balance} and Proposition~\ref{lem:no-change-sign}, we know (by substituting $\theta_j$ by $\|\Theta_j\|$ )
\begin{align}\label{def:para-nn}
     f_t(x) = \int \|\Theta\|\sigma(x^T\Theta)\rho_t(d\Theta), ~\text{where the signed measure $\rho_t := \rho_{+, t} - \rho_{-, t}$}.
\end{align}
The above motivates the study of the RKHS $\cH_t$ as in Theorem~\ref{thm:proj-solution}, with the kernel
\begin{align}
    \label{eq:h}
    H_t(x, \tilde{x}) = \int \| \Theta \|^2 \sigma(x^T\Theta) \sigma(\tilde x^T\Theta) |\rho_t|(d \Theta).
\end{align}

To conclude this section, we provide the explicit formula for the initial kernel matrix $K_0$ under such infinitesimal random initialization.
Specifically, consider the initialization with $w_j$ being $\pm 1/\sqrt{m}$ with equal chance and $u_i \sim N(\mathbf{0},1/m \cdot \bd{I}_d)$ $i.i.d.$ sampled. The initial kernel $K_0$ has the following expression, in the infinite neurons limit.
\begin{lemma}[Fixed Kernel]
    \label{lem:K_0}
With initialization specified above, consider w.l.o.g. $\|x\| = \|\tilde  x\|=1$, and denote $\Theta \sim \pi$ as the isotropic Gaussian $N(\mathbf{0}, \mathbf{I}_d)$. By the strong law of large number, we have almost surely, 
\begin{align*}
    \lim_{m \rightarrow \infty} K_0(x,\tilde x) &=   \mathbf{E}_{\Theta \sim \pi}\left[ \sigma(x^T\Theta)\sigma(\tilde x^T\Theta) + \mathbbm{1}_{x^T \Theta>0}\mathbbm{1}_{\tilde x^T \Theta>0}x^T\tilde x\right]\\
    &=  \left[ \frac{\pi - \arccos(t)}{\pi} t + \frac{\sqrt{1-t^2}}{2\pi}  \right], \quad\text{where $t = x^T\tilde  x$}.
\end{align*}
\end{lemma}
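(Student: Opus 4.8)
The plan is to separate the claim into its two displayed equalities: the first rewrites $K_0$ as an average of i.i.d.\ terms and applies the strong law of large numbers, and the second is the classical closed-form evaluation of the resulting Gaussian expectation, i.e.\ the arc-cosine kernel computation.

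First I would use the positive $1$-homogeneity of ReLU. Under the stated initialization $w_j(0) \in \{\pm 1/\sqrt m\}$ and $u_j(0) = \Theta_j/\sqrt m$ with $\Theta_j \sim N(\mathbf 0, \mathbf I_d)$ drawn i.i.d., we have $\sigma(x^T u_j(0)) = \sigma(x^T\Theta_j)/\sqrt m$, $w_j(0)^2 = 1/m$ deterministically (the random signs are irrelevant to $K_0$), and $\mathbbm{1}_{x^T u_j(0) \ge 0} = \mathbbm{1}_{x^T \Theta_j \ge 0}$, so \eqref{def:gd-kernel} at $t = 0$ becomes
\[
  K_0(x,\tilde x) \;=\; \frac1m \sum_{j=1}^m \Big[ \sigma(x^T\Theta_j)\sigma(\tilde x^T\Theta_j) + \mathbbm{1}_{x^T\Theta_j \ge 0}\mathbbm{1}_{\tilde x^T\Theta_j \ge 0}\, x^T\tilde x \Big].
\]
Each summand is an i.i.d.\ copy of $g(\Theta) := \sigma(x^T\Theta)\sigma(\tilde x^T\Theta) + \mathbbm{1}_{x^T\Theta \ge 0}\mathbbm{1}_{\tilde x^T\Theta \ge 0}\, t$ with $t = x^T\tilde x$, and $|g(\Theta)| \le \tfrac12\big((x^T\Theta)^2 + (\tilde x^T\Theta)^2\big) + 1$ has finite mean since $\|x\| = \|\tilde x\| = 1$. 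Kolmogorov's strong law then gives $K_0(x,\tilde x) \to \E_{\Theta\sim\pi}[g(\Theta)]$ almost surely, which is the first equality; the distinction between $\ge 0$ and $> 0$ in the indicators is immaterial since $\Theta$ puts no mass on the hyperplanes $\{x^T\Theta = 0\}$, $\{\tilde x^T\Theta = 0\}$.

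For the second equality I would compute $\E_{\Theta\sim\pi}[g(\Theta)]$ by projecting $\Theta$ onto the plane $\mathrm{span}\{x,\tilde x\}$; by rotational invariance this projection is $N(\mathbf 0, \mathbf I_2)$, and in an orthonormal frame with $x = (1,0)$ and $\tilde x = (\cos\phi, \sin\phi)$, $\phi := \arccos t \in [0,\pi]$, polar coordinates $(r,\psi)$ give $x^T\Theta = r\cos\psi$ and $\tilde x^T\Theta = r\cos(\psi-\phi)$. Since $\E[r^2] = 2$ and $\psi$ is uniform on $[0,2\pi)$, the $\sigma$-$\sigma$ term equals $\tfrac1\pi \int \cos\psi\,\cos(\psi-\phi)\,d\psi$ over the set $\{\cos\psi > 0,\ \cos(\psi-\phi) > 0\}$, which is an arc of length $\pi - \phi$; a product-to-sum identity evaluates this to $\tfrac1{2\pi}\big((\pi-\phi)\cos\phi + \sin\phi\big)$. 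Likewise $\E[\mathbbm{1}_{x^T\Theta \ge 0}\mathbbm{1}_{\tilde x^T\Theta \ge 0}]$ equals that same arc length divided by $2\pi$, i.e.\ $(\pi - \phi)/(2\pi)$, so the indicator term contributes $t\,(\pi-\phi)/(2\pi)$. Adding the two pieces and substituting $\cos\phi = t$, $\sin\phi = \sqrt{1 - t^2}$ yields $\tfrac{\pi - \arccos t}{\pi}\, t + \tfrac{\sqrt{1-t^2}}{2\pi}$, as claimed.

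There is no genuine obstacle here; the points that need care are verifying the finite first moment so that the SLLN applies, correctly identifying the angular wedge $\{\cos\psi>0,\ \cos(\psi-\phi)>0\}$ on which the trigonometric integral is supported, and separating out the degenerate cases $t = \pm 1$ (equivalently $x = \pm\tilde x$) where the reduction to $\mathrm{span}\{x,\tilde x\}$ is one-dimensional and the stated formula is checked by inspection.
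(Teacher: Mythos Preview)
Your proposal is correct and follows essentially the same approach as the paper: both reduce to the two-dimensional subspace $\mathrm{span}\{x,\tilde x\}$, pass to polar coordinates with $r^2\sim\chi^2(2)$ and a uniform angle, identify the wedge $\{\phi-\pi/2<\psi<\pi/2\}$ on which both indicators are active, and evaluate the resulting trigonometric integrals. The only cosmetic difference is that the paper organizes the $\sigma$--$\sigma$ computation as $\tilde x^T\,\E[\bd{u}\bd{u}^T\mathbbm{1}\mathbbm{1}]\,x$ (computing $\E[v_1^2\mathbbm{1}\mathbbm{1}]$ and $\E[v_1v_2\mathbbm{1}\mathbbm{1}]$ separately), whereas you go straight to the scalar integrand $\cos\psi\cos(\psi-\phi)$ and use a product-to-sum identity; your version is arguably more direct, and your explicit verification of the integrability hypothesis for the SLLN and the remark on the degenerate cases $t=\pm1$ are details the paper omits.
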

Much known results \citep{bengio2006convex, rahimi2008random, bach2017breaking, cho2009kernel, daniely2016toward} on the connection between RKHS and two-layer NN focus on some fixed kernel, such as $K_0$. To instantiate useful statistical rates, one requires $f_*$ to lie in the corresponding pre-specified RKHS $\mathcal{K}_0$, which is non-verifiable in practice. In contrast, the dynamic kernel is less studied. We will establish a dynamic and adaptive kernel theory defined by GD, without making any structural assumptions on $f_*$ other than $f_* \in L^2_\mu$.

\subsection{Evolution of $\rho_t$}
\label{sec:pde-char}

In this section, we derive the evolution of the signed measure $\rho_t$ defined by the neurons at the training $t$, which in turn determines the dynamic kernel $K_t$ defined in \eqref{def:gd-kernel}. To generalize the result to the case of infinite neurons, we follow and borrow tools from the mean-field characterization \citep{mei2018mean, rotskoff2018neural, jordan1998variational}. The rescaling described in the previous section proves handy when defining such infinite neurons limit. 
We define the velocity field driven by the regression task and the interaction among neurons,
\begin{align}
    \label{eq:velocity} 
    V(\Theta) = \mathbf{E}[\bd{y}\sigma(\bx^T\Theta)], \quad 
    U(\Theta,\tilde{\Theta}) = -\mathbf{E}[\sigma(\bx^T\Theta)\sigma(\bx^T\tilde{\Theta})].
\end{align}
The following theorem casts the training process as distribution dynamics on $\rho_{+, t},\rho_{-, t}$. 

\begin{lemma}[Dynamic Kernel and Evolution]
    \label{lem:pdf-sign-measure}
    Consider the approximation problem \eqref{appr_prob}, and the gradient flow as the training dynamic \eqref{eq:training_2}. For $\rho_{+, t}$, $\rho_{-, t}$ and $\rho_{t}$ defined in \eqref{eq:sign-measure} with possibly infinite neurons, we have the following PDE characterization on distribution dynamics of $\rho_{+, t},\rho_{-, t}$
    \begin{align}\label{def:training-infinite}
        \partial_t\rho_{+,t}(\Theta) = - \nabla_{\Theta} \cdot\left[\rho_{+,t}(\Theta) \cdot \|\Theta\| \left( \nabla _{\Theta}V(\Theta)+ \nabla_{\Theta}\int U(\Theta,\tilde\Theta)\|\tilde\Theta\|\rho_t(d \tilde\Theta)\right)\right], \nonumber\\
        \partial_t\rho_{-,t}(\Theta) = \nabla_{\Theta} \cdot\left[ \rho_{-,t}(\Theta)\cdot \|\Theta\| \left( \nabla _{\Theta}V(\Theta)+ \nabla_{\Theta}\int U(\Theta,\tilde\Theta)\|\tilde\Theta\|\rho_t(d \tilde\Theta)\right)\right].
    \end{align} 
    Moreover, the GD kernel $K_t$ is defined as
    \begin{align}
        \label{eq:k}
        K_{t}(x,\tilde x)= \int \left(\|\Theta\|^2\mathbbm{1}_{x^T\Theta\geq0}\mathbbm{1}_{\tilde x^T\Theta\geq0}x^T \tilde x + \sigma(x^T\Theta)\sigma(\tilde x^T\Theta) \right) |\rho_t|(d\Theta).
    \end{align}
\end{lemma}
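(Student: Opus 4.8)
The plan is to read off the evolution of $\rho_{\pm,t}$ directly from the rescaled gradient-flow dynamics \eqref{def:training-rescale_1}--\eqref{def:training-rescale_4} and then to recognize the neuron trajectories as the characteristics of a nonlinear continuity equation. For the squared loss $\ell(y,f)=(y-f)^2/2$ one has $\partial\ell/\partial f = f-y$, so \eqref{def:training-rescale_4} reads $\dot\Theta_{\cdot,j} = \theta_{\cdot,j}\,\E_{\bd{z}}\big[(\by-f_t(\bx))\,\mathbbm{1}_{\bx^T\Theta_{\cdot,j}\geq0}\,\bx\big]$. Invoking the balanced condition of Proposition~\ref{lem:balance} to substitute $\theta_{+,j}(t)=\|\Theta_{+,j}(t)\|$ and $\theta_{-,j}(t)=-\|\Theta_{-,j}(t)\|$, together with the representation $f_t(x)=\int\|\Theta\|\sigma(x^T\Theta)\rho_t(d\Theta)$ from \eqref{def:para-nn}, I would obtain $\dot\Theta_{+,j}=\|\Theta_{+,j}\|\,\E[(\by-f_t(\bx))\mathbbm{1}_{\bx^T\Theta_{+,j}\geq0}\bx]$ and $\dot\Theta_{-,j}=-\|\Theta_{-,j}\|\,\E[(\by-f_t(\bx))\mathbbm{1}_{\bx^T\Theta_{-,j}\geq0}\bx]$.

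The next step is to match this with the velocity field \eqref{eq:velocity}. Differentiating through the ReLU kink (legitimate almost everywhere, using that $P_x$ charges no hyperplane through the origin) gives $\nabla_\Theta V(\Theta)=\E[\by\,\mathbbm{1}_{\bx^T\Theta\geq0}\bx]$ and $\nabla_\Theta\!\int U(\Theta,\tilde\Theta)\|\tilde\Theta\|\rho_t(d\tilde\Theta)=-\E\big[\mathbbm{1}_{\bx^T\Theta\geq0}\bx\!\int\sigma(\bx^T\tilde\Theta)\|\tilde\Theta\|\rho_t(d\tilde\Theta)\big]=-\E[f_t(\bx)\mathbbm{1}_{\bx^T\Theta\geq0}\bx]$, so that, writing $b_t(\Theta):=\|\Theta\|\big(\nabla_\Theta V(\Theta)+\nabla_\Theta\!\int U(\Theta,\tilde\Theta)\|\tilde\Theta\|\rho_t(d\tilde\Theta)\big)$, one has exactly $b_t(\Theta)=\|\Theta\|\,\E[(\by-f_t(\bx))\mathbbm{1}_{\bx^T\Theta\geq0}\bx]$, hence $\dot\Theta_{+,j}=b_t(\Theta_{+,j})$ and $\dot\Theta_{-,j}=-b_t(\Theta_{-,j})$. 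Testing the empirical measures against a smooth compactly supported $\phi$, $\frac{d}{dt}\langle\phi,\rho_{+,t}\rangle=\frac1m\sum_j\nabla\phi(\Theta_{+,j})\cdot\dot\Theta_{+,j}=\langle\nabla\phi\cdot b_t,\rho_{+,t}\rangle$, which is the weak form of $\partial_t\rho_{+,t}=-\nabla_\Theta\cdot(\rho_{+,t}b_t)$, and the mirrored computation for $\rho_{-,t}$ yields $\partial_t\rho_{-,t}=\nabla_\Theta\cdot(\rho_{-,t}b_t)$; this is precisely \eqref{def:training-infinite}.

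For the kernel identity \eqref{eq:k} I would simply substitute the initialization/rescaling $u_j=\Theta_j/\sqrt m$, $w_j=\pm\|\Theta_j\|/\sqrt m$ and the balanced identity into the finite-neuron GD kernel \eqref{def:gd-kernel}: by the positive $1$-homogeneity of $\sigma$ one has $\sigma(x^Tu_j)=\sigma(x^T\Theta_j)/\sqrt m$ and $\mathbbm{1}_{x^Tu_j\geq0}=\mathbbm{1}_{x^T\Theta_j\geq0}$, while $w_j(t)^2=\|\Theta_j(t)\|^2/m$; the common factor $1/m$ turns each of the two sums into an integral against $|\rho_t|=\rho_{+,t}+\rho_{-,t}$, yielding \eqref{eq:k}.

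The main obstacle is the infinite-neuron statement: one must show that the empirical measures $\rho^{(m)}_{\pm,t}$ converge weakly, as $m\to\infty$, to a solution of the coupled transport system \eqref{def:training-infinite}. This is a mean-field / propagation-of-chaos limit, and it is delicate here because $b_t$ is only bounded and measurable — not Lipschitz — owing to the ReLU indicator, so the classical Dobrushin-type stability estimates do not apply directly. I would follow the scheme of \cite{mei2018mean, rotskoff2018neural}: establish tightness of the laws of the neuron trajectories, extract a weak limit, identify every limit point as a weak solution of \eqref{def:training-infinite}, and obtain uniqueness under the standing finite-total-variation and compact-support hypotheses, handling the kink set $\{\bx^T\Theta=0\}$ (which is $P_x$-null) throughout. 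Once the balanced condition is in hand, the finite-$m$ parts above are essentially bookkeeping.
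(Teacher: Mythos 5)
Your argument is the same as the paper's: identify the rescaled neuron trajectories $\dot\Theta_{\pm,j}=\pm b_t(\Theta_{\pm,j})$ as the characteristics of a continuity equation, verify the velocity-field identity by differentiating $V$ and $U$ through the ReLU, and pass to the weak form by testing against smooth compactly supported functions; for infinite $m$ both you and the paper defer to the mean-field propagation-of-chaos framework of \cite{mei2018mean,rotskoff2018neural}. One remark worth noting: you explicitly flag that $b_t$ is only bounded measurable (not Lipschitz) because of the ReLU indicator, so the classical stability estimates do not apply off the shelf — the paper instead simply posits the Lipschitz hypothesis from \cite[A3]{mei2018mean} without comment, so your caveat is a legitimate refinement of the same approach rather than a divergence from it; your handling of the kernel identity \eqref{eq:k} by direct substitution into \eqref{def:gd-kernel} is also fine (the paper treats \eqref{eq:k} as a definition in this lemma and derives the residual-ODE consequence in Corollary~\ref{cor:signed-kernel}).
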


\begin{remark}
    \rm 
    As in \cite{mei2018mean, rotskoff2018neural}, let's first show that in the infinite neurons limit $m \rightarrow \infty$, $\rho_{+, t}, \rho_{-, t}$ are properly defined, with Eqn.~\eqref{def:training-infinite} characterizing the distribution dynamics. For simplicity, we assume the initialization $\rho_{+,0},\rho_{-,0}$ is with bounded support. Add the superscript $m$, $\rho_{+,t}^{(m)}, \rho_{-,t}^{(m)},\rho_{t}^{(m)}$ to \eqref{eq:sign-measure} to indicate their dependence on $m$. Consider that $\nabla_\Theta V(\Theta)$, $\nabla_{\Theta}U(\Theta,\tilde \Theta)$ in \eqref{eq:velocity} are bounded and uniform Lipchitz continuous as in \cite[A3]{mei2018mean}. With the same proof as in \cite[Theorem 3]{mei2018mean}, one can show that with $m\rightarrow \infty$, the initial distribution $\rho^{(m)}_0 \xrightarrow{d}  \rho_0 = \rho_{+,0} - \rho_{-,0}$ by law of large number. And by the solution's continuity w.r.t. the initial value, we have $\rho_t^{(m)} \xrightarrow{d} \rho_t$ as $m\rightarrow \infty$ well defined, for any fixed $t$.
    
    Note that our problem setting is slightly different from that in \cite{mei2018mean}, where the authors consider the NN with fixed second layer weights to be $1/m$. 
 We reiterate that the re-parameterization via $\theta$ and $\Theta$ is crucial: (1) weights on both layers are optimized following the gradient flow; (2) infinitesimal random initialization is employed in practice. 
    In the setting of \cite[Eqn. (3)]{mei2018mean}, the training process is slightly different from the vanilla GD on weights, with an additional $m$ factor in the velocity term. This subtlety is also mentioned in \cite{rotskoff2018neural}. In short, the rescaling looks at the dynamics where $\Theta$'s are on the invariant scale as $m \rightarrow \infty$ for any fixed effective time $t$ (that does not depend on $m$).
    Here we analyze the exact gradient flow on the two-layer weights, with infinitesimal random initialization as in practice, resulting in a different velocity field \eqref{eq:velocity} compared to that in \cite{mei2018mean}. 
    
\end{remark}

The proof of Theorem~\ref{thm:proj-solution} makes use of \eqref{def:para-nn}-\eqref{eq:h} and the stationary condition implied by Lemma~\ref{lem:pdf-sign-measure}. The balanced condition is crucial in both Theorem~\ref{thm:proj-solution} and Proposition~\ref{prop:interpolation}.
The details of the proof are deferred to Section~\ref{sec:pf}.

\subsection{Two RKHS: $\cK_\infty$ and $\cH_\infty$}

%%%%%%%%%%%%%%%%%%%%%%%%%    

In this section we compare the two adaptive RKHS appeared $\cK_\infty$ in \eqref{eq:k}, and $\cH_\infty$ in \eqref{eq:h}. The comparison will lead to the proof of Theorem~\ref{thm:gap-decomposition}. We start with generalizing Lemma~\ref{lem:dynamic-kernel} with the possibly infinite neurons case via the distribution dynamics in \eqref{def:training-infinite}.
    
    \begin{corollary}
        \label{cor:signed-kernel}
       Consider the same setting as in Lemma~\ref{lem:dynamic-kernel} with possibly infinite neurons NN \eqref{def:para-nn}, and the training process \eqref{def:training-infinite}. Define the time-varying kernel matrix $K_t(\cdot, \cdot): \mathcal{X} \times \mathcal{X} \rightarrow \mathbb{R}$, with the signed measure $\rho_t$ follows \eqref{def:training-infinite}
    \begin{align}\label{def:inf-gd-kernel}
        K_{t}(x,\tilde x) &= \int \left(\|\Theta\|^2\mathbbm{1}_{x^T\Theta\geq0}\mathbbm{1}_{\tilde x^T\Theta\geq0}x^T \tilde x + \sigma(x^T\Theta)\sigma(\tilde x^T\Theta) \right) |\rho_t|(d\Theta) \\
        & =: K_t^{(0)}(x,\tilde x) + K_t^{(1)}(x,\tilde x).
    \end{align}
        Then we still have 
    $
        d\mathbf{E}_{ \bd{ x}}\left[\frac{1}{2}\Delta_t(\bd{x})^2\right] / dt =  -\mathbf{E}_{\bd{x}, \bd{\tilde x}}\left[ \Delta_t(\bd{x})K_t(\bd{x}, \bd{\tilde x})\Delta_t(\bd{\tilde x})\right].
   $ 
    \end{corollary}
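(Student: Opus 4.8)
The goal is to prove Corollary~\ref{cor:signed-kernel}, which asserts that the residual dynamics identity of Lemma~\ref{lem:dynamic-kernel} continues to hold when one passes from finitely many neurons to the (possibly infinite) mean-field regime governed by the PDE~\eqref{def:training-infinite}, with the finite-sum kernel $K_t$ in~\eqref{def:gd-kernel} replaced by the integral kernel $K_t$ in~\eqref{def:inf-gd-kernel}. The plan is to re-derive the chain rule for $\tfrac{d}{dt}\tfrac12\E_{\bx}[\Delta_t(\bx)^2]$ directly from the distributional evolution of $\rho_{+,t},\rho_{-,t}$, so that the argument never refers to individual neuron trajectories and hence applies uniformly to the $m\to\infty$ limit.

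\textbf{Step 1: differentiate the residual energy.}
Since $\Delta_t(x) = f_*(x) - f_t(x)$ and $f_*$ is time-independent, $\partial_t \Delta_t(x) = -\partial_t f_t(x)$, and by the chain rule
\begin{align*}
    \frac{d}{dt}\,\E_{\bx}\Big[\tfrac12\Delta_t(\bx)^2\Big] = \E_{\bx}\big[\Delta_t(\bx)\,\partial_t\Delta_t(\bx)\big] = -\,\E_{\bx}\big[\Delta_t(\bx)\,\partial_t f_t(\bx)\big].
\end{align*}
So the whole problem reduces to computing $\partial_t f_t(x)$ in the mean-field picture.

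\textbf{Step 2: compute $\partial_t f_t(x)$ from the PDE.}
Using the representation $f_t(x) = \int \|\Theta\|\sigma(x^T\Theta)\,\rho_t(d\Theta)$ from~\eqref{def:para-nn} with $\rho_t = \rho_{+,t}-\rho_{-,t}$, differentiate under the integral sign and substitute the continuity equations~\eqref{def:training-infinite}. An integration by parts (valid under the bounded-support / finite-TV assumptions, with no boundary terms) transfers the divergence onto $\nabla_\Theta(\|\Theta\|\sigma(x^T\Theta)) = \|\Theta\|\mathbbm{1}_{x^T\Theta\ge0}x + \sigma(x^T\Theta)\Theta/\|\Theta\|$. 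After combining the $\rho_{+,t}$ and $\rho_{-,t}$ contributions (the opposite signs in the two PDEs recombine into an integral against $|\rho_t| = \rho_{+,t}+\rho_{-,t}$ in the appropriate places, exactly as in the finite-neuron bookkeeping of Lemma~\ref{lem:dynamic-kernel}), and recognizing $\nabla_\Theta V(\Theta) = \E[\by\,\mathbbm{1}_{\bx^T\Theta\ge0}\bx]$ and $\nabla_\Theta\!\int U(\Theta,\tilde\Theta)\|\tilde\Theta\|\rho_t(d\tilde\Theta) = -\E[\mathbbm{1}_{\bx^T\Theta\ge0}\bx\, f_t(\bx)] + (\text{the }\sigma\text{-part})$, one identifies the velocity at $\Theta$ as a weighted functional of $\Delta_t$. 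The net result is the identity
\begin{align*}
    \partial_t f_t(x) = \E_{\bd{\tilde x}}\big[K_t(x,\bd{\tilde x})\,\Delta_t(\bd{\tilde x})\big],
\end{align*}
with $K_t$ precisely the kernel in~\eqref{def:inf-gd-kernel}: the term $\|\Theta\|^2\mathbbm{1}_{x^T\Theta\ge0}\mathbbm{1}_{\tilde x^T\Theta\ge0}x^T\tilde x$ coming from the $\nabla_\Theta$-direction piece, and $\sigma(x^T\Theta)\sigma(\tilde x^T\Theta)$ from the $\partial_t\theta$-type piece, both integrated against $|\rho_t|$. Plugging into Step 1 gives the claimed ODE
\begin{align*}
    \frac{d}{dt}\,\E_{\bx}\Big[\tfrac12\Delta_t(\bx)^2\Big] = -\,\E_{\bx,\bd{\tilde x}}\big[\Delta_t(\bx)\,K_t(\bx,\bd{\tilde x})\,\Delta_t(\bd{\tilde x})\big].
\end{align*}

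\textbf{Step 3: justify the infinite-neuron limit.}
For finite $m$, identify the kernel~\eqref{def:inf-gd-kernel} against the empirical $|\rho_t^{(m)}|$ with the finite-sum kernel~\eqref{def:gd-kernel} (using the balanced condition $w_{\pm,j}=\pm\|u_{\pm,j}\|$ from Prop.~\ref{lem:balance} and the rescaling $\theta_j=\|\Theta_j\|$), so that Lemma~\ref{lem:dynamic-kernel} is literally the $m<\infty$ instance of this statement. For $m\to\infty$, invoke the weak convergence $\rho_t^{(m)}\xrightarrow{d}\rho_t$ from Lemma~\ref{lem:pdf-sign-measure} (and its remark): since the integrands defining $f_t$, $K_t$, and $V,U$ are bounded and continuous on the bounded support, both sides of the ODE pass to the limit, and the limiting $\rho_t$ solves~\eqref{def:training-infinite}, so the derived identity holds verbatim. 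Alternatively — and more cleanly — one can carry out Steps 1--2 \emph{directly} with the limiting $\rho_t$, never taking a limit, which is why I prefer to organize the proof around the PDE rather than around the particle system.

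\textbf{Main obstacle.}
The delicate point is Step 2: handling the non-smoothness of ReLU (so $\nabla_\Theta\sigma(x^T\Theta)$ is only defined a.e., and one must check the indicator-set $\{\Theta: x^T\Theta = 0\}$ is $|\rho_t|$-null, or argue via the positive-homogeneity / weak-derivative formulation already used to pass from~\eqref{eq:training_2} to~\eqref{def:training-rescale_1}–\eqref{def:training-rescale_4}) and making the integration by parts rigorous — i.e., justifying differentiation under the integral sign, the absence of boundary terms given only finite TV and compact support, and the interchange of the $\Theta$-integral with the $\bx,\bd{\tilde x}$-expectations via Fubini. Once the bookkeeping of the $\pm$ components into $|\rho_t|$ is done carefully (mirroring the finite-neuron computation), the algebra that produces the two summands of $K_t$ is routine.
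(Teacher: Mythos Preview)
Your proposal is correct and follows essentially the same route as the paper: compute $\partial_t f_t(x)$ directly from the PDE~\eqref{def:training-infinite} via integration by parts (transferring the divergence onto $\nabla_\Theta[\|\Theta\|\sigma(x^T\Theta)]$), recognize the velocity as a functional of $\Delta_t$, and identify the resulting kernel as~\eqref{def:inf-gd-kernel}, then plug into the energy identity. The paper's proof does exactly your Steps~1--2 working directly with the limiting $\rho_t$ (your preferred alternative in Step~3), and does not separately take an $m\to\infty$ limit or address the ReLU/Fubini technicalities you flag as obstacles.
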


   It turns out that the kernels $K_\infty$ and $H_\infty$, defined in \eqref{eq:rkhs-gd} and \eqref{def:rkhs-kernel} respectively, satisfy the following inclusion property. 
    \begin{proposition}\label{prop:kernel-compare}
        Consider the training process reaches any stationarity $\rho_\infty = \rho_{+,\infty} - \rho_{-,\infty}$ with compact support within radius $D$ and finite total variation. We have 
        \begin{align}\label{ieq:pos-compare}
            K_{\infty} \succeq K_{\infty}^{(0)} \succeq K_{\infty}^{(1)} \succeq \frac{1}{D^2}H_\infty,
        \end{align}
        with $K_{\infty}^{(0)}, K_{\infty}^{(1)}$ defined in \eqref{def:inf-gd-kernel}. 
        Combining with the fact that $H_\infty \neq K_\infty$ implies
        \begin{align*}
            \text{Ker}( \mathcal{K}_\infty ) \subset  \text{Ker}( \mathcal{H}_\infty).
        \end{align*}
    \end{proposition}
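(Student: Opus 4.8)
The plan is to prove the Loewner chain \eqref{ieq:pos-compare} one inequality at a time --- each reducing to an elementary fact that holds pointwise in $\Theta$ and is then integrated against the nonnegative measure $|\rho_\infty|$ --- and to read off the null-space inclusion from the resulting operator ordering. Finiteness of $\text{TV}(\rho_\infty)$ ensures the integrals defining $K_\infty^{(0)}, K_\infty^{(1)}, H_\infty$ converge and that the associated operators on $L^2_\mu$ are bounded and compact (as in Section~\ref{sec:preliminary}), so that $\succeq$ between these kernel functions coincides with the Loewner order between the corresponding self-adjoint operators on $L^2_\mu$.

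For $K_\infty \succeq K_\infty^{(0)}$: by \eqref{def:inf-gd-kernel} the difference is $K_\infty^{(1)}$, whose quadratic form against any $g \in L^2_\mu$ is $\int \left(\int g(x)\sigma(x^T\Theta)\mu(dx)\right)^2 |\rho_\infty|(d\Theta) \ge 0$. For $K_\infty^{(0)} \succeq K_\infty^{(1)}$: fix $\Theta$ and consider the pointwise difference kernel $\kappa_\Theta(x,\tilde x) = \mathbbm{1}_{x^T\Theta\ge0}\mathbbm{1}_{\tilde x^T\Theta\ge0}\left(\|\Theta\|^2\, x^T\tilde x - (x^T\Theta)(\tilde x^T\Theta)\right)$, so $K_\infty^{(0)} - K_\infty^{(1)} = \int \kappa_\Theta\,|\rho_\infty|(d\Theta)$; for $g \in L^2_\mu$, setting $h(x) = g(x)\mathbbm{1}_{x^T\Theta\ge0}$ and $v = \int x\,h(x)\,\mu(dx) \in \mathbb{R}^d$, the quadratic form of $\kappa_\Theta$ equals $\|\Theta\|^2\|v\|^2 - (\Theta^T v)^2 \ge 0$ by Cauchy--Schwarz, so $\kappa_\Theta \succeq 0$ for every $\Theta$ and hence the integral is $\succeq 0$. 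For $K_\infty^{(1)} \succeq D^{-2} H_\infty$: compact support within radius $D$ gives $\|\Theta\|^2 \le D^2$ on $\text{supp}(|\rho_\infty|)$, so $K_\infty^{(1)} - D^{-2}H_\infty = \int \left(1 - \|\Theta\|^2/D^2\right)\sigma(x^T\Theta)\sigma(\tilde x^T\Theta)\,|\rho_\infty|(d\Theta)$ is an integral of nonnegatively weighted rank-one PSD kernels, hence $\succeq 0$.

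The chain then gives $\mathcal{K}_\infty \succeq D^{-2}\mathcal{H}_\infty \succeq 0$ as operators on $L^2_\mu$. If $\mathcal{K}_\infty f = 0$ then $0 = \langle f, \mathcal{K}_\infty f\rangle_\mu \ge D^{-2}\langle f, \mathcal{H}_\infty f\rangle_\mu \ge 0$, so $\langle f, \mathcal{H}_\infty f\rangle_\mu = 0$; since $\mathcal{H}_\infty \succeq 0$ this forces $\mathcal{H}_\infty^{1/2}f = 0$, hence $\mathcal{H}_\infty f = 0$, i.e.\ $f \in \text{Ker}(\mathcal{H}_\infty)$. Therefore $\text{Ker}(\mathcal{K}_\infty) \subseteq \text{Ker}(\mathcal{H}_\infty)$, and the inclusion is genuinely proper whenever $H_\infty \ne K_\infty$ --- e.g.\ in the one-neuron example of Section~\ref{sec:implication-of-theory}, where the finite-sample Gram matrix $H_\infty(X,X)$ has rank $1$ while $K_\infty(X,X)$ can have rank up to $d$.

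I do not expect a serious obstacle: each step is short once the right pointwise decomposition of the integrand in $\Theta$ is spotted. The only genuinely delicate points are the translation between the order on kernel functions and the order on the induced $L^2_\mu$-operators, and the identification of $\text{Ker}(\mathcal{K}_\infty)$ with the orthogonal complement in $L^2_\mu$ of the closure of the RKHS $\mathcal{K}_\infty$ --- both of which rest on the compact self-adjoint spectral theory already set up in Section~\ref{sec:preliminary} together with $\text{TV}(\rho_\infty) < \infty$ and the compact support assumption. One should also verify that $H_\infty \ne K_\infty$ is non-vacuous (it holds as soon as $\rho_\infty \ne 0$ and the data distribution is non-degenerate, so that $K_\infty^{(0)} \not\equiv 0$), so that the concluding clause of the proposition has content.
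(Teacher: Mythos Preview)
Your proof is correct and follows essentially the same route as the paper: the first inequality is trivial, the second is the pointwise-in-$\Theta$ Cauchy--Schwarz bound $\langle \Theta, v\rangle^2 \le \|\Theta\|^2\|v\|^2$ (the paper writes it for finite point sets $\sum_i c_i\delta_{x_i}$, you write it for $g\in L^2_\mu$, but the argument is identical), and the third is the support bound $\|\Theta\|\le D$. Your treatment of the null-space inclusion via $0=\langle f,\mathcal{K}_\infty f\rangle_\mu \ge D^{-2}\langle f,\mathcal{H}_\infty f\rangle_\mu$ is actually more explicit than the paper's, which simply asserts the implication.
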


    The proof of Theorem~\ref{thm:gap-decomposition} uses the following fact: when reaching stationarity, due to the ODE defined by GD in Lemma~\ref{lem:dynamic-kernel}, the residual must satisfy
    \begin{align}\label{eq:null-gd}
        \Delta_\infty(x) = f_*(x) - f_{\infty}(x) \in \text{Ker}(\mathcal{K}_{\infty}).
    \end{align}
    The proof of Proposition~\ref{prop:kernel-compare} and Theorem~\ref{thm:gap-decomposition} are deferred to Section~\ref{sec:pf}.

%%%%%%%%%%%%%%%%%%%%%%%%%%%%%%%%%%%%%%%%%%%%%%%%%%%%%%%%%%%%%%%%%%%%%%%%%%
%%%%%%%%%%%%%%%%%%%%%%%%%%%%%%%%%%%%%%%%%%%%%%%%%%%%%%%%%%%%%%%%%%%%%%%%%%

\section{Experiments}

We run experiments to illustrate the spectral decay of the dynamic kernels defined in $K_t$ over time $t$. The exercise is to quantitatively showcase that during neural network training, one does learn the data-adaptive representation, which is task-specific depending on the true complexity of $f_*$. The training process is the same as the one we theoretically analyze: vanilla gradient descent on a two-layer NN of $m$ neurons, with infinitesimal random initialization scales as $1/\sqrt{m}$.

The first experiment is a synthetic exercise with well-specified models. We generate $\{x_i\}_{i=1}^{50}$ from isotropic Gaussian in $\mathbb{R}^5$, and $y_i = f_*(x_i) = \sum_{j=1}^J w^*_j\sigma(x_i^Tu^*_j)$  with different $J$. In other words, we choose different target $f_*$ (task complexity) by varying $J$. We select $m=500$ in our experiment. The top $80\%$ of the sorted eigenvalues of the kernel matrix $K_t$ along the GD training process are shown in Fig. \ref{fig:kernel-true}. The $x$-axis is the index of eigenvalues in descending order, and the $y$-axis is the logarithmic values of the corresponding eigenvalues. Different color indicates the spectral decay of the $K_t$ at different training time $t$. The eigenvalue-decays stabilize over time $t$ means that the training process approaches stationarity. As we can see with $f_*$ belongs to the NN family, the eigenvalues of the kernel matrix, in general, become larger during the training process. For a more complicated target function, it takes longer to reach stationarity.

    \begin{figure}[ht!]
    \centering
    \begin{subfigure}[b]{0.4\textwidth}
            \includegraphics[width = \textwidth]{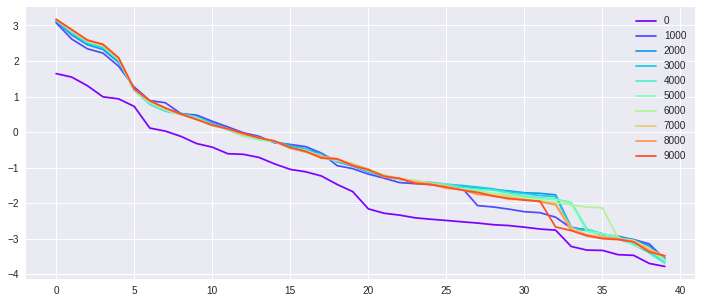}
            \caption{$J=2$}
            \label{fig:1a}
        \end{subfigure}
        ~
        \begin{subfigure}[b]{0.4\textwidth}
            \includegraphics[width = \textwidth]{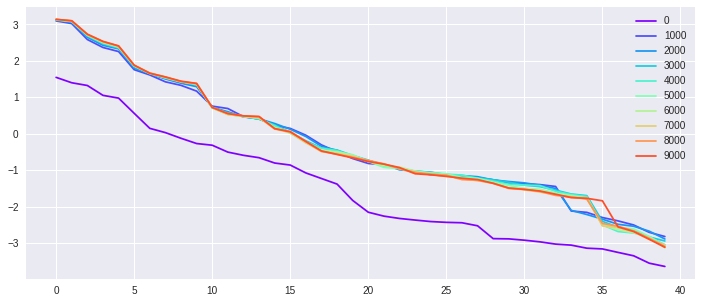}
            \caption{$J=4$}
            \label{fig:1b}
    \end{subfigure}
    
    \begin{subfigure}[b]{0.4\textwidth}
            \includegraphics[width = \textwidth]{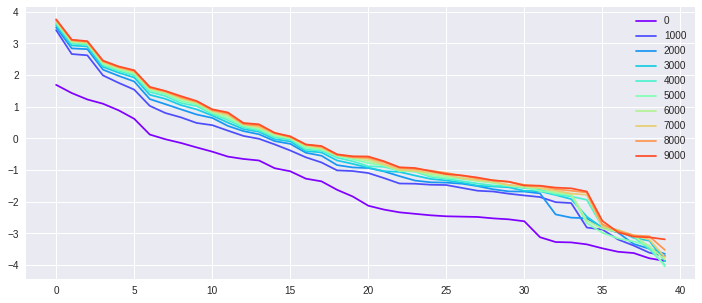}
            \caption{$J=8$}
            \label{fig:1c}
        \end{subfigure}
        ~
        \begin{subfigure}[b]{0.4\textwidth}
            \includegraphics[width = \textwidth]{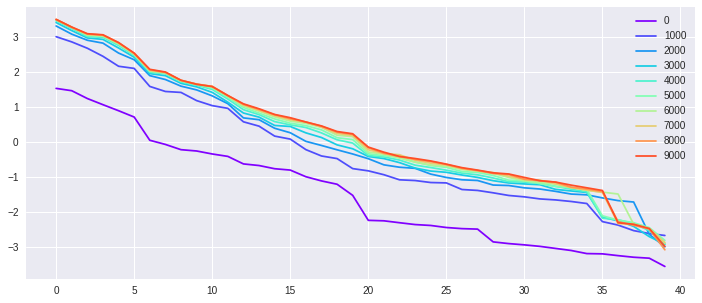}
            \caption{$J=16$}
            \label{fig:1d}
    \end{subfigure}

    \caption{\small Log of the sorted top $80\%$ eigenvalues of kernel matrix along training with different $f_*$}
    \label{fig:kernel-true}
    \end{figure}

The second experiment is another synthetic test on fitting random labels. We generate $\{x_i\}_{i=1}^{50}$ from isotropic Gaussian in $\mathbb{R}^5$, as $y_i$ takes $\pm1$ with equal chance. We select $m=200,500$, and $n=50,200$ to investigate those parameters' influence on the kernel $K_t$. 
We want to point out two observations. First, fixed $n$, we investigate over-parametrized models ($m=200, 500$ large). Shown from Fig.~\ref{fig:kernel-noise} along the row, the kernels for different $m$'s behave much alike. In other words, in the infinite neurons limit, the kernel will stabilize. Second, fixed $m$, we vary the number of samples $n$, to simulate different interpolation hardness. As seen from Fig.~\ref{fig:kernel-noise} along the column, the kernels and the convergence over time are distinct, reflecting the different difficulty of the interpolation. 

The third experiment (Fig.~\ref{fig:kernel-mnist}) is regression using the MNIST dataset with different sample size $n=50,200$. We hope to investigate the influence of sample size on the kernel matrix along the training process. For a larger sample size $N$, it takes longer to reach stationarity.

    \begin{figure}[ht!]
    \centering
    
    \begin{subfigure}[b]{0.4\textwidth}
            \includegraphics[width = \textwidth]{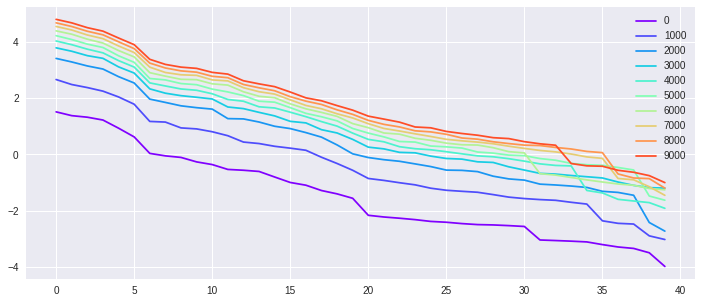}
            \caption{$N=50, m=200$}
            \label{fig:1a}
        \end{subfigure}
        ~
        \begin{subfigure}[b]{0.4\textwidth}
            \includegraphics[width = \textwidth]{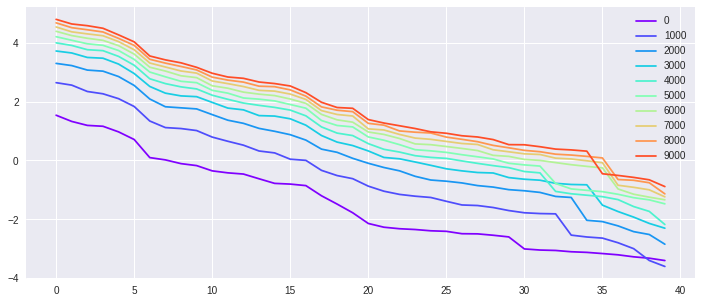}
            \caption{$N=50, m=500$}
            \label{fig:1b}
        \end{subfigure}
    
        \begin{subfigure}[b]{0.4\textwidth}
            \includegraphics[width = \textwidth]{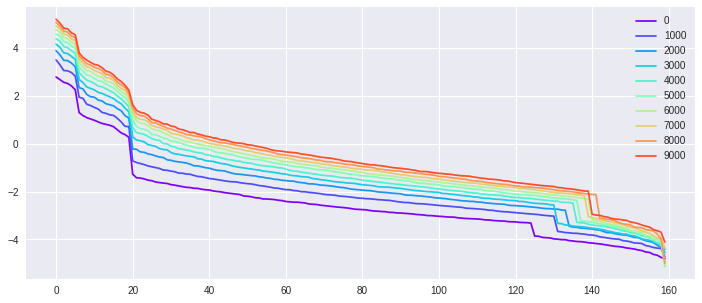}
            \caption{$N=200, m=200$}
            \label{fig:1c}
        \end{subfigure}
        ~
        \begin{subfigure}[b]{0.4\textwidth}
            \includegraphics[width = \textwidth]{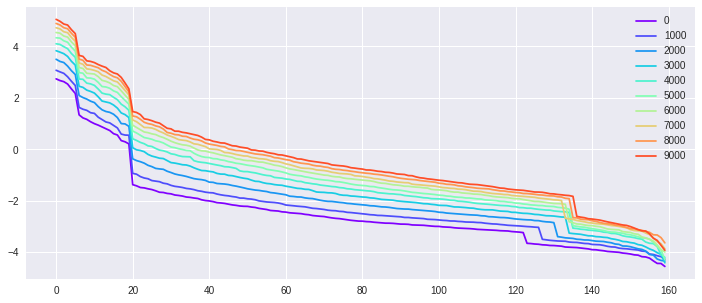}
            \caption{$N=200, m=500$}
            \label{fig:1d}
    \end{subfigure}
    
    \caption{\small Log of the sorted top $80\%$ eigenvalues of kernel matrix along training with random labels.}
    \label{fig:kernel-noise}
    \end{figure}

     \begin{figure}[ht!]
     \centering
     \begin{subfigure}[b]{0.3\textwidth}
             \centering
              \includegraphics[width = 0.8\textwidth]{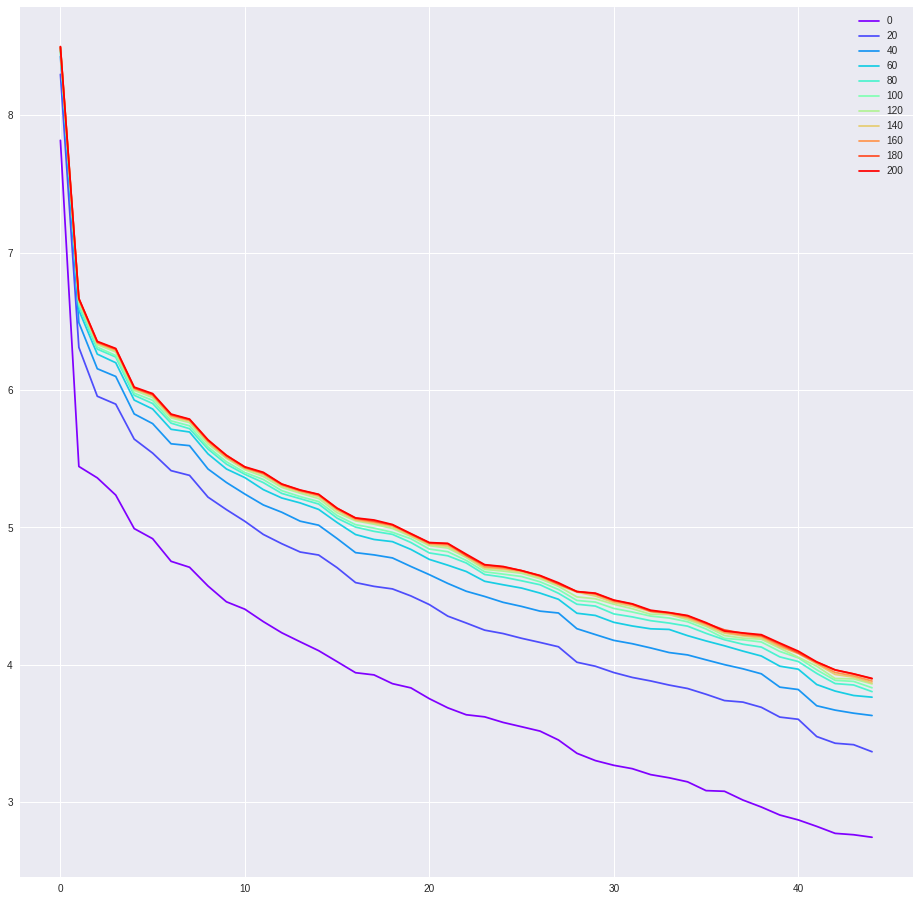}
              \caption{$N=50$}
              \label{fig:1a}
          \end{subfigure}
          ~
          \begin{subfigure}[b]{0.3\textwidth}
            \centering
              \includegraphics[width = 0.8\textwidth]{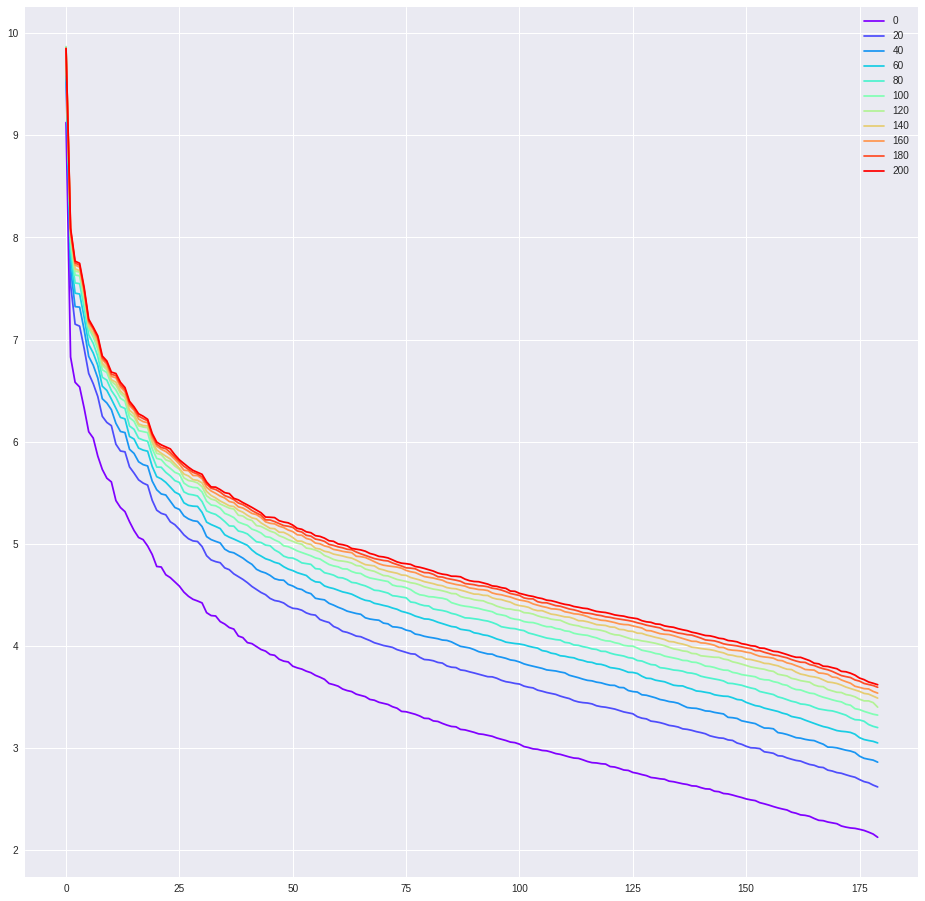}
              \caption{$N=200$}
              \label{fig:1b}
     \end{subfigure}
     \caption{\small Log of sorted top $90\%$ eigenvalues of kernel matrix along training process for mnist}
     \label{fig:kernel-mnist}
     \end{figure}

\section{Main Proofs}
\label{sec:pf}

    \begin{proof}[Proof of Theorem \ref{thm:proj-solution}]

        From the definition, we have $\mathcal{T}^* p\in\mathcal{H}_\infty$ for any $p \in L^2_{|\rho_\infty|}$, and $\mathcal{T}^*$ is a surjective mapping. 
         Suppose that $\widehat{g} \in \cH_\infty$ is a minimizer of \eqref{eq:rkhs-proj}, then we claim that for any $p \in L^2_{|\rho_\infty|}$, one must have
        \begin{align}
            \label{eq:normal-equation}
            \langle f_* - \widehat{g}, \mathcal{T}^* p \rangle_{\mu} = 0, ~~ \forall p \in L^2_{|\rho_\infty|}.
        \end{align}
        This claim can be seen from the following argument. Suppose not, then for $p$ that violates the above, construct
        \begin{align*}
            \widehat{g}_{\epsilon} = \widehat{g} + \epsilon \mathcal{T}^* p \in \mathcal{H}_\infty,
        \end{align*}
        we know
        \begin{align}
            \|f_* - \widehat{g}_{\epsilon} \|_{\mu}^2 = \|f_* - \widehat{g} \|_{\mu}^2 - 2\epsilon \langle f_* - \widehat{g}, \mathcal{T}^* p \rangle_{\mu} + \epsilon^2  \| \mathcal{T}^* p \|_{\mu}^2 .
        \end{align}
        For $\epsilon$ with the same sign as $\langle f_* - \widehat{g}, \mathcal{T}^* p \rangle_{\mu} \neq 0$ and small enough, one can see that $\|f_* - \widehat{g}_{\epsilon} \|_{\mu}^2 < \|f_* - \widehat{g} \|_{\mu}^2$ which validates that $\widehat{g}$ is a minimizer. 
        From the same argument, one can see that $\widehat{g}$ is a minimizer if and only if \eqref{eq:normal-equation} holds, in other words,
        \begin{align}
            \langle \mathcal{T} (f_* - \widehat{g}),  p \rangle_{|\rho_\infty|} = \langle f_* - \widehat{g}, \mathcal{T}^* p \rangle_{\mu} = 0
        \end{align}

        From PDE characterization \eqref{def:training-infinite} with ReLU activation, one knows that
        \begin{align*}
            V(\Theta) &= \mathbf{E}[\bd{y}\sigma(\bx^T\Theta)] = \E [ f_*(\bx)\sigma(\bx^T\Theta) ]\\
            U(\Theta,\tilde \Theta) &= -\mathbf{E}[\sigma(\bx^T\Theta)\sigma(\bx^T\tilde{\Theta})],
        \end{align*}
        and the expression for the velocity field
        \begin{align*}
            &\|\Theta\| \left( \nabla_{\Theta}V(\Theta)+ \nabla_{\Theta}\int U(\Theta,\tilde\Theta)\|\tilde\Theta\|\rho_t(d \tilde\Theta)\right) \\
            &= \|\Theta\| \left(\int f_*(x) x \mathbbm{1}_{x^T \Theta > 0} \mu(d x) - \int \int x \mathbbm{1}_{x^T \Theta > 0} \sigma(x^T \tilde\Theta) \| \tilde\Theta \| \rho_\infty(d\tilde\Theta)\mu(dx)  \right).
        \end{align*}
        We know that any stationary point $\left( \rho_{+,\infty}, \rho_{-,\infty} \right)$ has the following property \citep{mei2018mean}:
    \begin{align}\label{eq:fix-signed-point}
        \text{supp}(\rho_\infty) \subseteq \left\{ \Theta:  \int f_*(x) x \mathbbm{1}_{x^T \Theta > 0} \mu(d x) = \int \int x \mathbbm{1}_{x^T \Theta > 0} \sigma(x^T \tilde\Theta) \| \tilde\Theta \| \rho_\infty(d\tilde\Theta)\mu(dx) \right\}.
    \end{align}
    Multiplying both sides by $\|\Theta\|\Theta^T$ and recall the property of ReLU, the above condition implies that for all $\Theta \in \text{supp}(\rho_\infty)$, we have
    \begin{align}\label{eq:rkhs-gd-condition}
        \int f_*(x) \| \Theta \| \sigma(x^T \Theta) \mu(d x) = \int \int  \| \Theta \| \sigma(x^T \Theta) \sigma(x^T \tilde\Theta) \| \tilde\Theta \| \rho_\infty(d \tilde\Theta) \mu(dx).
    \end{align}
    One can see the stationary condition on $\rho_\infty$ (fixed points of the dynamics) \eqref{eq:rkhs-gd-condition} translates to 
    \begin{align}
        \label{eq:station-gd}
         \mathcal{T} f_* (\Theta) = \left( \mathcal{T}\mathcal{T}^\star \frac{d\rho_{\infty}}{d|\rho_{\infty}|} \right) (\Theta), ~~\forall \Theta \in \text{supp}(\rho_\infty).
    \end{align}
    Here the function $\frac{d\rho_{\infty}}{d|\rho_\infty|} $ is the Radon-Nikodym derivative. In addition, one can easily verify that, as $\rho_{\infty}$ has bounded total variation
    \begin{align*}
        \frac{d\rho_{\infty}}{d|\rho_\infty|} \in L^2_{|\rho_{\infty}|}. 
    \end{align*}
    
    Therefore, combining all the above, one knows that
    \begin{align*}
        f_{\infty}(x) = \int \|\Theta\|\sigma(x^T\Theta)\rho_{\infty}(d\Theta)= \mathcal{T}^\star\frac{d\rho_{\infty}}{d|\rho_\infty|} \in \mathcal{H}_\infty
    \end{align*}
    and that for any $p\in L^2_{|\rho_\infty|}$
    \begin{align}
        \langle f_* - f_{\infty}, \mathcal{T}^* p \rangle_{\mu}  &= \langle \mathcal{T} (f_* - f_{\infty}),  p \rangle_{|\rho_\infty|} \\
        & = \left\langle \mathcal{T} f_* - \mathcal{T}\mathcal{T}^\star \frac{d\rho_{\infty}}{d|\rho_{\infty}|}, p \right\rangle_{|\rho_\infty|} \\
        & = \int \left(\mathcal{T} f_* - \mathcal{T}\mathcal{T}^\star \frac{d\rho_{\infty}}{d|\rho_{\infty}|} \right) (\Theta) |\rho_{\infty}|(d\Theta) =0 \quad \quad \text{due to \eqref{eq:station-gd}}
    \end{align}
 We have proved that $f_\infty = \mathcal{T}^\star\frac{d\rho_{\infty}}{d|\rho_\infty|}$ satisfies normal condition for being a minimizer to \eqref{eq:rkhs-proj}.
    \end{proof}

\begin{proof}[Proof of Proposition \ref{prop:kernel-compare}]

    The first inequality in \eqref{ieq:pos-compare} is trivial. For the second inequality, it suffices to show for any $c = (c_1,\dots,c_p)^T$, $x_1,\dots,x_p$, $\Theta$, we have
    \begin{align}
         \sum_{i,j} c_ic_j\|\Theta\|^2x_i^Tx_j\mathbbm{1}_{x_i^T \Theta > 0} \mathbbm{1}_{x_j^T \Theta > 0}  \geq \sum_{i,j}c_ic_j\sigma(x_i^T\Theta)\sigma(x_j^T\Theta)
    \end{align}
    The RHS equals
    \begin{align}
    &\quad\,\,\sum_{i,j}c_ic_jx_i^T\Theta x_j^T\Theta \mathbbm{1}_{x_i^T \Theta > 0} \mathbbm{1}_{x_j^T \Theta > 0} = \left( \sum_i c_i x_i^T\Theta \mathbbm{1}_{x_i^T \Theta > 0}\right)^2 \\
    &= \langle \Theta, \sum_i c_ix_i \mathbbm{1}_{x_i^T \Theta > 0}\rangle^2 \leq \|\Theta\|^2 \left\|\sum_ic_i x_i\mathbbm{1}_{x_i^T \Theta > 0}\right\|^2 = \text{LHS}.
    \end{align}
    For the last inequality, with compactness condition on $\rho_\infty$, we have
    \begin{align}
    \sum_{i,j}c_ic_j\int \|\Theta\|^2 \sigma(x_i^T\Theta)\sigma(x_j^T\Theta) |\rho_{\infty}|(\Theta)\leq D^2\sum_{i,j}c_ic_j \int \sigma(x_i^T\Theta)\sigma(x_j^T\Theta)|\rho_{\infty}|(\Theta).
    \end{align}
    Therefore, $D^2K_\infty^{(1)} \succeq H_\infty$.
    
\end{proof}

\begin{proof}[Proof of Theorem~\ref{thm:gap-decomposition}]
    
    Let us rewrite Corollary~\ref{cor:signed-kernel} into
    \begin{align}
        \label{eq:GD-rkhs}
         \frac{d}{dt} \| \Delta_t \|_\mu^2 = - 2 \langle \Delta_t, \mathcal{K}_t  \Delta_t \rangle_{\mu} = - 2 \| \mathcal{K}_t^{1/2} \Delta_t \|_{\mu}^2,
    \end{align}
    here $\mathcal{K}_t: L^2_\mu(x) \rightarrow L^2_\mu(x)$ denotes the integral operator associated with $K_t$,
    \begin{align}
        (\mathcal{K}_t f)(x) := \int  K_t( x, \tilde{x} ) f(\tilde{x})\mu(d\tilde{x}).
    \end{align}

    From \eqref{eq:GD-rkhs}
    \begin{align}
        \frac{d}{dt} \| \Delta_\infty \|_\mu^2 =  - 2 \| \mathcal{K}_\infty^{1/2} \Delta_\infty \|_{\mu}^2, 
    \end{align}
    we know that the RHS equals zero implies
    \begin{align*}
        \| \mathcal{K}_\infty^{1/2} \Delta_\infty \|_{\mu}^2 &= 0 \\
        \langle \mathcal{K}_\infty^{1/2} g,  \Delta_\infty \rangle_\mu = \langle g, \mathcal{K}_\infty^{1/2} \Delta_\infty \rangle_\mu &= 0, \quad \forall g \in L^2_\mu.
    \end{align*}
    This further implies $\Delta_\infty$ lies in the kernel of RKHS $\mathcal{K}_\infty$ as $\mathcal{K}_\infty = \{ \mathcal{K}_\infty^{1/2} g: g \in L^2_\mu\}$.
\end{proof}

\begin{proof}[Proof of Proposition~\ref{prop:interpolation}]
    The gradients on the original parameters are,
    \begin{align*}
        \frac{dw_j(t)}{dt} = -\widehat{\mathbf{E}}\left[\frac{\partial \ell(\by,f_t)}{\partial f}\sigma(\bx^Tu_j(t))\right] - \frac{1}{m}\lambda w_j(t),\\
        \frac{du_j(t)}{dt} = -\widehat{\mathbf{E}}\left[\frac{\partial \ell(\by,f_t)}{\partial f}w_j(t)\mathbbm{1}_{\bx^Tu_j(t)\geq0}\bx\right] - \frac{1}{m}\lambda u_j(t).
    \end{align*}
    Clearly, on the rescaled parameter, the following holds
 \begin{align*}
    \frac{d\theta_{j}}{dt} &= \sqrt{m} \frac{d w_{j}}{dt}= -\widehat{\mathbf{E}}\left[(f_t(\bx) - \by) \sigma(\bx^T\Theta_j(t))\right] - \frac{1}{m} \lambda \theta_{j}, \\
    \frac{d\Theta_{j}}{dt} &= \sqrt{m} \frac{d u_{j}}{dt}= -\widehat{\mathbf{E}}\left[(f_t(\bx) - \by) \theta_{j}\mathbbm{1}_{\bx^T\Theta_{j}\geq0}\bx\right] - \frac{1}{m} \lambda \Theta_j.
\end{align*} 
Multiply the first equation by $\theta_j$, and the second equation by $\theta_j^T$, take the difference, we can verify that    
\begin{align}
    \frac{d ( \theta_j^2 - \| \Theta_j \|^2)}{dt}=  - \lambda/m ( \theta_j^2 - \| \Theta_j \|^2) \\
    \theta_j(t)^2 - \| \Theta_j(t) \|^2 = \left( \theta_j(0)^2 - \| \Theta_j(0) \|^2 \right) \exp(-\lambda t/m ) \enspace. \label{eq:balanced}
\end{align}
Therefore the balanced condition still holds at stationarity for arbitrary bounded initialization,
\begin{align*}
    \theta_j(\infty)^2 - \| \Theta_j(\infty) \|^2 = 0, \forall j.
\end{align*}

Now the optimality condition for the velocity field reads the following, for any $\Theta_j(\infty) \in {\rm supp}(\widehat{\rho}_\infty^{\lambda})$ (we abbreviate the $\infty$ in the following display, note $\tilde{\theta}(\infty)$ corresponds to the second layer weights w.r.t. to $\Tilde{\Theta}(\infty)$)
\begin{align*}
    \theta_j \widehat{\E}[\by \mathbbm{1}_{\bx^T\Theta_{j}\geq0}\bx] = \theta_j \int \tilde{\theta} \widehat{\E}[\mathbbm{1}_{\bx^T\Theta_{j}\geq0}\bx \sigma(\bx^T \tilde{\Theta}) ] |\widehat{\rho}_\infty^{\lambda}|(d \tilde{\Theta}) + \frac{1}{m}\lambda \Theta_j \\
    \text{Multiply by $\Theta_j^T$,} ~~ \theta_j \widehat{\E}[\by \sigma(\bx^T \Theta_{j})] = \int \theta_j \tilde{\theta} \widehat{\E}[ \sigma(\bx^T \Theta_{j}) \sigma(\bx^T \tilde{\Theta}) ] |\widehat{\rho}_\infty^{\lambda}|(d \tilde{\Theta}) + \frac{\lambda}{m} \| \Theta_j \|^2 \\
    \theta_j  \widehat{\E}[\by \sigma(\bx^T \Theta_{j})] = \int \theta_j \tilde{\theta} \widehat{\E}[ \sigma(\bx^T \Theta_{j}) \sigma(\bx^T \tilde{\Theta}) ] |\widehat{\rho}_\infty^{\lambda}|(d \tilde{\Theta}) + \lambda  \int \theta_j \tilde{\theta} \mathbbm{1}_{\tilde{\Theta} = \Theta_j}  |\widehat{\rho}_\infty^{\lambda}|(d \tilde{\Theta})
\end{align*}
where the last step uses the condition $\theta_j^2(\infty) = \| \Theta_j(\infty) \|^2$, and the fact that $|\widehat{\rho}_\infty^{\lambda}| = \frac{1}{m} \sum_{j=1}^m \delta_{\Theta_j}$ and
\begin{align*}
    \int \theta_j \tilde{\theta} \mathbbm{1}_{\tilde{\Theta} = \Theta_j}  |\widehat{\rho}_\infty^{\lambda}|(d \tilde{\Theta}) = \frac{1}{m} \theta_j^2 = \frac{1}{m} \| \Theta_j \|^2.
\end{align*}

In the matrix form, where $\widehat{\rho}^{\lambda}_\infty = \frac{1}{m}\sum_{l\in [m]} {\rm sgn}(\theta_l) \delta_{\Theta_l} $
\begin{align*}
    \sum_{l \in [m]} \left[ n \widehat{U}(\Theta_j, \Theta_l) + n \lambda \mathbb{I}_{\Theta_l = \Theta_j} \right] \theta_l/m = \sigma(\Theta_j^T X) Y .
\end{align*}
Therefore, define $\sigma(x^T \Xi) := [\sigma(x^T\Theta_1) \ldots, \sigma(x^T\Theta_m)] \in \mathbb{R}^{1 \times m}$, and $\sigma(X \Xi) := [\sigma(x_1^T \Xi)^T,\ldots, \sigma(x_n^T \Xi)^T] \in \mathbb{R}^{m \times n}$, we have
\begin{align*}
    \widehat{f}^{{\rm nn}, \lambda}_{\infty} (x) &= \sum_{l \in [m]}  \theta_l \sigma(x^T \Theta_l) / m  = \sigma(x^T \Xi) [ \sigma(X \Xi)\sigma(X \Xi)^T + n\lambda I_m]^{-1} \sigma(X \Xi) Y \\
    & = \sigma(x^T \Xi) \sigma(X \Xi) [ \sigma(X \Xi)^T \sigma(X \Xi) + n\lambda I_n]^{-1} Y \\
    % & = m H_{\infty}^{\lambda}(x, X) \left[  m H_{\infty}^{\lambda} (X, X) + n \lambda I_n\right]^{-1} Y \\
    &= \widehat{H}_{\infty}^{\lambda}(x, X) \left[  \widehat{H}_{\infty}^{\lambda} (X, X) + n/m \cdot \lambda I_n\right]^{-1} Y \enspace.
\end{align*}
The last line follows as $\widehat{H}^{\lambda}(x, \tilde{x}):= \int \sigma(x^T \Theta) \sigma(\tilde{x}^T \Theta) |\widehat{\rho}^{\lambda}_\infty|(d\Theta) = 1/m \cdot \sigma(x^T \Xi) \sigma(\tilde{x}^T \Xi)^T$.

\end{proof}

\begin{proof}[Proof of Proposition~\ref{prop:adaptive-generalization}]
    \begin{align*}
        \| \lim_{\lambda \rightarrow 0} \widehat{f}^{{\rm nn}, \lambda}_{\infty}- f_*\|^2_{\mu} & \precsim \| \widehat{f}^{\rm rkhs}_{\infty} - f^{\rm rkhs}_{\infty}  \|_\mu^2 + \| f^{\rm rkhs}_{\infty}  - f_* \|_{\mu}^2 \\
        \| f^{\rm rkhs}_{\infty}  - f_* \|_{\mu}^2  & = \| H_{\infty}(x, X) H_{\infty}(X, X)^{+} [Y - f_*(X) + f_*(X) - f_\infty(X) + f_\infty(X)] - f_*(x)\|_\mu^2 \\
        & \precsim \| H_{\infty}(x, X) H_{\infty}(X, X)^{+} (Y - f_*(X)) \|_{\mu}^2 \\
        & \quad \quad + \E_{\bx \sim \mu} \langle H_{\infty}(X, X)^{+}H_{\infty}(X, \bx) , f_*(X) - f_\infty(X)\rangle^2 \\
        & \quad \quad + \| H_{\infty}(x, X) H_{\infty}(X, X)^{+} f_\infty(X) - f_\infty(x) \|_{\mu}^2 + \|f_\infty(x)  - f_*(x) \|_{\mu}^2 \enspace.
    \end{align*}
    For the first term, we can upper bound by $\sigma^2 \E_{\bx \sim \mu} \| H_\infty(X, X)^{-1} H_\infty(X, \bx)\|^2$. The second term can be upper bounded by 
    \begin{align*}
        \E_{\bx \sim \mu} \| H_\infty(X, X)^{-1} H_\infty(X, \bx)\|^2 \cdot n \|f_\infty(x)  - f_*(x) \|_{\hat{\mu}}^2.
    \end{align*}
    Proof is completed.
\end{proof}

\section*{Acknowledgement}
    We thank Maxim Raginsky for pointing out relevant references, and for providing helpful discussion. 

\bibliographystyle{Chicago}
\bibliography{ref}

\begin{thebibliography}{39}
\providecommand{\natexlab}[1]{#1}
\providecommand{\url}[1]{\texttt{#1}}
\expandafter\ifx\csname urlstyle\endcsname\relax
  \providecommand{\doi}[1]{doi: #1}\else
  \providecommand{\doi}{doi: \begingroup \urlstyle{rm}\Url}\fi

\bibitem[Anthony and Bartlett(2009)]{anthony2009neural}
Martin Anthony and Peter~L Bartlett.
\newblock \emph{Neural network learning: Theoretical foundations}.
\newblock cambridge university press, 2009.

\bibitem[Bach(2017)]{bach2017breaking}
Francis Bach.
\newblock Breaking the curse of dimensionality with convex neural networks.
\newblock \emph{Journal of Machine Learning Research}, 18\penalty0
  (19):\penalty0 1--53, 2017.

\bibitem[Barron et~al.(2008)Barron, Cohen, Dahmen, DeVore,
  et~al.]{barron2008approximation}
Andrew~R Barron, Albert Cohen, Wolfgang Dahmen, Ronald~A DeVore, et~al.
\newblock Approximation and learning by greedy algorithms.
\newblock \emph{The annals of statistics}, 36\penalty0 (1):\penalty0 64--94,
  2008.

\bibitem[Belkin et~al.(2018{\natexlab{a}})Belkin, Hsu, Ma, and
  Mandal]{belkin2018reconciling}
Mikhail Belkin, Daniel Hsu, Siyuan Ma, and Soumik Mandal.
\newblock Reconciling modern machine learning and the bias-variance trade-off.
\newblock \emph{arXiv preprint arXiv:1812.11118}, 2018{\natexlab{a}}.

\bibitem[Belkin et~al.(2018{\natexlab{b}})Belkin, Ma, and
  Mandal]{belkin2018understand}
Mikhail Belkin, Siyuan Ma, and Soumik Mandal.
\newblock To understand deep learning we need to understand kernel learning.
\newblock \emph{arXiv preprint arXiv:1802.01396}, 2018{\natexlab{b}}.

\bibitem[Bengio et~al.(2006)Bengio, Roux, Vincent, Delalleau, and
  Marcotte]{bengio2006convex}
Yoshua Bengio, Nicolas~L Roux, Pascal Vincent, Olivier Delalleau, and Patrice
  Marcotte.
\newblock Convex neural networks.
\newblock In \emph{Advances in neural information processing systems}, pages
  123--130, 2006.

\bibitem[Casselman(2014)]{casselman2014essays}
Bill Casselman.
\newblock Essays in analysis.
\newblock 2014.

\bibitem[Chizat and Bach(2018)]{chizat2018note}
Lenaic Chizat and Francis Bach.
\newblock A note on lazy training in supervised differentiable programming.
\newblock \emph{arXiv preprint arXiv:1812.07956}, 2018.

\bibitem[Cho and Saul(2009)]{cho2009kernel}
Youngmin Cho and Lawrence~K Saul.
\newblock Kernel methods for deep learning.
\newblock In \emph{Advances in neural information processing systems}, pages
  342--350, 2009.

\bibitem[Cybenko(1989)]{cybenko1989approximation}
George Cybenko.
\newblock Approximation by superpositions of a sigmoidal function.
\newblock \emph{Mathematics of control, signals and systems}, 2\penalty0
  (4):\penalty0 303--314, 1989.

\bibitem[Daniely et~al.(2016)Daniely, Frostig, and Singer]{daniely2016toward}
Amit Daniely, Roy Frostig, and Yoram Singer.
\newblock Toward deeper understanding of neural networks: The power of
  initialization and a dual view on expressivity.
\newblock In \emph{Advances In Neural Information Processing Systems}, pages
  2253--2261, 2016.

\bibitem[Du et~al.(2018)Du, Zhai, Poczos, and Singh]{du2018gradient}
Simon~S Du, Xiyu Zhai, Barnabas Poczos, and Aarti Singh.
\newblock Gradient descent provably optimizes over-parameterized neural
  networks.
\newblock \emph{arXiv preprint arXiv:1810.02054}, 2018.

\bibitem[Farrell et~al.(2018)Farrell, Liang, and Misra]{farrell2018deep}
Max~H Farrell, Tengyuan Liang, and Sanjog Misra.
\newblock Deep neural networks for estimation and inference: Application to
  causal effects and other semiparametric estimands.
\newblock \emph{arXiv preprint arXiv:1809.09953}, 2018.

\bibitem[Geman and Hwang(1982)]{geman1982nonparametric}
Stuart Geman and Chii-Ruey Hwang.
\newblock Nonparametric maximum likelihood estimation by the method of sieves.
\newblock \emph{The Annals of Statistics}, pages 401--414, 1982.

\bibitem[Ghorbani et~al.(2019)Ghorbani, Mei, Misiakiewicz, and
  Montanari]{ghorbani2019linearized}
Behrooz Ghorbani, Song Mei, Theodor Misiakiewicz, and Andrea Montanari.
\newblock Linearized two-layers neural networks in high dimension.
\newblock \emph{arXiv preprint arXiv:1904.12191}, 2019.

\bibitem[Hastie et~al.(2019)Hastie, Montanari, Rosset, and
  Tibshirani]{hastie2019surprises}
Trevor Hastie, Andrea Montanari, Saharon Rosset, and Ryan~J Tibshirani.
\newblock Surprises in high-dimensional ridgeless least squares interpolation.
\newblock \emph{arXiv preprint arXiv:1903.08560}, 2019.

\bibitem[Hornik et~al.(1989)Hornik, Stinchcombe, and
  White]{hornik1989multilayer}
Kurt Hornik, Maxwell Stinchcombe, and Halbert White.
\newblock Multilayer feedforward networks are universal approximators.
\newblock \emph{Neural networks}, 2\penalty0 (5):\penalty0 359--366, 1989.

\bibitem[Huang et~al.(2008)Huang, Cheang, and Barron]{huang2008risk}
Cong Huang, Gerald~HL Cheang, and Andrew~R Barron.
\newblock \emph{Risk of penalized least squares, greedy selection and
  l1-penalization for flexible function libraries}.
\newblock PhD thesis, Yale University, 2008.

\bibitem[Jacot et~al.(2018)Jacot, Gabriel, and Hongler]{jacot2018neural}
Arthur Jacot, Franck Gabriel, and Cl{\'e}ment Hongler.
\newblock Neural tangent kernel: Convergence and generalization in neural
  networks.
\newblock In \emph{Advances in neural information processing systems}, pages
  8571--8580, 2018.

\bibitem[Jones(1992)]{jones1992simple}
Lee~K Jones.
\newblock A simple lemma on greedy approximation in hilbert space and
  convergence rates for projection pursuit regression and neural network
  training.
\newblock \emph{The annals of Statistics}, 20\penalty0 (1):\penalty0 608--613,
  1992.

\bibitem[Jordan et~al.(1998)Jordan, Kinderlehrer, and
  Otto]{jordan1998variational}
Richard Jordan, David Kinderlehrer, and Felix Otto.
\newblock The variational formulation of the fokker--planck equation.
\newblock \emph{SIAM journal on mathematical analysis}, 29\penalty0
  (1):\penalty0 1--17, 1998.

\bibitem[Koehler and Risteski(2018)]{koehler2018representational}
Frederic Koehler and Andrej Risteski.
\newblock Representational power of relu networks and polynomial kernels:
  Beyond worst-case analysis.
\newblock \emph{arXiv preprint arXiv:1805.11405}, 2018.

\bibitem[Liang and Rakhlin(2018)]{liang2018just}
Tengyuan Liang and Alexander Rakhlin.
\newblock Just interpolate: Kernel" ridgeless" regression can generalize.
\newblock \emph{The Annals of Statistics, to appear}, 2018.

\bibitem[Ma et~al.(2017)Ma, Bassily, and Belkin]{ma2017power}
Siyuan Ma, Raef Bassily, and Mikhail Belkin.
\newblock The power of interpolation: Understanding the effectiveness of sgd in
  modern over-parametrized learning.
\newblock \emph{arXiv preprint arXiv:1712.06559}, 2017.

\bibitem[Maennel et~al.(2018)Maennel, Bousquet, and Gelly]{maennel2018gradient}
Hartmut Maennel, Olivier Bousquet, and Sylvain Gelly.
\newblock Gradient descent quantizes relu network features.
\newblock \emph{arXiv preprint arXiv:1803.08367}, 2018.

\bibitem[Mei et~al.(2018)Mei, Montanari, and Nguyen]{mei2018mean}
Song Mei, Andrea Montanari, and Phan-Minh Nguyen.
\newblock A mean field view of the landscape of two-layers neural networks.
\newblock \emph{arXiv preprint arXiv:1804.06561}, 2018.

\bibitem[Niyogi and Girosi(1996)]{niyogi1996relationship}
Partha Niyogi and Federico Girosi.
\newblock On the relationship between generalization error, hypothesis
  complexity, and sample complexity for radial basis functions.
\newblock \emph{Neural Computation}, 8\penalty0 (4):\penalty0 819--842, 1996.

\bibitem[Park and Sandberg(1991)]{park1991universal}
Jooyoung Park and Irwin~W Sandberg.
\newblock Universal approximation using radial-basis-function networks.
\newblock \emph{Neural computation}, 3\penalty0 (2):\penalty0 246--257, 1991.

\bibitem[Poggio et~al.(2017)Poggio, Mhaskar, Rosasco, Miranda, and
  Liao]{poggio2017and}
Tomaso Poggio, Hrushikesh Mhaskar, Lorenzo Rosasco, Brando Miranda, and Qianli
  Liao.
\newblock Why and when can deep-but not shallow-networks avoid the curse of
  dimensionality: a review.
\newblock \emph{International Journal of Automation and Computing}, 14\penalty0
  (5):\penalty0 503--519, 2017.

\bibitem[Rahimi and Recht(2008)]{rahimi2008random}
Ali Rahimi and Benjamin Recht.
\newblock Random features for large-scale kernel machines.
\newblock In \emph{Advances in neural information processing systems}, pages
  1177--1184, 2008.

\bibitem[Rahimi and Recht(2009)]{rahimi2009weighted}
Ali Rahimi and Benjamin Recht.
\newblock Weighted sums of random kitchen sinks: Replacing minimization with
  randomization in learning.
\newblock In \emph{Advances in neural information processing systems}, pages
  1313--1320, 2009.

\bibitem[Rakhlin and Zhai(2018)]{rakhlin2018consistency}
Alexander Rakhlin and Xiyu Zhai.
\newblock Consistency of interpolation with laplace kernels is a
  high-dimensional phenomenon.
\newblock \emph{arXiv preprint arXiv:1812.11167}, 2018.

\bibitem[Rotskoff and Vanden-Eijnden(2018)]{rotskoff2018neural}
Grant~M Rotskoff and Eric Vanden-Eijnden.
\newblock Neural networks as interacting particle systems: Asymptotic convexity
  of the loss landscape and universal scaling of the approximation error.
\newblock \emph{arXiv preprint arXiv:1805.00915}, 2018.

\bibitem[Rudi and Rosasco(2017)]{rudi2017generalization}
Alessandro Rudi and Lorenzo Rosasco.
\newblock Generalization properties of learning with random features.
\newblock In \emph{Advances in Neural Information Processing Systems}, pages
  3215--3225, 2017.

\bibitem[Sirignano and Spiliopoulos(2019)]{sirignano2019mean}
Justin Sirignano and Konstantinos Spiliopoulos.
\newblock Mean field analysis of neural networks: A central limit theorem.
\newblock \emph{Stochastic Processes and their Applications}, 2019.

\bibitem[Stone(1980)]{stone1980optimal}
Charles~J Stone.
\newblock Optimal rates of convergence for nonparametric estimators.
\newblock \emph{The annals of Statistics}, pages 1348--1360, 1980.

\bibitem[Vapnik(1998)]{vapnik1998statistical}
Vladimir Vapnik.
\newblock \emph{Statistical learning theory. 1998}, volume~3.
\newblock Wiley, New York, 1998.

\bibitem[Wahba(1990)]{wahba1990spline}
Grace Wahba.
\newblock \emph{Spline models for observational data}, volume~59.
\newblock Siam, 1990.

\bibitem[Zhang et~al.(2016)Zhang, Bengio, Hardt, Recht, and
  Vinyals]{zhang2016understanding}
Chiyuan Zhang, Samy Bengio, Moritz Hardt, Benjamin Recht, and Oriol Vinyals.
\newblock Understanding deep learning requires rethinking generalization.
\newblock \emph{arXiv preprint arXiv:1611.03530}, 2016.

\end{thebibliography}

%%%%%%%%%%%%%%%%%%%%%%%%%%%%%%%%%%%%%%%%%%%%%%%%%%%%%%%%%%%%%%%%%%%%%%%%%%
%%%%%%%%%%%%%%%%%%%%%%%%%%%%%%%%%%%%%%%%%%%%%%%%%%%%%%%%%%%%%%%%%%%%%%%%%%
\newpage
\appendix

\clearpage
\setcounter{page}{1}

\section{Appendix}

\subsection{Supporting Results}

\begin{proof}[Proof of Lemma~\ref{lem:pdf-sign-measure}]
	Let's first show that in the infinite neuron limit $m \rightarrow \infty$, $\rho_{+, t}, \rho_{-, t}$ are properly defined. Therefore Eqn.~\eqref{def:training-infinite} in the above theorem also characterize the distribution dynamics for infinite neurons NN, induced by gradient flow training. For simplicity, we assume the initialization $\rho_{+,0},\rho_{-,0}$ with bounded support. We add the superscript $m$, $\rho_{+,t}^m, \rho_{-,t}^m,\rho_{t}^m$ to \eqref{eq:sign-measure} to indicate their dependence on $m$. Consider $\nabla_\Theta V$, $\nabla_{\Theta}U(\Theta,\tilde \Theta)$ in \eqref{eq:velocity} are bounded and uniform Lipchitz continuous as in \cite[A3]{mei2018mean}. With the same proof as in \cite[Theorem 3]{mei2018mean}, one can show that with $m\rightarrow \infty$, the initial distribution $\rho^m_0 \xrightarrow{d} \tilde \rho_0 = \rho_{+,0} - \rho_{-,0}$ by law of large number, and by the solution's continuity depending on the initial value. Therefore we have $\rho_t^m \xrightarrow{d} \rho_t$ as $m\rightarrow \infty$ well defined.

The velocity of a particle $\Theta$ in the positive part as a rewrite of \eqref{def:training-rescale_1}-\eqref{def:training-rescale_4} is
\begin{align}
	\mathcal{V}(\Theta,\rho_t) = \|\Theta\| \left( \nabla _{\Theta}V(\Theta)+ \nabla_{\Theta}\int U(\Theta,\tilde\Theta)\|\tilde\Theta\|\rho_t(d \tilde\Theta)\right),
\end{align}
resp. for the negative part and \eqref{def:training-rescale_4}, we have
\[ -\mathcal{V}(\Theta,\rho_t) =  -\|\Theta\| \left( \nabla_{\Theta} V(\Theta)+ \nabla_{\Theta}\int U(\Theta,\tilde\Theta)\|\tilde\Theta\|\rho_t(d \tilde\Theta)\right).\]
Given the velocity of particle, we have the transport equation for gradient flow,
\begin{align*}
	\partial_t\rho_{+,t} = - \nabla_{\Theta}\cdot\left(\rho_{+,t}\cdot\mathcal{V}(\Theta,\rho_t)\right),\\
	\partial_t\rho_{-,t} = - \nabla_{\Theta}\cdot\left(-\rho_{-,t}\cdot\mathcal{V}(\Theta,\rho_t)\right).
\end{align*}
To see this, recall the definition of weak derivative $\partial_t \rho_t$: for any bounded smooth function $g$, $\partial_t \rho_t$ is defined in the following sense
\begin{align}
	d\cdot\int g \rho_t = -\int g \partial_t \rho_t \cdot dt.
\end{align}
We take any bounded smooth function $g(\Theta)$, given the velocity of $\Theta$'s , then we have 
\begin{align}
 -\int g \partial_t \rho_t \cdot dt = d\cdot \int g(\Theta)\rho_{+,t}(\Theta) = \int \nabla g(\Theta) \cdot \mathcal{V} (\Theta,\rho_t) \rho_{+,t}(\Theta) \cdot dt,
\end{align}
and $\rho_{-,t}$ correspondingly. By the weak derivative, we get the above PDE.
We use the above dynamic description as the training process for infinite neuron NN. Plug above equation into $\rho_t = \rho_{+,t}-\rho_{-,t}$ and $|\rho_t| = \rho_{+,t}+\rho_{-,t}$, we get
\begin{align} 
	\partial_t\rho_t(\Theta) = - \nabla_{\Theta} \cdot\left(|\rho_t|(\Theta) \mathcal{V}(\Theta, \rho_t)\right), \nonumber\\
	\partial_t|\rho_t|(\Theta) = - \nabla_\Theta \cdot\left(\rho_t(\Theta) \mathcal{V}(\Theta, \rho_t) \right).\label{eq:pde_sign_char}
\end{align}
\end{proof}

\begin{proof}[Proof of Proposition~\ref{lem:balance}]
	It suffices to show $\theta^2_{+,i}(t) = \|\Theta_{+,i}(t)\|_2^2$ and resp. $\theta^2_{-i}(t)=\|\Theta_{-,i}(t)\|_2^2$. By our path dynamics, we have
	\begin{align}
		\frac{d\theta^2_{+,i}}{dt} &= 2\theta_{+,i}\frac{d\theta_{+,i}}{dt} = -2\mathbf{E}_{\bd{z}}\left[\frac{\partial \ell(\by,f(\bx))}{\partial f}\theta_{+,i}\sigma(\bx^T\Theta_{+,i})\right],\\
		\frac{d\|\Theta_{+,i}\|_2^2}{dt}&=2\Theta_{+,i}^T\frac{d\Theta_{+,i}}{dt} = -2\mathbf{E}_{\bd{z}} \left[\frac{\partial \ell(\by,f(\bx))}{\partial f}\theta_{+,i}\mathbbm{1}_{\bx^T\Theta_{+,i}\geq0}\bx^T\Theta_{+,i}\right] = \frac{d\theta_{+,i}^2}{dt}.
	\end{align}
	Thus, by the initialization, we have $\theta_{+,i}(t) = \|\Theta_{+,i}(t)\|$,  and resp. $\theta_{-,i}(t) = -\|\Theta_{-,i}(t)\|$.
\end{proof}

\begin{proposition}[No sign change]
		    \label{lem:no-change-sign}
		    For the training process \eqref{eq:training_2} for problem \eqref{appr_prob} with NN \eqref{nn}, once $w_j(t)$ and $u_j(t)$ hit zero at $t_0$, for $t>t_0$ at least there exists a solution that can be viewed as training without the $j$-th neuron.
\end{proposition}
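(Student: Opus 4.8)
The plan is to construct the asserted continuation explicitly and then verify it: freeze the $j$-th neuron at the origin and let the other $m-1$ neurons run the gradient flow of the reduced network, then check that this configuration satisfies the full coupled system \eqref{eq:training_2} on $[t_0,\infty)$. The conceptual point is that, once $w_j$ and $u_j$ both vanish, the pair $(w_j,u_j)=(0,0)$ is an equilibrium of the $j$-th subsystem \emph{and} the $j$-th neuron contributes nothing to $f_t$, so it decouples cleanly from the rest of the dynamics.

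First I would set up the reduced flow: assuming $w_j(t_0)=0$ and $u_j(t_0)=0$, let $(\tilde w_k,\tilde u_k)_{k\in[m]\setminus\{j\}}$ solve \eqref{eq:training_2} for the $(m-1)$-neuron network $\tilde f_t(x)=\sum_{k\neq j}\tilde w_k(t)\sigma(x^T\tilde u_k(t))$, started at $t_0$ from the data $(w_k(t_0),u_k(t_0))_{k\neq j}$; existence is inherited from whatever standing assumptions guarantee existence for the $m$-neuron flow. Then I would define, on $[t_0,\infty)$, the candidate $w_j(t)\equiv 0$, $u_j(t)\equiv 0$, and $(w_k(t),u_k(t))=(\tilde w_k(t),\tilde u_k(t))$ for $k\neq j$, glued continuously at $t_0$ to the given trajectory on $[0,t_0]$. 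The verification is a one-line computation for each of the $2m$ equations: since $u_j\equiv0$ forces $\sigma(\bx^Tu_j(t))=\sigma(0)=0$, the $j$-th summand of $f_t$ is identically zero, hence $f_t=\tilde f_t$ and $\partial\ell(\by,f_t)/\partial f=\tilde f_t(\bx)-\by$ along the candidate; consequently the $k\neq j$ equations of \eqref{eq:training_2} become exactly the reduced-network equations that $(\tilde w_k,\tilde u_k)$ already satisfies, while $\tfrac{d}{dt}w_j=-\E_{\bd z}[(\tilde f_t(\bx)-\by)\sigma(\bx^Tu_j(t))]=0$ and $\tfrac{d}{dt}u_j=-\E_{\bd z}[(\tilde f_t(\bx)-\by)\,w_j(t)\,\mathbbm{1}_{\bx^Tu_j(t)\geq0}\,\bx]=0$, the indicator being irrelevant since it multiplies $w_j=0$. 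Thus the candidate is a valid solution on $[t_0,\infty)$, and along it $f_t=\tilde f_t$, i.e.\ training proceeds as if the $j$-th neuron were absent.

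I expect the only real difficulty to be bookkeeping around the non-smoothness of the model rather than anything analytically deep. Specifically: (a) \eqref{eq:training_2} is non-Lipschitz at configurations with $u_j=0$ (the ReLU kink), so forward solutions through such a state are generally not unique --- other branches may leave $(0,0)$ --- which is exactly why the statement asserts only ``at least there exists a solution''; I would state this scope explicitly. (b) One should fix the sense in which ``solution'' is meant (absolutely continuous, satisfying \eqref{eq:training_2} for a.e.\ $t$) so that gluing at $t_0$, where the derivative may jump, is legitimate. (c) It is worth recording why both $w_j$ and $u_j$ must reach zero: if $u_j=0$ but $w_j\neq0$ then $\tfrac{d}{dt}u_j=-w_j\E_{\bd z}[(\tilde f_t(\bx)-\by)\bx]$ can revive the neuron, and if $w_j=0$ but $u_j\neq0$ then $\tfrac{d}{dt}w_j=-\E_{\bd z}[(\tilde f_t(\bx)-\by)\sigma(\bx^Tu_j)]$ need not vanish, so only $(0,0)$ is frozen; in the balanced initialization of Section~\ref{sec:K0}, where $|w_j|=\|u_j\|$ throughout, the two vanish simultaneously and this is the only way a neuron can degenerate, so the stated hypothesis is exactly the right one.
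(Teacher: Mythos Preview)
Your proposal is correct and follows essentially the same approach as the paper: solve the reduced $(m-1)$-neuron gradient flow from the values at $t_0$, pad the $j$-th neuron with $(w_j,u_j)\equiv(0,0)$, and observe that this padded trajectory satisfies the full system \eqref{eq:training_2}. Your write-up is considerably more detailed than the paper's three-line argument---in particular your explicit verification that both $j$-th equations vanish and your remarks on non-uniqueness and why \emph{both} $w_j$ and $u_j$ must hit zero---but the core idea is identical.
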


\begin{proof}[Proof of Proposition~\ref{lem:no-change-sign}]
  Using $w_j(t_0)$, $u_j(t_0)$, for $j\neq i$, as an initial value for ODE \eqref{eq:training_2} without the $i$-th node. By assumption, we have a solution of this $2\cdot(2m-1)$-dimensional initial value problem. Then padding the solution with $u_i \equiv 0$ and $w_i\equiv 0$, which can be a solution for ODE  \eqref{eq:training_2} with $i$-th neuron included.
\end{proof}

\begin{proof}[Proof of Lemma~\ref{lem:dynamic-kernel}]
	 First we write down the dynamic of prediction $f(\tilde{x})$ at each point $\tilde{x}$ based on Eqn. \eqref{eq:training_2}. For notational simplicity, let $u_j,w_j$ be $u_j(t),w_j(t)$, and let $o^{1}_j(\tilde{x}) = \sigma(u_j^T\tilde{x})$, and with the square loss $\ell(y, f) = \frac{1}{2}(y - f)^2$, we have
 		\begin{align*}		
 			\frac{df_t(\tilde{x})}{dt} &= \sum_{j=1}^m \left[ \frac{dw_j}{dt}o^{1}_j(\tilde{x})  +w_j\frac{do ^{1}_j(\tilde{x})}{dt} \right]\\	
			&= \sum_{j=1}^m \left\{  \mathbf{E}_{\bd{z}}\left[(\by - f_t(\bx))\sigma(\bx^Tu_j)\right] o^{1}_j(\tilde{x}) + w_j \mathbbm{1}_{\tilde{x}^Tu_j\geq0} \tilde{x}^T \mathbf{E}_{\bd{z}}\left[(\by - f_t(\bx))w_j \mathbbm{1}_{\bx^Tu_j\geq0}\bx \right]  \right\} \\
 			&= \sum_{j=1}^m \left\{ \mathbf{E}_{\bd{x}}\left[ \left(f_*(\bd{x})-f_t(\bd{x})\right) \left( \sigma(\tilde{x}^Tu_j)\sigma(\bx^Tu_j) + w^2_j\mathbbm{1}_{\tilde{x}^Tu_j\geq0}\mathbbm{1}_{\bx^Tu_j\geq0}\tilde{x}^T\bx  \right)   \right] \right\} \\		
 			&= \mathbf{E}_{\bd{x}}\Bigg\{ \sum_{j=1}^m \bigg[\sigma(\tilde{x}^Tu_j)\sigma(\bx^Tu_j)  + w^2_j\mathbbm{1}_{\tilde{x}^Tu_j\geq0}\mathbbm{1}_{\bx^Tu_j\geq0}\tilde{x}^T\bx \bigg]   \left(f_*(\bx)-f_t(\bx)\right) \Bigg\} \\		
 			&= \mathbf{E}_{\bd{x}} \left[ K_t(\tilde{x},\bd{x})\Delta_t(\bd{x}) \right]. 	
 		\end{align*}		
 		Therefore, we have		
 		\begin{align}\label{eq:residual}		
 			\frac{d\mathbf{E}_{ \bd{x}}\left[\frac{1}{2}\Delta_t(\bd{x})^2\right]}{dt} &= - \mathbf{E}_{ \bd{x}} \left[ (f_*(\bx) - f_t(\bx)) \frac{d f_t(\bd{x})}{dt} \right] \\
			&= - \mathbf{E}_{ \bd{x}} \left[ \Delta_t(\bx) \mathbb{E}_{\tilde{\bx}} \left[ K_t(\bd{x},\bd{\tilde x}) \Delta(\bd{\tilde x}) \right] \right] \nonumber \\
			&=  -\mathbf{E}_{\bd{x}, \bd{\tilde x}}\left[ \Delta_t(\bd{x})K_t(\bd{x}, \bd{\tilde x})\Delta_t(\bd{\tilde x})\right] \nonumber.
 		\end{align}
\end{proof}

\begin{proof}[Proof of Corollary~\ref{cor:general-loss}]
	The first equality follows from the proof in Lemma~\ref{lem:dynamic-kernel}. Recall the property for strongly convex function
	\begin{align}
		\ell(y_i, f_t(x_i)) - \ell(y_i, f_*(x_i)) \leq \frac{1}{2\alpha} \left[ \frac{\partial \ell(y_i, f_t(x_i))}{\partial f} \right]^2 = \frac{1}{2\alpha} \Delta_t(x_i)^2.
	\end{align}
	Therefore $-\mathbf{E}_{\bd{x}, \bd{\tilde x}}\left[ \Delta_t(\bd{x})K_t(\bd{x}, \bd{\tilde x})\Delta_t(\bd{\tilde x})\right] \leq -\frac{\lambda_t}{n} \sum_{i=1}^n \Delta_t(x_i)^2 \leq - 2\alpha\lambda_t \cdot \widehat{\E} \left[\ell(\by, f_t(\bx)) - \ell(\by, f_*(\bx))\right].$
\end{proof}

\begin{proof}[Proof of Lemma~\ref{lem:K_0}]
	We know
	\begin{align}
		\E_{\bd{u} \sim \pi} \left[ \sigma(\bd{u}^T x)  \sigma(\bd{u}^T \tilde x ) \right] &= 
		\E_{\bd{u} \sim \pi} \tilde x^T \left[ \bd{u} \bd{u}^T \mathbbm{1}_{\bd{u}^T x >0} \mathbbm{1}_{\bd{u}^T \tilde x >0} \right]   x
	\end{align}
	Consider the coordinate system $e_1, e_2, \ldots e_d$ such that $e_1, e_2$ spans the space of $x, \tilde x$, with 	\begin{align}
		x = e_1, \tilde x = \cos\theta \cdot e_1 + \sin\theta \cdot e_2,
	\end{align}
	where $\theta = \arccos(x^T\tilde x)$. Note $\bd{u} = [v_1, v_2, \ldots v_d]$ is still an isotropic Gaussian under this coordinate system.
	The constraint reads
	\begin{align}
		&\mathbbm{1}_{\bd{u}^T x >0} \mathbbm{1}_{\bd{u}^T \tilde x >0}, \\
		\text{equivalent to} \quad & v_1 >0 , ~v_1 \cos\theta + v_2 \sin\theta >0,
	\end{align}
	and one can see that $v_2, \ldots v_d$ integrate out.
	
	Let's focus on the spherical coordinates of $v_1 = r\cos\phi, v_2=r\sin\phi$, then $r^2 \sim \chi^2(2)$ and $\phi \sim U[-\pi, \pi]$. W.l.o.g., we can consider the case when $\theta \in [0, \pi]$.
	\begin{align*}
		&\E_{\bd{u} \sim \pi} \left[ \bd{u} \bd{u}^T \mathbbm{1}_{\bd{u}^T x >0} \mathbbm{1}_{\bd{u}^T \tilde x >0} \right] x\\
		&= \E[r^2] \left(e_1 \cdot  \frac{1}{2\pi}\int_{-\pi}^{\pi} \cos^2\phi \mathbbm{1}_{\phi \in [\theta-\pi/2, \pi/2]} d\phi + e_2 \cdot \frac{1}{2\pi}\int_{-\pi}^{\pi} \cos\phi \sin\phi \mathbbm{1}_{\phi \in [\theta-\pi/2, \pi/2]} d\phi  \right) \\
		& \text{because the above are equivalent to $e_1 \E[v_1^2 \mathbbm{1}_{\bd{u}^T x >0} \mathbbm{1}_{\bd{u}^T \tilde x >0}] + e_2 \E[v_1 v_2 \mathbbm{1}_{\bd{u}^T x >0} \mathbbm{1}_{\bd{u}^T \tilde x >0}]$} \\
		&= 2 \cdot \frac{1}{2\pi} \left[ e_1 \cdot \frac{\pi-\theta}{2} + ( e_1 \cos\theta + e_2\sin\theta) \cdot \frac{\sin\theta}{2}  \right] \\
		& \text{just evaluate $\int_{\theta-\pi/2}^{\pi/2} \cos^2 \phi d\phi$, $\int_{\theta-\pi/2}^{\pi/2} \cos \phi \sin \phi d\phi$} \\
		&= \frac{\pi-\theta}{2\pi} x + \frac{\sin\theta}{2\pi} \tilde x.
	\end{align*}
	Therefore, we get
	\begin{align*}
		&\E_{\bd{u} \sim \pi} x^T \left[ \bd{u} \bd{u}^T \mathbbm{1}_{\bd{u}^T x >0} \mathbbm{1}_{\bd{u}^T \tilde x >0} \right] \tilde x\\
		&= \frac{\pi-\theta}{2\pi} \cos\theta + \frac{\sin\theta}{2\pi}
	\end{align*}
	Similarly, we have
	\begin{align}
		&\E_{\bd{u} \sim\pi} \tilde x^T\left[ \mathbbm{1}_{\bd{u}^Tx\geq0}\mathbbm{1}_{\bd{u}^T\tilde x\geq 0 }\right]x = \frac{\pi-\theta}{2\pi}\cos\theta. 
	\end{align}
	Summing them up, we get the result.
\end{proof}

\begin{proof}[Proof of Corollary~\ref{cor:signed-kernel}]
	Our proof essentially follows the same steps for \eqref{def:gd-kernel}. First, we write down the dynamic of $f_t(x)$,
	\begin{align}
		\frac{df_t(x)}{dt} &=\frac{\int \|\Theta\|\sigma(x^T\Theta) \rho_{t}(d\Theta)}{dt}.
	\end{align}
	Plug-in the training dynamic \eqref{eq:pde_sign_char}, we get
	\begin{align*}
		\frac{df_t(x)}{dt} &= -\int -\nabla_{\Theta}\left[	 \|\Theta\|\sigma(x^T\Theta)\right] \cdot \mathcal{V} (\Theta, \rho_t) | \rho_t |(d\Theta)\\
		&= \int \nabla_{\Theta}\left[ \|\Theta\|\sigma(x^T\Theta) \right]\cdot \| \Theta \| \left\{ \mathbf{E_{\bd{\tilde x}}}[f_*(\bd{\tilde x})\mathbbm{1}_{\bd{\tilde x}^T\Theta \geq 0}\bd{\tilde x}] - \mathbf{E}_{\bd{\tilde x}} \left[\int \left( \|\tilde\Theta\|\sigma(\tilde \bx^T\tilde \Theta)\mathbbm{1}_{\bd{\tilde x}^T\Theta \geq 0}\bd{\tilde x} \right) \rho_t(d\tilde \Theta) \right] \right\} |\rho_{t}|(d\Theta) \nonumber \\
		&=\mathbf{E}_{\bd{\tilde x}} \left\{ \int   \nabla_{\Theta}\left[ \|\Theta\|\sigma(x^T\Theta) \right]\cdot  \| \Theta \|  \left[ \Delta_t(\bd{\tilde x}) \mathbbm{1}_{\bd{\tilde x}^T\Theta\geq0}\bd{\tilde x}\right]|\rho_t|(d\Theta)\right\}\\
		&= \mathbf{E}_{\bd{\tilde x}} \left\{\Delta_t(\bd{\tilde x}) \cdot \int \|\Theta\|^2 \mathbbm{1}_{x^T\Theta\geq0}\mathbbm{1}_{\bd{\tilde x}^T\Theta \geq 0}x^T\bd{\tilde x} + \sigma(x^T\Theta)\sigma(\bd{\tilde x}^T\Theta) |\rho_t|(d \Theta)\right\}.
 	\end{align*}
	Therefore, we have
	\begin{align}
		\frac{d\mathbf{E}_{\bd{x}}\left[ \frac{1}{2}\Delta_t(\bx)^2 \right]}{dt} = -\mathbf{E}_{\bd{x},\bd{\tilde x}}[\Delta_t(\bd{x})K_t(\bd{x},\bd{\tilde x})\Delta_t(\bd{\tilde x})].
	\end{align}
\end{proof}

\subsection{Extensions}

\label{sec:extension-MLP}

In this section, we extend the definition of the dynamic kernel in Section~\ref{sec:kernel} to the multi-layer neural networks case. 
We construct a recursive expression for the kernel defined by the multi-layer perceptron (MLP). Let $\Theta_{i,j}^{l}$, $l=0,\cdots, h-1$ denote the coefficient from the $i$-th node on the $l$-th layer to the $j$-th node on the $(l+1)$-th layer. Let the input (before activation) of the $i$-th node on $l$-th layer be $v_i^l(x) = \sum_{j}\Theta^{l-1}_{j,i}o^{l-1}_j(x)$ and let the output at that node be $o_i^l=\sigma(v_i^l)$, for $ l \notin\{0,h\}$, and $o_i^l = x_i$, for $l=0$. The final output $g(x) = (v^h_1(x),v^h_2(x),\cdots, v^h_{L_h}(x))^T$. Let $L_0 = d$ and $L_i$ is the number of nodes at the $i$-th layer. Denote $K_t^{h}(x,\tilde x;\{\Theta^{l}\}_{l=0,\dots,h})$ the kernel of $h$ layers NN. The training dynamic is still the gradient flow, for all $\Theta$
\[
	\frac{d\Theta}{dt} =- \mathbb{E}_{\bd{z}}\left[\frac{\partial\ell(\by,g(\bx))}{\partial g}\frac{\partial g(\bx)}{\partial \Theta}\right].
\]
\begin{proposition}
	\label{prop:mlp}
For a $(h+1)$-layer NN function denoted by $g(x)$, for simplicity, let 
\begin{align}
	K_t^{h+1}(x, \tilde x) = K_t^{h+1}(x,\tilde x;\{\Theta^{l}\}_{l=0,\dots,h+1}), \\
	K_t^h(z,\tilde z) = K_t^{h}(z,\tilde z;\{\Theta^{l}\}_{l=1,\dots,h+1}).
\end{align}
With gradient flow training process, we have the following recursive representation of the corresponding kernel matrix
\[K_t^{h+1}(x, \tilde x) = K_t^{h}(o^1(x),o^1(\tilde x))  +  \sum_{i=1,j=1}^{L_0,L_1}\frac{\partial g(x)}{\partial \Theta^0_{i,j}}\frac{\partial g(\tilde  x)}{\partial \Theta^0_{i,j}}. \]
Here the kernel matrix is always positive semidefinite.
\end{proposition}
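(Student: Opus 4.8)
The plan is to follow the proof of Lemma~\ref{lem:dynamic-kernel} essentially verbatim, reading the ``corresponding kernel'' $K_t^{h+1}$ as the object that governs the evolution of the network output $g$ under gradient flow, and then to recognize it as a sum, over all scalar weights of the net, of outer products of the output's parameter-gradients; the recursion and positive semidefiniteness will then fall out by bookkeeping. First I would write, by the chain rule together with the training dynamics $\frac{d\Theta}{dt} = -\mathbf{E}_{\bd{z}}\big[\frac{\partial\ell(\by,g(\bx))}{\partial g}\frac{\partial g(\bx)}{\partial\Theta}\big]$ (stated for scalar output $g$; the vector-valued case is identical with outer products in place of products),
\begin{align*}
    \frac{dg(x)}{dt} \;=\; \sum_{l}\sum_{i,j}\frac{\partial g(x)}{\partial\Theta^l_{i,j}}\,\frac{d\Theta^l_{i,j}}{dt} \;=\; -\,\mathbf{E}_{\bd{z}}\Big[K_t^{h+1}(x,\bx)\,\frac{\partial\ell(\by,g(\bx))}{\partial g}\Big],
\end{align*}
where
\begin{align*}
    K_t^{h+1}(x,\tilde x)\;:=\;\sum_{l}\sum_{i,j}\frac{\partial g(x)}{\partial\Theta^l_{i,j}}\,\frac{\partial g(\tilde x)}{\partial\Theta^l_{i,j}}
\end{align*}
and the double sum ranges over every weight of the network. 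Exactly as in Lemma~\ref{lem:dynamic-kernel}, this yields $\frac{d}{dt}\mathbf{E}_{\bx}[\tfrac12\Delta_t(\bx)^2] = -\mathbf{E}_{\bx,\bd{\tilde x}}[\Delta_t(\bx)K_t^{h+1}(\bx,\bd{\tilde x})\Delta_t(\bd{\tilde x})]$, so $K_t^{h+1}$ is the kernel in the statement; for a two-weight-matrix net it reproduces \eqref{def:gd-kernel}, which anchors an induction on the number of layers.

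\textbf{Peeling off layer $0$.} Next I would split the parameter sum defining $K_t^{h+1}$ into the bottom-layer weights $\{\Theta^0_{i,j}\}_{i\le L_0,\,j\le L_1}$ and the remaining weights $\{\Theta^l\}_{l\ge1}$. Because the network is feed-forward, $g$ depends on $x$ only through the first-layer output $o^1(x)$, and for every $l\ge1$ the backpropagated gradient $\partial g(x)/\partial\Theta^l_{i,j}$ equals the corresponding parameter-gradient of the $h$-layer sub-network $z\mapsto g$ (input $z$, weights $\{\Theta^l\}_{l\ge1}$), evaluated at $z=o^1(x)$. Summing the outer products over $l\ge1$ therefore reproduces, by the inductive hypothesis that the $h$-layer kernel has the same sum-of-gradient-outer-products form (base case being the two-layer kernel of Lemma~\ref{lem:dynamic-kernel}), exactly $K_t^h(o^1(x),o^1(\tilde x))$. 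Combined with the isolated layer-$0$ sum this gives the claimed recursion. Positive semidefiniteness is then immediate from the outer-product form: for any finite $\{x_k\}$ and coefficients $\{c_k\}$,
\begin{align*}
    \sum_{k,k'}c_kc_{k'}K_t^{h+1}(x_k,x_{k'})\;=\;\sum_{l}\sum_{i,j}\Big(\sum_k c_k\,\frac{\partial g(x_k)}{\partial\Theta^l_{i,j}}\Big)^{\!2}\;\ge\;0,
\end{align*}
and the same computation shows every kernel occurring in the recursion is PSD, so the identity is internally consistent.

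\textbf{Main obstacle.} There is no deep difficulty here; the one genuine subtlety is the non-differentiability of ReLU. All chain-rule manipulations and the identity $\sigma'(t)=\mathbbm{1}_{t\ge0}$ must be read in the a.e./convention sense already used in Lemma~\ref{lem:dynamic-kernel} and throughout Section~\ref{sec:kernel}, and one should note that the set of inputs $x$ for which some pre-activation $v_i^l(x)$ vanishes is $\mu$-null (when $P_x$ is absolutely continuous), so the expectations in the output dynamics are unaffected. A secondary, purely bookkeeping point will be aligning the layer indexing of the sub-network with the convention of Lemma~\ref{lem:dynamic-kernel} (whose output layer carries no activation) so that the base case of the induction matches; once this is set up, the induction itself is routine.
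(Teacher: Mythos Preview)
Your proposal is correct and matches the paper's proof essentially step for step: both derive $K_t^{h+1}$ as the full sum $\sum_{\text{all }\Theta}\partial_\Theta g(x)\,\partial_\Theta g(\tilde x)$ from the chain rule and the gradient-flow dynamics, then peel off the layer-$0$ weights and invoke induction to identify the remaining sum as $K_t^{h}(o^1(x),o^1(\tilde x))$. The only minor difference is in the PSD argument—the paper proves PSD inductively by exhibiting a feature map for the layer-$0$ term alone, whereas you observe directly that the full kernel is a sum of squares; your version is slightly cleaner but the content is the same.
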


\begin{proof}[Proof of Proposition~\ref{prop:mlp}]
For notational simplicity, let $K_t^{h+1}(x,\tilde x) = K_t^{h+1}(x,\tilde x;\{\Theta^{l}\}_{l=0,\dots,h+1})$, and $$K_t^{h}(z,\tilde z) = K_t^{h}(z,\tilde z;\{\Theta^{l}\}_{l=1,\dots,h+1}).$$ For the proof, we calculate the dynamic of prediction $g(x)$, by elementary calculus, we have
	\begin{align}
		\frac{dg(x)}{dt} &= -\mathbf{E}_{\bx}[f_*(\bx) - g(\bx)] \Big[ \sum_{\text{all }\Theta}\frac{\partial g(x)}{\partial \Theta}\cdot\frac{\partial g(\bx)}{\partial\Theta}\Big].
	\end{align}
	With same calculation for the dynamic of $\Delta_t$ as in \eqref{eq:residual}, we get
	\begin{align}
		K_t^{h+1}(x,x') =  \sum_{\Theta \in \Theta^0}\frac{\partial g(x)}{\partial \Theta}\cdot\frac{\partial g(\tilde x)}{\partial \Theta} + \sum_{\text{other }\Theta}\frac{\partial g(x)}{\partial \Theta}\cdot\frac{\partial g(\tilde x)}{\partial \Theta}.
	\end{align}
	By induction, we get
	\begin{align}
	K_t^{h+1}(x,\tilde x) = K_t^{h}(o^1(x),o^1(\tilde x)) +  \sum_{i=1,j=1}^{L_0,L_1}\frac{\partial g(x)}{\partial \Theta^0_{i,j}}\frac{\partial g(\tilde x)}{\partial \Theta^0_{i,j}}.
	\end{align}
Now, we prove the positive semi-definiteness of the kernel. By induction, we only need to prove that the second term above is non-negative. We construct a canonical mapping $\phi_{h+1}(x): = v(x), \mathbb{R}^d \rightarrow \mathbb{R}^{L_0 \times L_1}$, whereas the $i,j$-th coordinate $v(x)_{i,j} = \frac{\partial g(x)}{\partial \Theta^0_{i,j}}$. Then the second term can be seen as a inner product $\langle \phi_{h+1}(x),\phi_{h+1}(\tilde x)\rangle$, which implies the non-negativity. 
\end{proof}
\end{document}